\theoremstyle{plain}
\newtheorem{theorem}{Theorem}[section]
\newtheorem{lemma}[theorem]{Lemma}
\theoremstyle{definition}
\theoremstyle{remark}
\def\eqref#1{equation~\ref{#1}}
\def\Eqref#1{Equation~\ref{#1}}
\def\1{\bm{1}}
\def\vr{{\bm{r}}}
\def\vv{{\bm{v}}}
\def\vw{{\bm{w}}}
\def\vx{{\bm{x}}}
\def\vy{{\bm{y}}}
\def\mA{{\bm{A}}}
\def\mE{{\bm{E}}}
\def\mI{{\bm{I}}}
\def\mK{{\bm{K}}}
\def\mS{{\bm{S}}}
\def\mX{{\bm{X}}}
\DeclareMathAlphabet{\mathsfit}{\encodingdefault}{\sfdefault}{m}{sl}
\SetMathAlphabet{\mathsfit}{bold}{\encodingdefault}{\sfdefault}{bx}{n}
\icmltitlerunning{Inductive Gradient Adjustment
for Spectral Bias
in Implicit Neural Representations}
\begin{document}

\twocolumn[
\icmltitle{Inductive Gradient Adjustment \\
for Spectral Bias in Implicit Neural Representations}



\icmlsetsymbol{equal}{*}

\begin{icmlauthorlist}
\icmlauthor{Kexuan Shi}{yyy}
\icmlauthor{Hai Chen}{yyy2,yyy3}
\icmlauthor{Leheng Zhang}{yyy}
\icmlauthor{Shuhang Gu}{yyy}
\end{icmlauthorlist}

\icmlaffiliation{yyy}{School of Computer Science and Engineering, University of Electronic Science and Technology of China, Chengdu, China}
\icmlaffiliation{yyy2}{Sun Yat-sen University, Guangzhou, China}
\icmlaffiliation{yyy3}{North China Institute of Computer Systems Engineering}

\icmlcorrespondingauthor{Shuhang Gu}{shuhanggu@gmail.com}

\icmlkeywords{Machine Learning, ICML}

\vskip 0.3in
]



\printAffiliationsAndNotice{}  

\begin{abstract}
Implicit Neural Representations (INRs), as a versatile representation paradigm, have achieved success in various computer vision tasks. Due to the spectral bias of the vanilla multi-layer perceptrons (MLPs), existing methods focus on designing MLPs with sophisticated architectures or repurposing
training techniques for highly accurate INRs. In this paper, we delve into the linear dynamics model of MLPs and theoretically identify the empirical Neural Tangent Kernel (eNTK) matrix as a reliable link between spectral bias and training dynamics. Based on this insight, we propose a practical \textbf{I}nductive \textbf{G}radient \textbf{A}djustment (\textbf{IGA}) method,  which could purposefully improve the spectral bias via inductive generalization of eNTK-based gradient transformation matrix. Theoretical and
empirical analyses validate impacts of IGA on spectral bias.
Further, we evaluate our method on different INRs tasks with various INR architectures and compare to existing training techniques. The superior and consistent improvements clearly validate the advantage of our IGA. Armed with our gradient adjustment method, better INRs with more enhanced texture details and sharpened edges can be learned from data by tailored impacts on spectral bias. The codes are available at: \href{https://github.com/LabShuHangGU/IGA-INR}{https://github.com/LabShuHangGU/IGA-INR}.
\end{abstract}

\section{Introduction}
\label{sec:intro}

The main idea of implicit neural representations (INRs) is using neural networks such as multi-layer perceptrons (MLPs) to parameterize discrete signals in an implicit and continuous manner. 
Benefiting from the continuity and implicit nature, INRs have gained great attention as a versatile signal representation paradigm \cite{luo2025continuousrepresentationmethodstheories} and achieved state-of-the-art performance across various computer vision tasks such as signal representation \citep{sitzmann2020implicit, liang2022coordx, saragadam2023wire,kazerouni2024incode,zhang2025evosefficientimplicitneural}, 3D shape reconstruction \citep{zhu2024finer++,zhang2024nonparametricteachingimplicitneural}, and novel view synthesis~\cite{mildenhall2021nerf, muller2022instant}.

However, obtaining high-precision INRs is non-trivial. MLPs with ReLU activations often fail to represent high-frequency details. 
Such tendency of MLPs to represent simple patterns of target function is referred to as spectral bias \citep{xu2018understanding, rahaman2019spectral}. 
To improve the performance of INRs, great efforts have been made to alleviate or overcome this bias of MLPs. 
These efforts have primarily focused on how MLPs are constructed, such as complex input embeddings \citep{tancik2020fourier,muller2022instant,xie2023diner} and 
sophisticated activation functions~\cite{sitzmann2020implicit,ramasinghe2022beyond, saragadam2023wire, zhu2024finer++,shenouda2024relus}. 
%
Recently, two studies find that training dynamics adjustments based on traditional training techniques, i.e., Fourier reparameterized training (FR)~\cite{shi2024improved} and batch normalization (BN)~\cite{cai2024batch}, can  overcome the spectral bias. Compared to the above approaches emphasizing architectural refinements, these methods could offer considerable representation performance gains while barely increasing the complexity of inference structures.
%

Despite these works serendipitously  revealing the great potential of training dynamics in alleviating spectral bias for better INRs, the underlying mechanism driving their improvements remains unclear.
Moreover, there is no clear guidance on the choice of Fourier bases matrix for reparameterization or batch normalization layer to confront spectral bias with varying degrees in various INR tasks. 
%
%
Lack of reliable insight and guidance results in unstable and suboptimal improvements across different scenarios and INR models \cite{essakine2024we}, hindering their broader applications.

Notably, a stream of works \citep{ronen2019convergence,tancik2020fourier, shi2024improved,cai2024batch,zhu2024finer++} adopts the spectrum of the Neural Tangent Kernel (NTK) matrix \cite{jacot2018neural} to analyze spectral bias.
Taking a step further, \citet{geifman2023controlling} have made a first attempt to modify the spectrum of the NTK matrix to improve convergence speeds, but limited to the toy MLP and restricted synthetic data. 
These prior works imply the potential of the NTK matrix to guide adjustments in training dynamics for better INRs.
However, the NTK matrix is known for the general lack of an analytical expression  and the quadratic growth in memory consumption as the data size increases \citep{xu2021knas,novak2022fast,mohamadi2023fast}, especially in INRs, which involve various activation functions and dense predictions.
Thus, how to effectively adjust training dynamics of MLPs to purposefully overcome spectral bias in INRs remains an open and valuable problem.

In this paper, we delve into the linear dynamics model of MLPs and propose an effective training dynamics adjustment strategy guided by theoretical derivation to purposefully alleviate the spectral bias issue.
Our strategy allows for tailored impacts on spectral bias, achieving more stable improvements. 
Specifically, these impacts on spectral bias are tailored by purposefully adjusting the spectrum of the NTK matrix, which closely connects the spectral bias with training dynamics. 
Given that the NTK matrix is not available in most cases, we theoretically justify the empirical NTK (eNTK) matrix \cite{novak2022fast} as a tractable surrogate. Empirical evidence corroborates these theoretical results.
Based on this insight, we further propose an inductive gradient adjustment (IGA) method, which could purposefully improve the spectral bias via inductive generalization of the eNTK-based gradient transformation matrix with millions of data points.
We give both theoretical and empirical analysis of our method, demonstrating how it tailors impacts on spectral bias.
Besides, our IGA method can work with previous structural improvement methods such as Positional Encoding \citep{tancik2020fourier} and periodic activations \citep{sitzmann2020implicit}.
We validate our method in various vision applications of INRs and compare it to previous training dynamics adjustment methods. Experimental results show the superiority of our IGA method. 
Our contributions are summarized as follows:
\begin{itemize}
    \item We connect spectral bias with linear dynamics model of MLPs and derive a training dynamics adjustment strategy guided by a theoretical standpoint to purposefully improve the spectral bias.
    
    \item We propose a practical inductive gradient adjustment method which could purposefully mitigate spectral bias of MLPs even with millions of data points. Theoretical and empirical analyses show how our method tailors impacts on spectral bias.
    
    \item We present comprehensive experimental analyses on synthetic data and a range of implicit neural representation tasks. Compared to existing training dynamics approaches, our method enables tailored improvements in mitigating the spectral bias of standard MLPs, resulting in implicit neural representations with enhanced high-frequency details.    
\end{itemize}

\section{Related Work}

\textbf{Implicit neural representations.} Implicit neural representations (INRs) representing the discrete signal as an implicit continuous function by MLPs have gained lots of attention. Comparing to traditional grid-based methods, INRs offer remarkable accuracy and memory efficiency across various representation tasks such as 1D audio representation \citep{kim2022learning}, 2D image representation \citep{klocek2019hypernetwork,strumpler2022implicit}, 3D shape representation \citep{park2019deepsdf,martel2021acorn}, novel view synthesis \citep{mildenhall2021nerf, saratchandran2024activation}. However, vanilla MLPs with ReLU activations fail to represent complex signals. Therefore, various modifications have been studied. 
One category focuses on input embedding, such as using Positional Encoding (PE) \cite{mildenhall2021nerf} or learned features \cite{takikawa2021neural,martel2021acorn,xie2023diner}.
A different category of modifications concentrates on better activations; for instance, periodic activations \citep{sitzmann2020implicit,fathony2020multiplicative,zhu2024finer++} and Gabor wavelet activations \cite{saragadam2023wire} provide more accurate and robust representations.
Considering a broader framework, these works can be summarized as efforts to enhance the representational capacity of INRs or MLPs \cite{yuce2022structured}. Recently, another category, distinct from those previously discussed, focuses on the training process of MLPs. \citet{shi2024improved} find that learning parameters in the Fourier domain, i.e., Fourier reparameterized training (FR), can improve approximation accuracy of INRs without altering the inference structure. \citet{cai2024batch} propose that the classic batch normalization layer (BN) can also improve the performance of INRs.

\noindent\textbf{Spectral bias.}
Spectral bias refers to the learning bias of MLPs to prioritize learning simple patterns or low-frequency components of the target function.
Great efforts have been made to dissect this bias. \citet{rahaman2019spectral} find that spectral bias varies with the width, depth and input manifold of MLPs. \citet{xu2018understanding} attribute this bias of MLPs with Tanh activation to the uneven distribution of gradients in the frequency domain. From the perspective of linear dynamics model of MLPs, spectral bias is induced by the uneven spectrum of the corresponding Neural Tangent Kernel (NTK) matrix \citep{jacot2018neural,arora2019fine,ronen2019convergence}. Inspired by this theoretical result, a series of works \cite{tancik2020fourier,shi2024improved,cai2024batch} try to elucidate spectral bias by observing spectral distribution of the NTK matrix and find that PE, FR and BN help to attain more even  spectrum. 
In this paper, we adopt the insight of attaining more even spectrum  of the NTK matrix. For the first time, we show that modifying the eNTK matrix spectrum can have the similar impacts on spectral bias and propose a practical gradient adjustment method, which improves spectral bias of INRs via inductive generalization of eNTK-based gradient transformation matrix.

\section{Method}

In this section, we first review the linear dynamics model of MLPs, indicating the potential of adjusting the NTK spectrum to overcome spectral bias in INRs. Then we analyze the NTK-based adjustment and show that the NTK matrix's intractability renders it impractical. By theoretical analysis, we justify the eNTK matrix as a tractable proxy, but it succumbs to dimensionality challenges as data size grows. Thus, we propose a practical gradient adjustment method that improves spectral bias via inductive generalization of eNTK-based gradient transformation matrix.

\begin{figure*}[!]
\begin{center}
\includegraphics[width=1\textwidth]{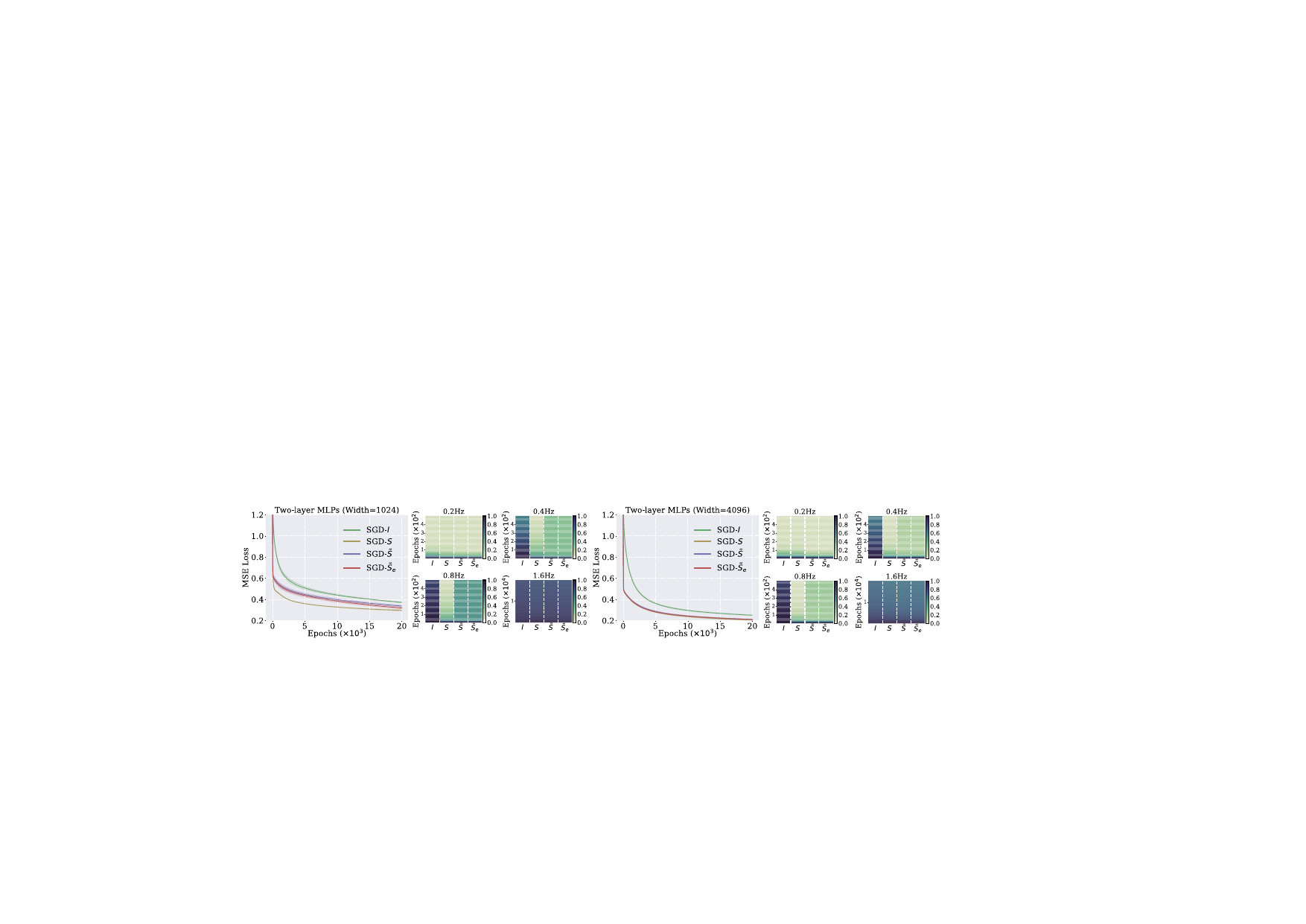}
\end{center}
\vspace{-2em}
\caption{Evolution of approximation error with training iterations in time and Fourier domain. Line plots visualize MSE loss curves of MLPs with $1024$ and $4096$ neurons optimized by four gradient adjustments, i.e., $\mI,\mS,\tilde{\mS},\tilde{\mS}_e$. The shaded area of lines indicates training fluctuations. Heatmaps show the relative error $\Delta_k$ in Eq. \ref{delta_k} on four frequency bands. Details are in the experiment 1 of Sec. \ref{empirical analysiss}.}
\label{exp1}
\vspace{-1em}
\end{figure*}

\subsection{Connecting spectral bias with training dynamics}

Without loss of generality and for simplicity, we consider a scalar discrete signal. We define the vector of signal values as $\vy=(y_1,\dots, y_N)\in \mathbb{R}^N$, and the corresponding coordinate vectors $(\vx_1,\dots, \vx_N)^\top$, denoted by $\mX \in \mathbb{R}^{N \times d}$. The INR of this signal is parameterized as $f(\mX; \Theta)$, where $f(\cdot ; \Theta)$ denotes an MLP.
Parameters $\Theta \in \mathbb{R}^q$ are optimized with respect to $\frac{1}{2}\sum_{i=1}^N(f(\vx_i; \Theta)-y_i)^2$ via gradient descent. The residual $\left( f(\vx_i; \Theta_t) - y_i \right)_{i=1}^N$ at time step $t$ is denoted as $\vr_t$.
Training dynamics of $\vr_t$  can be approximated by the linear dynamics model: $\vr_t=(\mI -\eta \mK) \vr_{t-1}$ 
under the conditions of wide networks and small learning rates $\eta$ \citep{du2018gradient,arora2019fine,lee2019wide}. $\mK$ is the NTK matrix defined by $\mathbb{E}_{\Theta_0} [\nabla_\Theta f(\mX; \Theta_0)^\top \nabla_\Theta f(\mX; \Theta_0) ]$, where $\nabla_\Theta f(\mX; \Theta_0) \in \mathbb{R}^{q \times N}$.
With the eigenvalue decomposition of $\mK=\sum_{i=1}^N \lambda_i \vv_i \vv_i^\top$ and recurrence unfolding of the linear dynamics model, training dynamics of $\vr_t$ are characterized as follows \citep{arora2019fine}:
\begin{equation}
||\mathbf{r}_t||_2=\sqrt{\sum\nolimits_{i=1}^N (1-\eta \lambda_i)^{2t}(\vv_i^{\top}\vy)^2}.
    \label{eigen decomposition}
\end{equation}
Eq. \ref{eigen decomposition} shows that convergence rate of $f(\vx, \Theta)$ at the projection direction $\vv_i^{\top}$ with larger $\lambda_i$ is faster. For vanilla MLPs, projection directions related to high frequencies are consistently assigned to small eigenvalues, while those related to low frequencies correspond to larger eigenvalues \citep{ronen2019convergence,bietti2019inductive,heckel2020compressive}. Such an uneven spectrum of $\mK$ leads to extremely slow convergence rates to high frequency components of the signal,
leading to a reduction in average approximation accuracy (e.g., MSE loss) as well as frequency-aware metrics (e.g., SSIM, MS-SSIM~\cite{1292216}) in INRs.
Following this insight, recent works \citep{tancik2020fourier,bai2023physics,shi2024improved,cai2024batch} suggest that MLPs with a more even spectrum of $\mK$ are less affected by spectral bias. 


\subsection{Inductive Gradient Adjustment}
\label{iga strategy}
As previously discussed, such an uneven spectrum of $\mK$ leads to slow convergence of high frequency components. 
Naturally, adjusting the evenness of the spectrum of $\mK$ is expected to tailor the improvement of spectral bias. Following this intuition, we focus on the adjustment of eigenvalues of $\mK$, i.e., $\mK$-based adjustment. 
Inspired by the theoretical insights of \citet{geifman2023controlling}, this can be achieved by multiplying a transformation matrix  $\mS$ as follows\footnote{This matrix is also known as the pre-conditioned matrix in the field of optimization.}:
\vspace{-0.5em}
\begin{equation}
\vspace{-0.5em}
  \Theta_{t+1}=\Theta_{t}-\eta \nabla_\Theta f(\mX; \Theta) \mS \vr_t, 
  \label{vanilla method}
\end{equation}
where $\mS =\sum_{i=1}^N (g_i(\lambda_i)/\lambda_i) \vv_i \vv_i^\top$; $\{g_i\}_{i=1}^N$ denote the potential eigenvalue transformations. As driven by the training dynamics of Eq. \ref{vanilla method}, the spectrum of the modified kernel matrix $\mK$ can be theoretically approximated as $\{g_i(\lambda_i)\}_{i=1}^N$. 

Although Eq. \ref{vanilla method} offers a theoretical approach to adjust impacts of training dynamics on spectral bias, Eq. \ref{vanilla method} is almost infeasible in practical INRs tasks. The first challenge is that an analytical expression for $\mK$ is difficult to derive \citep{xu2021knas,novak2022fast,wang2023ntk,shi2024train}. Specifically, the analytical expression $\mathbb{E}_{\Theta_0} [\nabla_\Theta f(\mX; \Theta_0)^\top \nabla_\Theta f(\mX; \Theta_0) ]$ of $\mK$
becomes intractable as the depth of the network increases. 
The second challenge is that the size of $\mK$ grows quadratically with the number of data points \citep{mohamadi2023fast}.
Considering the Kodak image fitting task \cite{saragadam2023wire,shi2024improved}, it requires decomposing a matrix on the order of $10^{11}$ entries, which occupies over $8192$GiB (gibibytes) of memory if stored in double precision.

To overcome the first challenge, numerous studies \citep{xu2021knas,novak2022fast,mohamadi2023fast}
have studied the eNTK matrix $\tilde{\mK}$ defined by $\nabla_{\Theta_t} f(\mX; \Theta_t)^\top \nabla_{\Theta_t} f(\mX; \Theta_t)$ and 
\citet{geifman2023controlling} show that replacing $\mK$ with $\tilde{\mK}$ ($\tilde{\mK}$-based) in Eq. \ref{vanilla method}  can accelerate convergence on toy settings.
%
However, whether $\tilde{\mK}$-based adjustment has the similar impacts on spectral bias as in Eq. \ref{vanilla method} and remains consistently effective in general settings, is an open question.
To justify feasibility, we prove that eigenvectors of $\tilde{\mK}$ as well as eigenvalues converge to the corresponding parts of $\mK$ as the network width increases. Namely, impacts on spectral bias by $\tilde{\mK}$-based adjustment can be approximately equivalent to that of $\mK$-based adjustment. The detailed theoretical analysis is presented in our Theorem \ref{theorem1}. Then, we empirically validate the aforementioned equivalence and demonstrate that $\tilde{\mK}$-based adjustment can tailor the improvement of spectral bias on more general settings in Sec. \ref{empirical analysiss}. The formula of $\tilde{\mK}$-based adjustment is as follows:
\vspace{-0.5em}
\begin{equation}
\vspace{-0.5em}
    \Theta_{t+1} =\Theta_t-\eta \nabla_{\Theta_t} f(\mX; \Theta_t) \tilde{\mS} \vr_t, \label{entk_update} 
\end{equation}
where $\tilde{\mK} =\sum_{i=1}^N\tilde{\lambda}_i\tilde{\vv}_i \tilde{\vv}_i^\top$; $\tilde{\mS}= \sum_{i=1}^N (g_i(\tilde{\lambda}_i)/\tilde{\lambda}_i)\tilde{\vv}_i \tilde{\vv}_i^\top$. 

Although adjustment based on $\tilde{\mK}$ instead of $\mK$ achieves similar effects to $\mK$ while avoiding intractability,
it still encounters the curse of dimensionality as the data size $N$ increases. 
One common approach is to estimate $\tilde{\mK}$ using sampled data, which is efficient for tasks where $\tilde{\mK}$ indirectly impacts training without affecting gradients, such as neural architecture search \cite{park2020towards} or adaptive module \cite{shi2024train}.
However, how to effectively utilize $\tilde{\mK}$ from sampled data to directly adjust the training dynamics on the population data remains a problem.
In Theorem \ref{theorem 2}, we show that training dynamics of $\vr_t$ could be estimated via inductive generalization of the linear dynamics model of sampled data and analyzable error. Inspired by this theoretical foundation, we propose to inductively generalize the gradient adjustment based on sampled data to the gradients of the population data.
%
Specifically, considering an input set $\mX=\{\vx_i\}_{i=1}^N$  of $N$ coordinates in $\mathbb{R}^d$ as the population data, $\mX$ can be divided into $n$ groups $(N=np)$, where the $j$-th group $\mX^j$ and corresponding labels are $\{({\vx^{j}_{i}},y^j_{i})\}_{i=1}^p$. Then we sample one data point from each group to form the sampled set $\mX_{e}$, where $|\mX_{e}|=n \ll N$.
The eNTK matrix on $\mX_{e}$ is defined by $\tilde{\mK}_e=\nabla_{\Theta_t} f(\mX_{e}; \Theta_t)^{\top}\nabla_{\Theta_t} f(\mX_{e}; \Theta_t)$. We construct the corresponding transformation matrix $\tilde{\mS}_e$. 
%
%
Then inductive adjustments $\tilde{\mS}_e$ are generalized to gradients of the population data $\mX$ as follows:
\begin{equation}
\Theta_{t+1}=\Theta_t-\eta \sum\nolimits_{i=1}^p \nabla_{\Theta_t}f(\mX_{i},\Theta_t) \tilde{\mS}_e \vr^i_t,\label{iga update}
\end{equation}
where $\vr^i_t$ is the residual of $\mX_i=\{({\vx^j_i},y^j_i)\}_{j=1}^n$; $\tilde{\mS}_e= \sum_{i=1}^n (g_i(\tilde{\tilde{\lambda}}_i)/\tilde{\tilde{\lambda}}_i) \tilde{\tilde{\vv_i}} \tilde{\tilde{\vv_i}}^\top$; $\tilde{\mK}_e=\sum_{i=1}^n \tilde{\tilde{\lambda}}_i\tilde{\tilde{\vv_i}} \tilde{\tilde{\vv_i}}^\top$.
We introduce the implementation details of Eq. \ref{iga update} in Sec. \ref{Implementation} and summarize the IGA method in the form of an algorithm block \ref{algorithm_block}.

\begin{algorithm}[t]
\caption{Inductive Gradient Adjustment (IGA)}
\KwIn{Mini-batch $X_t = \{(x_i, y_i)\}_{i=1}^N$; model $f(\cdot, \Theta_t)$}
\KwOut{Updated parameters $\Theta_{t+1}$}

// \textbf{Sample a subset} \;

Sample $X_e \subset X_t$ based on the strategy in Sec.~\ref{Implementation} \;

// \textbf{Compute gradients and empirical kernel}

Compute $G_e = \nabla_{\Theta_t} f(X_e; \Theta_t)$ and then $\tilde{K}_e = G_e^\top G_e$ \;

// \textbf{Construct the transformation matrix} \;

Compute $\tilde{S}_e$ based on $\tilde{K}_e$ using the method in Sec.~\ref{Implementation} \;

// \textbf{Generalize the adjustment to the full mini-batch} \;

Adjusted gradient: $\tilde{g}_t=\sum\nolimits_{i=1}^p \nabla_{\Theta_t}f(\mX_{i},\Theta_t) \tilde{\mS}_e \vr^i_t$; update the parameters by: $\Theta_{t+1} \leftarrow \Theta_t - \eta \cdot \tilde{g}_t$ \;

\Return $\Theta_{t+1}$
\label{algorithm_block}
\end{algorithm}

As shown in \eqref{iga update}, all $\vr^i_t$ are linearly transformed by $\tilde{\mS}_e$. Please note that for each vector \( \vr^i_t \) (where \( i = 1, \dots, p \)), its \( j \)-th element is scaled by the \( j \)-th column of the matrix \( \tilde{\mS}_e \). And the \( j \)-th elements of these vectors \( \vr^i_t \) all belong to the group \( \mX^j \). Therefore, our inductive gradient adjustments are generalized from one sampled data point of $\mX^j$ to the remaining $(p-1)$ data points of $\mX^j$, for all $j=1,\dots,n$. 
\vspace{-0.5em}
\subsection{Theoretical Analysis}
\label{theoretical analysis}
\vspace{-0.5em}
In this subsection, we provide theoretical analysis to justify the feasibility of our IGA. In Theorem \ref{theorem1}, we prove that adjustments based on eNTK matrix $\tilde{\mK}$ asymptotically converge to those based on NTK matrix $\mK$ as the network width increases. Then we delve into the estimate of training dynamics via inductive generalization of linear dynamics model of sampled data in Theorem \ref{theorem 2}.

\begin{figure*}
\begin{center}
\vspace{-0.5em}
\includegraphics[width=1\textwidth]{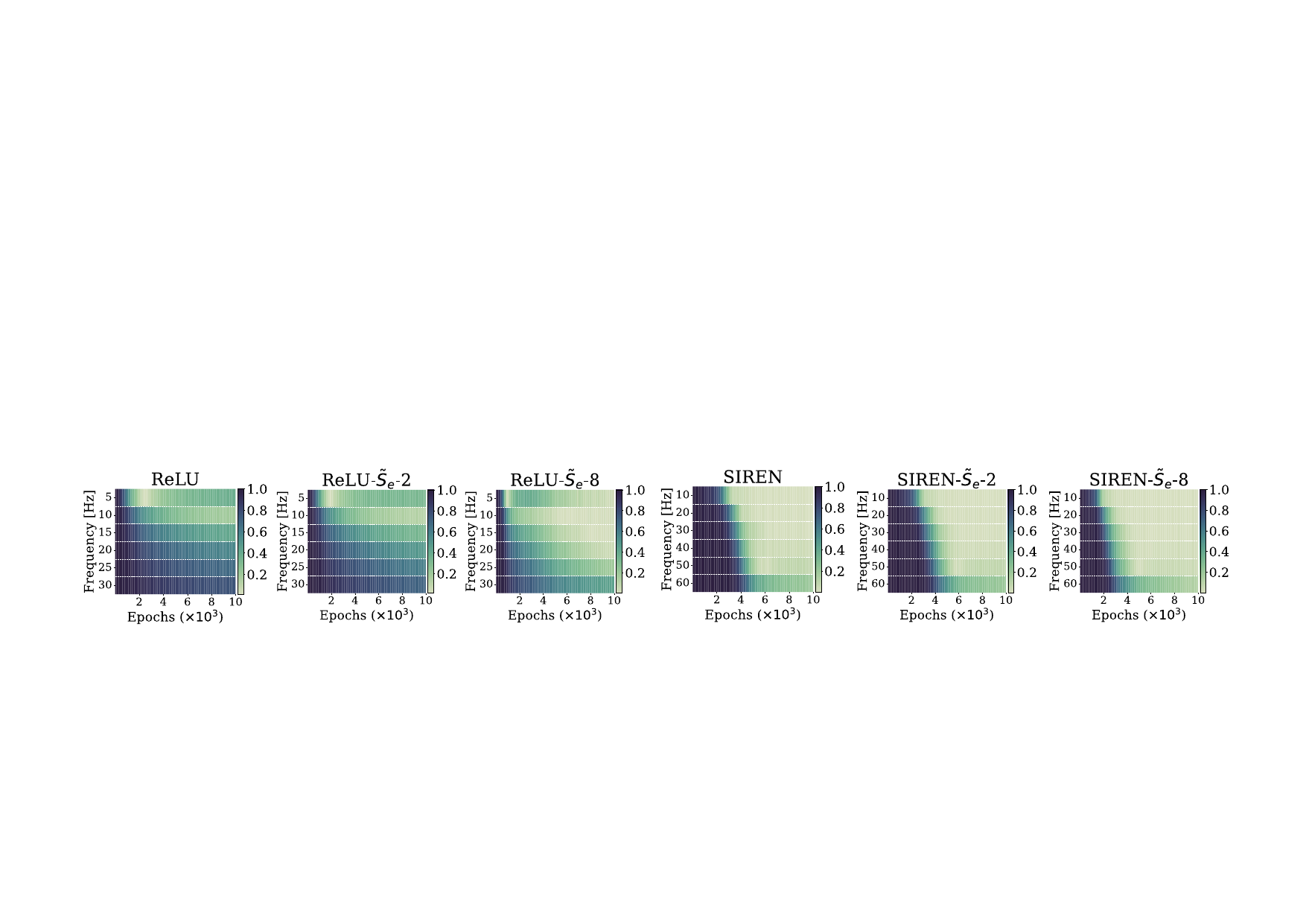}
\end{center}
\vspace{-1.5em}
\caption{Progressively amplified impacts on spectral bias of ReLU and SIREN by increasing the number of balanced eigenvalues via $\tilde{\mS}_e$ when $p=8$, i.e., IGA. ReLU denotes the MLP with ReLU optimized via vanilla gradients; ReLU-$\tilde{\mS}_e$-2 denotes the ReLU optimized via gradients adjusted by $\tilde{\mS}_e$ with $2$ balanced eigenvalues. Details and more results are in the experiment 2 of Sec. \ref{empirical analysiss} and Appendix \ref{simple function appendix}.}
\label{exp1.2_bias}
\end{figure*}

\begin{figure*}
\vspace{-1em}
\begin{center}
\includegraphics[width=1\textwidth]{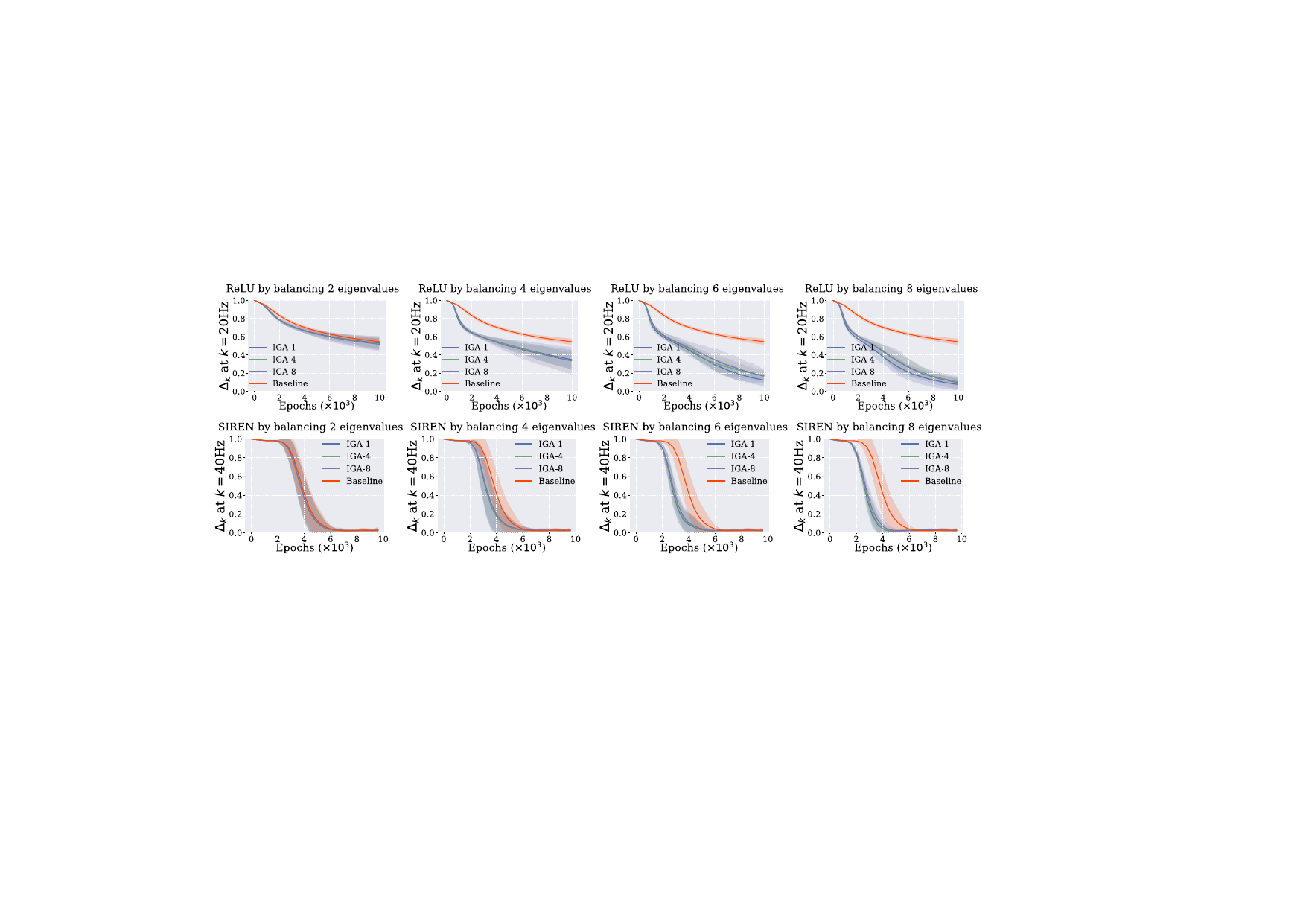}
\end{center}
\vspace{-1.5em}
\caption{Relative error $\Delta_k$ curves of ReLU at 20Hz and SIREN at 40Hz with varying balanced eigenvalues and group size $p$. IGA-1 denotes that $p=1$, i.e., $\tilde{\mK}$-based gradient adjustment; Baseline denotes vanilla gradients. The shaded area of lines indicates training fluctuations, represented by twice the standard deviation. Details and more results are in the experiment 2 of Sec. \ref{empirical analysiss} and Appendix \ref{simple function appendix}.}
\label{exp1.2_trend}
\vspace{-1.5em}
\end{figure*}

In specific, we build upon previous training dynamics works \citep{du2018gradient, arora2019fine,lee2019wide,geifman2023controlling} and analyze a two-layer network $f(\vx; \Theta)$ with width $m$.
Without loss of generality, we assume the training set $\{{\vx_i},y_i\}_{i=1}^N$ and adopt the notations in Eq. \ref{vanilla method}-\ref{iga update}. Detailed analysis settings are in Appendix \ref{theoretical proofs}. Based on these settings, the following Theorem \ref{theorem1} show that $\tilde{\mK}$-based adjustments achieves similar effects to $\mK$-based adjustments.

\begin{theorem}(informal)
Assuming that $\vv_i^{\top} \tilde{\vv}_i>0$ for $i \in [N]$. Let $\{g_{i}(x)\}_{i=1}^N$ be a set of Lipschitz continuous functions, with learning rate $\eta <\min\{(\max(g_i(\lambda))+\min(g_i(\lambda)))^{-1},(\max(g_i(\tilde{\lambda}))+\min(g_i(\tilde{\lambda})))^{-1}\}$, for all $\epsilon >0$, there always exists $M>0$, such that $m>M$, for $i\in [N]$, we have: $|g_i(\lambda_i)-g_i(\tilde{\lambda_i})|<\epsilon_1, \|\vv_i-\tilde{\vv}_i\|<\epsilon_2$, thus
\begin{align}
    | (1-\eta g_i(\tilde{\lambda_i}))^2 (\tilde{\vv_i}^\top \vr_t)^2 - (1-\eta g_i(\lambda_i))^2 (\vv_i^\top \vr_t)^2|
        <\epsilon_3, \notag
\end{align}
    where $\epsilon_1, \epsilon_2, \epsilon_3$ and $\epsilon$ are of the same order as $\epsilon \rightarrow 0$. \label{theorem1}
\end{theorem}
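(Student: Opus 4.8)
The plan is to reduce the claimed bound on the three-term difference to two convergence facts about the eNTK: (i) the eigenvalues $\tilde\lambda_i$ converge to $\lambda_i$, and (ii) the eigenvectors $\tilde\vv_i$ converge to $\vv_i$ (with the sign pinned down by the hypothesis $\vv_i^\top\tilde\vv_i>0$), both as the width $m\to\infty$. Fact (i) is the classical eNTK$\to$NTK convergence of \citet{jacot2018neural,arora2019fine,lee2019wide} specialized to the two-layer network in Appendix \ref{theoretical proofs}; fact (ii) follows from (i) together with a Davis--Kahan / $\sin\Theta$-type perturbation argument for the eigenprojections, using that the spectral gaps of $\mK$ are positive (or, if eigenvalues collide, working with the appropriate eigenspace). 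From these, $|g_i(\lambda_i)-g_i(\tilde\lambda_i)|<\epsilon_1$ is immediate by Lipschitz continuity of $g_i$, giving the first displayed bound; and $\|\vv_i-\tilde\vv_i\|<\epsilon_2$ is the second. Both $\epsilon_1,\epsilon_2$ are $O(\epsilon)$ because they are linear (Lipschitz) images of the underlying $O(\epsilon)$ eigen-perturbation.

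For the main inequality, I would write the target quantity as a telescoping difference and bound each piece. Let $a=1-\eta g_i(\lambda_i)$, $\tilde a=1-\eta g_i(\tilde\lambda_i)$, $u=\vv_i^\top\vr_t$, $\tilde u=\tilde\vv_i^\top\vr_t$. Then
\begin{align}
|\tilde a^2\tilde u^2 - a^2 u^2| \le |\tilde a^2-a^2|\,\tilde u^2 + a^2\,|\tilde u^2-u^2|. \notag
\end{align}
For the first term, $|\tilde a^2-a^2| = |\tilde a-a|\,|\tilde a+a| = \eta|g_i(\lambda_i)-g_i(\tilde\lambda_i)|\cdot|\tilde a+a| \le \eta\epsilon_1\cdot C$, where $C$ bounds $|\tilde a+a|$; the learning-rate hypothesis $\eta<(\max g_i+\min g_i)^{-1}$ is exactly what keeps $|a|,|\tilde a|$ (hence $C$) bounded by a constant independent of $m$, and similarly $\tilde u^2\le\|\vr_t\|^2$ is bounded since $\vr_t$ is the fixed residual. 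For the second term, $|\tilde u^2-u^2|=|\tilde u-u|\,|\tilde u+u|=|(\tilde\vv_i-\vv_i)^\top\vr_t|\cdot|\tilde u+u|\le \epsilon_2\|\vr_t\|\cdot 2\|\vr_t\|$ by Cauchy--Schwarz. Collecting, the whole expression is bounded by a constant times $\max(\epsilon_1,\epsilon_2)=O(\epsilon)$, so setting $\epsilon_3$ to this bound proves the claim, and $\epsilon_3$ is of the same order as $\epsilon$.

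A couple of technical points I would be careful about. The residual $\vr_t$ at a fixed step $t$ depends on $m$ through the trajectory; I would either fix $t$ and use that $\|\vr_t\|\le\|\vr_0\|=\|\vy\|$ (true for both the $\mK$- and $\tilde\mK$-driven dynamics once $\eta$ is small enough that the iteration is a contraction in the relevant norm — again guaranteed by the learning-rate condition), or state the theorem for the comparison at equal $\vr_t$, as the informal statement seems to intend. The constants $C,\|\vr_t\|$ must be shown $m$-independent so that ``$\epsilon_3=O(\epsilon)$'' is meaningful; this is where the $\eta$-bound does real work rather than being cosmetic.

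\textbf{Main obstacle.} The genuinely delicate step is fact (ii), the eigenvector convergence, because it is not implied by eigenvalue convergence alone: one needs a lower bound on the eigenvalue gaps of the limiting NTK $\mK$ (or a careful treatment of degenerate/near-degenerate eigenspaces), plus control of the operator-norm perturbation $\|\tilde\mK-\mK\|$ at finite width, to invoke a Davis--Kahan bound of the form $\|\vv_i-\tilde\vv_i\|\lesssim \|\tilde\mK-\mK\|/\mathrm{gap}_i$. Propagating the finite-width concentration rate for $\|\tilde\mK-\mK\|$ through this gap-dependent bound, and ensuring the hypothesis $\vv_i^\top\tilde\vv_i>0$ correctly removes the sign ambiguity so the bound is on $\|\vv_i-\tilde\vv_i\|$ rather than $\min_{\pm}\|\vv_i\mp\tilde\vv_i\|$, is the part that needs the most care; everything after that is the elementary telescoping estimate above.
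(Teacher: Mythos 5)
Your proposal is correct and follows essentially the same route as the paper's proof: eigenvalue convergence via the Frobenius-norm eNTK$\to$NTK limit plus Weyl's inequality, eigenvector convergence via a Davis--Kahan variant with a spectral-gap constant $G$ (the paper uses Corollary 3 of Yu et al., 2015), and then the identical add-and-subtract decomposition through the intermediate term $(1-\eta g_i(\lambda_i))^2(\tilde{\vv}_i^\top\vr_t)^2$, with each piece controlled by difference-of-squares, Cauchy--Schwarz, and the boundedness of $\vr_t$ by $R_0$. The ``main obstacle'' you flag (gap-dependence of the eigenvector bound and the sign ambiguity resolved by $\vv_i^\top\tilde{\vv}_i>0$) is exactly how the paper handles it in its formal statement, which makes the gap $G$ an explicit quantity in $\epsilon_2$ and $\epsilon_3$.
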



Theorem \ref{theorem1} illustrates that eigenvalues and eigenvectors of $\tilde{\mK}$ converge one-to-one to eigenvalues and eigenvectors  of $\mK$. That is, $\tilde{\mS}$ impacts not only the convergence rates $\{g_i(\tilde{\lambda}_i)\}_{i=1}^N$ similar to those of $S$ but also the corresponding convergence directions ($\tilde{\vv}_i$). This theoretical results justify that the $\tilde{\mK}$-based adjustment is approximately equivalent to $\mK$-based adjustment. We further provide the empirical evidence in Sec. \ref{empirical analysiss} and detailed proof of formal version in our Appendix \ref{theoretical proofs}. Next, we analyze how training dynamics of $\mX$ can be estimated from sampled data $\mX_e$.

\begin{theorem}(informal)
    $\mX$ is partitioned into $n$ groups $\mX_j=\{\vx^j_1,\dots,\vx^j_p\}$ and $N=np$. For $\mX_j$, the sampled point denoted as $\vx^j$ and these $n$ points form $\mX_{e}=\{\vx^j\}_{j=1}^n, n\ll N$. There exists $\epsilon >0$, for $j\in [n]$ and $i_1,i_2 \in [p]$, such that $\max \{|(f(\vx^j_{i_1}, \Theta_t)-y^j_{i_1})-(f(\vx^j_{i_2},\Theta_t)-y^j_{i_2})|,\|\nabla_{\Theta_t}f(\vx^j_{i_1})-\nabla_{\Theta_t}f(\vx^j_{i_2}) \|\}< \epsilon$. 
    Then, for any $\vx_i$, assumed that $\vx_i$ belongs to $\mX_{j_i}$, such that:
    \begin{align}
|(\Delta \vr_t)_i -p\nabla_{\Theta_t}f(\vx^{j_i})^\top\nabla_{\Theta_t}f(\mX_e; \Theta_t)\vr^t_e|
< \epsilon_1+\epsilon_2, \notag
\end{align}
where $\epsilon_1=\Theta(\epsilon \frac{n^{3/2}}{\sqrt{m}})$; $\epsilon_2=\Theta(m^{-3/2})$; $\Theta( \cdot )$ denotes equivalence; $(\Delta \vr_t)_i=(\vr_{t+1}-\vr_t)_i$ denotes the residual of $\vx_i$ at time step $t$; $\vr_e^t$ denotes the residual on $\mX_e$ at time step $t$; $\nabla_{\Theta_t}f(\vx^{j_i})^\top\nabla_{\Theta_t}f(\mX_e; \Theta_t)$ is the $j_i$-th row in $\tilde{\mK}_e$. 
    \label{theorem 2}
\end{theorem}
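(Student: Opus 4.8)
\textbf{Proof proposal for Theorem~\ref{theorem 2}.}
The plan is to start from the exact one-step update of the residual under the linear dynamics model and rewrite $(\Delta\vr_t)_i$ as a sum over the sampled groups, then replace each in-group gradient and in-group residual by its sampled representative, controlling the substitution error by the $\epsilon$-closeness hypothesis and standard width-$m$ concentration bounds for the two-layer network. Concretely, from $\vr_{t+1}=(\mI-\eta\mK_t)\vr_t$ (or the exact gradient-descent residual update before linearization) we have, for coordinate $i$,
\begin{equation}
(\Delta\vr_t)_i = -\eta\,\nabla_{\Theta_t}f(\vx_i)^\top\nabla_{\Theta_t}f(\mX;\Theta_t)\,\vr_t
 = -\eta\sum\nolimits_{j=1}^n\sum\nolimits_{k=1}^p \nabla_{\Theta_t}f(\vx_i)^\top\nabla_{\Theta_t}f(\vx^j_k)\,(\vr_t)_{(j,k)}. \notag
\end{equation}
First I would collapse the inner sum over $k\in[p]$: using $\|\nabla_{\Theta_t}f(\vx^j_k)-\nabla_{\Theta_t}f(\vx^j)\|<\epsilon$ and $|(\vr_t)_{(j,k)}-(\vr^t_e)_j|<\epsilon$, the block $\sum_k \nabla_{\Theta_t}f(\vx^j_k)(\vr_t)_{(j,k)}$ is within $O(p\epsilon)$ (times bounded gradient-norm factors) of $p\,\nabla_{\Theta_t}f(\vx^j)(\vr^t_e)_j$. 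Similarly replacing $\nabla_{\Theta_t}f(\vx_i)$ by $\nabla_{\Theta_t}f(\vx^{j_i})$ costs another $O(\epsilon)$. Summing the $n$ such blocks and tracking how errors accumulate across groups gives the $\epsilon_1=\Theta(\epsilon n^{3/2}/\sqrt m)$ term, where the $\sqrt n$ versus $n$ bookkeeping comes from Cauchy--Schwarz on $\vr^t_e\in\mathbb{R}^n$ and the $1/\sqrt m$ from the normalization of the two-layer network's gradient features (each coordinate of $\nabla_{\Theta_t}f$ scales like $1/\sqrt m$, so inner products of $q=\Theta(m)$-dimensional gradient vectors are $\Theta(1)$ but the per-term fluctuations are $O(1/\sqrt m)$).

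The second error term $\epsilon_2=\Theta(m^{-3/2})$ should come from the gap between the exact residual update and the idealized linear-dynamics update $\vr_{t+1}=(\mI-\eta\mK_t)\vr_t$ — i.e., the higher-order Taylor remainder in $\Theta_{t+1}-\Theta_t$ — which is standard in the NTK regime (cf. \citet{lee2019wide,arora2019fine}) and is $O(\eta^2)$ with an $m$-dependent constant that decays like $m^{-3/2}$ under the chosen parametrization; I would cite the relevant lemma from Appendix~\ref{theoretical proofs} rather than re-derive it. Putting the two pieces together with the triangle inequality yields $|(\Delta\vr_t)_i - p\,\nabla_{\Theta_t}f(\vx^{j_i})^\top\nabla_{\Theta_t}f(\mX_e;\Theta_t)\vr^t_e| < \epsilon_1+\epsilon_2$, after absorbing $\eta$ and the uniformly-bounded-gradient constants (themselves controlled by the width-$m$ initialization concentration used throughout Section~\ref{theoretical analysis}) into the $\Theta(\cdot)$.

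The main obstacle I anticipate is the uniform control of the gradient-norm and gradient-inner-product quantities that multiply the $\epsilon$'s: the naive bound on $\sum_j\sum_k\|\nabla f(\vx_i)\|\|\nabla f(\vx^j_k)-\nabla f(\vx^j)\|$ would give $\Theta(\epsilon n p/\sqrt m)$ or worse, not $\Theta(\epsilon n^{3/2}/\sqrt m)$, so getting the stated rate requires exploiting cancellation — treating the error as a random/structured vector and applying Cauchy--Schwarz in the $n$-dimensional sampled space rather than a term-by-term triangle inequality, together with the fact that the sampled gradients $\{\nabla_{\Theta_t}f(\vx^j)\}_{j=1}^n$ span a space on which $\tilde{\mK}_e$ acts with controlled operator norm. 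A secondary subtlety is that $\nabla_{\Theta_t}f$ is evaluated at the trained parameters $\Theta_t$, not at initialization, so the concentration estimates must be propagated along the trajectory; I would handle this exactly as the companion results do, by invoking the parameter-movement bound $\|\Theta_t-\Theta_0\|=O(1/\sqrt m)$ that underlies Theorem~\ref{theorem1} and keeps all the relevant constants $O(1)$ with high probability. Everything else — expanding the update, the two triangle inequalities, and collecting $\Theta(\cdot)$ terms — is routine.
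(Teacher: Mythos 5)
Your plan matches the paper's proof in all its essential steps. The paper's argument in Appendix~\ref{theoretical proofs} starts from Lemma~\ref{lemma4} (the mean-value expansion of the one-step update, whose remainder $\xi'_i(t)$ of order $\frac{\eta\kappa^3R_0^2}{m^{3/2}}$ is exactly your $\epsilon_2$), then splits the substitution error into two triangle-inequality pieces: a term $A$ for replacing $\nabla_{\Theta_t}f(\vx_i)$ by the group representative $\nabla_{\Theta_t}f(\vx^{j_i})$, bounded by $\epsilon\kappa R_0/\sqrt m$, and a term $B$ for collapsing the full sum over $\mX$ to $p$ copies of the sampled sum over $\mX_e$, bounded using the in-group $\epsilon$-closeness and a Cauchy--Schwarz step $\sum_j|(\vr^t_e)_j|\le\sqrt n\,\|\vr^t_e\|\le\sqrt n\,R_0$. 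These are precisely your two substitutions and your source of the $\sqrt n$ factor, and your identification of the $m^{-3/2}$ piece as the second-order Taylor gap is also correct.

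Where you and the paper part ways is the place you flagged as the main obstacle: you anticipate that a naive term-by-term triangle inequality gives $\Theta(\epsilon np/\sqrt m)$ and that recovering $n^{3/2}$ requires a genuine cancellation argument (Cauchy--Schwarz over the $n$-dimensional sampled space against $\tilde{\mK}_e$). The paper does \emph{not} actually do this: it performs exactly the naive term-by-term bound, and if one tracks the indices carefully (the displayed $\sum_{j=1}^p\sum_{i=1}^n$ should be $\sum_{j=1}^n\sum_{i=1}^p$ to be consistent with the grouping), the accumulated bound is $\epsilon\bigl(p\sqrt n\,R_0+\sqrt{np}\,\kappa/\sqrt m\bigr)\cdot\kappa/\sqrt m$, not the stated $\epsilon\bigl(n^{3/2}R_0+\kappa/\sqrt m\bigr)\cdot\kappa/\sqrt m$; the two coincide only under $p=n$, which is not assumed. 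So your concern is well founded, and your proposed fix (exploiting cancellation rather than summing absolute values) is a genuine strengthening of what the paper writes, not a detour — if you carry it out, note explicitly that the paper's own derivation silently drops a factor of $p$, and either impose $p\lesssim n$ or re-derive the rate with the $p$-dependence shown.
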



Theorem \ref{theorem 2} demonstrates that generalizing $\tilde{\mK}_e$, induced from sampled data $\mX_e$, to population data $\mX$ provides an inductive estimation of overall training dynamics. The inductive estimate error $\epsilon_1+\epsilon_2$ decreases as the network width $m$ increases, the consistency bound $\epsilon$ decreases, or both.
This theoretical insight encourages us to generalize the transformation matrix $\tilde{\mS}_e$ to gradients of population data $\mX$, thereby tailoring impacts on spectral bias via $\mK_e, \tilde{\mS}_e$. We provide empirical validation of Theorem \ref{theorem 2} in Sec. \ref{empirical analysiss}, with its formal version and proof in Appendix \ref{theoretical proofs}.

\subsection{Implementation Details}
\label{Implementation}
\vspace{-0.5em}
\noindent\textbf{Sampling Strategy.}
\label{sampling strategy}
As illustrated in Sec. \ref{iga strategy}, we need to sample $\mX_e$ from groups of the population data $\mX$ to obtain $\tilde{\mK}_e, \tilde{\mS}_e$. 
In practice, we first form $n$ groups by simply considering $p$ adjacent input coordinates for each group. Specifically, for 1D and 2D inputs, groups are defined by non-overlapping intervals and patches, respectively. 
For $N$ $d$-dimensional input coordinates ($d\geq 3$), typically represented as a tensor of shape $(n_1,\dots,n_d, d)$, we flatten the first $d$ dimensions into a 2D tensor  $\mX'$ of shape $(N, d)$ and then group it along the first dimension of $\mX'$ as 1D signals.
Then, the input with the largest residuals in each group are selected to form the sampled data $\mX_e$. To better illustrate the above process, we provide diagrams in Fig. \ref{visual_of_sampling}. Details of hyperparameters $n$ and $p$ will be introduced in the experiment section. The ablation study of $n$ and $p$ is in the Appendix \ref{ablation}. As shown in our wide experiments, the above strategy is simple yet effective and easy to implement. 

\textbf{Construction Strategy.} 
As illustrated in Eq. \ref{iga update} and Theorem \ref{theorem 2}, we adjust the evenness of the spectrum of $\tilde{\mK}_e$ via the transformation matrix $\tilde{\mS}_e$ to purposefully improve the spectral bias.
We have $\tilde{\mK}_e=\sum_{i=1}^n \tilde{\tilde{\lambda}}_i\tilde{\tilde{\vv_i}} \tilde{\tilde{\vv_i}}^\top$ and assume that $ \tilde{\tilde{\lambda}}_1 > \dots >\tilde{\tilde{\lambda}}_n >0$. To adjust the evenness of spectrum of $\tilde{\mK}_e$, $\tilde{\mS}_e$ is constructed to balance the eigenvalues of different eigenvectors of $\tilde{\mK}_e$; impacts are tailored by managing the number of balanced eigenvalues. For vanilla gradient descent, $\tilde{\mS}_e$ is constructed as follows:
\begin{equation}    
\tilde{\mS_{e}}=\sum_{i=start}^{end}(\tilde{\tilde{\lambda}}_{start}/ \tilde{\tilde{\lambda}}_{i}) \tilde{\tilde{\mathbf{v}}}_i\tilde{\tilde{\mathbf{v}}}_i^\top+
\sum_{i \notin [start,end]}1 \cdot \tilde{\tilde{\mathbf{v}}}_i\tilde{\tilde{\mathbf{v}}}_i^\top.
    \label{strategy_sgd}
\end{equation}
Generally, $start$ is fixed at $1$; $end$ represents the controlled spectral range. Larger $end$ values have stronger impacts on spectral bias. We demonstrate this effect in experiment 2 of Sec. \ref{empirical analysiss}. For Adam, we utilize $\tilde{\tilde{\lambda}}_{end+1}$ instead of $\tilde{\tilde{\lambda}}_{start}$ to balance eigenvalues in Eq.~\ref{strategy_sgd} for better convergence, resulting in uniform eigenvalues of the modified $\tilde{\mK}_e$ across the $(end-start+2)$ eigenvectors. Then, momentum and adaptive learning rates are updated according to the adjusted gradients. More discussion are in Appendix \ref{discuss about Adam}.

\noindent\textbf{Multidimensional Output Approximation.} 
In practical INRs tasks, the output is multidimensional, such as for color images. Compared to the data size $N$, it is trivial but still affects efficiency. Thus, we simply compute the Jacobian matrix $\nabla_\Theta f(\mX; \Theta)$ by sum-of-logits, which is shown to be efficient in \citet{mohamadi2023fast,shi2024train}.


\begin{figure*}[t]
\vspace{-0.2em}
\begin{center}
\includegraphics[width=1\textwidth]{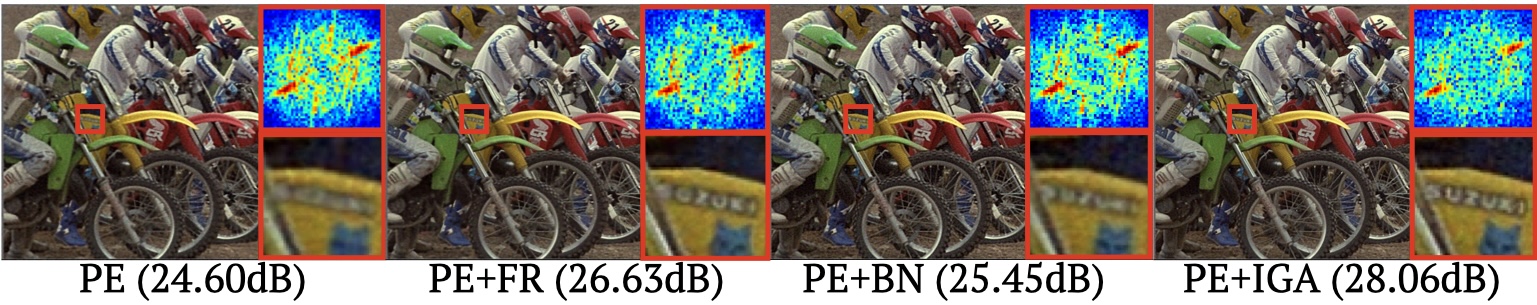}
\end{center}
\vspace{-2em}
\caption{Visual examples of 2D color image approximation results by different training dynamics methods. Enlarged views of regions labeled by red boxes are provided. The residuals of these regions in the Fourier domain are visualized through heatmaps in the top right corner. The increase in error corresponds to the transition of colors in the heatmaps from blue to red. Detailed settings are in Sec. \ref{2D color image}. 
}
\label{exp2_kodak5_paper}
\vspace{-1em}
\end{figure*}
\begin{table*}[h]
\scriptsize
\centering
\setlength{\tabcolsep}{4pt} 
\vspace{-1em}
\caption{Average metrics of 2D color image approximation results by different methods. The detailed settings can be found in Sec. \ref{2D color image}. Per-scene results are provided in Table \ref{exp2_psnr_appendix}-\ref{exp2_lpipsappendix}. Note: SIREN+BN is omitted due to our finding of their incompatibility, neither does the original work \cite{cai2024batch} demonstrate the feasibility.}
\begin{tabular}{l|cccc|cccc|cccc}
\toprule
   Average& \multicolumn{4}{c|}{ReLU} & \multicolumn{4}{c|}{PE} & \multicolumn{3}{c}{SIREN} \\
  Metric & Vanilla & +FR & +BN & +\textbf{IGA} & Vanilla & +FR & +BN & +\textbf{IGA} & Vanilla & +FR  & +\textbf{IGA}  \\
 \midrule
  PSNR $\uparrow$ & 21.78 & 22.14 & 22.55 & \textbf{23.00} & 28.64 & 31.65 & 28.78 & \textbf{32.46} & 32.65 & 32.61 & \textbf{33.48} \\
 SSIM  $\uparrow$& 0.4833 & 0.4919 & 0.5004 & \textbf{0.5126} & 0.7832 & 0.8167 & 0.8030 & \textbf{0.8822} & 0.8975 & 0.8991 & \textbf{0.9121} \\
 MS-SSIM  $\uparrow$ & 0.6521 & 0.6800 & 0.7090 & \textbf{0.7383} & 0.9466 & 0.9564 & 0.9554 & \textbf{0.9752} & 0.9818 & 0.9820 & \textbf{0.9847} \\
 LPIPS $\downarrow$ & 0.6302 & 0.6315 & 0.6182 & \textbf{0.5549} & 0.2223 & 0.1869 & 0.2346 & \textbf{0.0938} & 0.0807 & 0.0813 & \textbf{0.0668} \\
\bottomrule
\end{tabular}
\vspace{-2em}
\label{exp2}
\end{table*}

\section{Empirical Analysis on Simple Function Approximation}
\label{empirical analysiss}




In this section, we aim to explore and provide empirical evidence for the theoretical results presented in Sec. \ref{theoretical analysis}. In experiment 1, we investigate the effects of $\mK$-based and $\tilde{\mK}$-based adjustments on spectral bias and observe that their differences diminish as the network width $m$ increases (Theorem \ref{theorem1}). Training dynamics curves further suggest that increasing width $m$ reduces estimate error $\epsilon_1+\epsilon_2$ from inductive generalization (Theorem \ref{theorem 2}). In experiment 2, we validate $\tilde{\mK}$-based adjustments and IGA on more general settings, demonstrating that IGA purposefully improves spectral bias, with its impacts controllable by managing the number of balanced eigenvalues in Eq. \ref{strategy_sgd}. This suggests the effectiveness of inductive generalization in estimating the training dynamics (Theorem \ref{theorem 2}) on general settings.

 \begin{figure*}
\begin{center}
\includegraphics[width=1\textwidth]{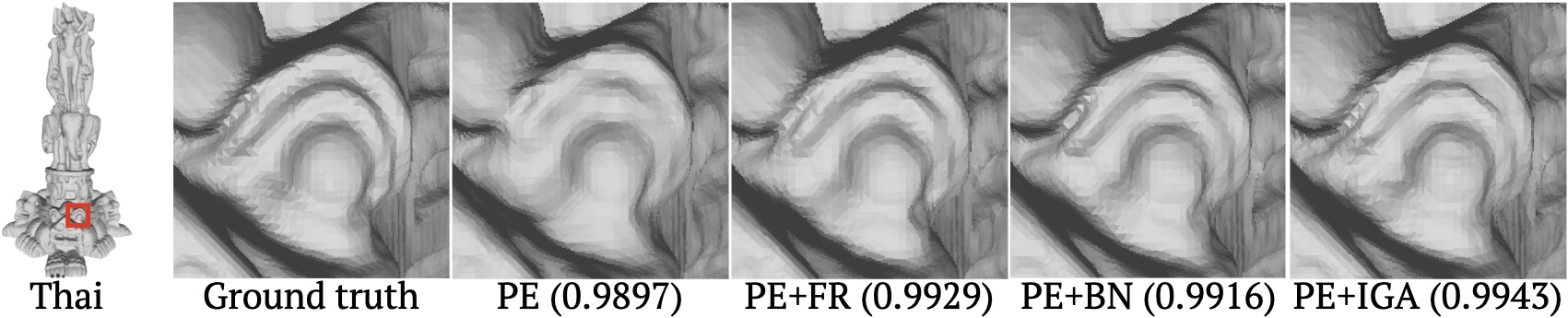}
\end{center}
\vspace{-2em}
\caption{Visual examples of 3D shape representation results by different training dynamics
methods. Five images on the right correspond to the enlarged views of the red-boxed area of ground truth and four models, respectively. More results can be found in our Appendix \ref{3d shape appendix}.} 
\label{3d shape in paper}
\end{figure*}
\begin{table*}
\scriptsize
\centering
\vspace{-2.5em}
\caption{Intersection over Union (IOU) and Chamfer distance of 3D shape representation by different methods. The detailed settings can be found in Sec. \ref{3D shape}. Per-scene results are provided in Table \ref{exp3_appendix_table} and \ref{exp3_appendix_chamfer_table}.}
\setlength{\tabcolsep}{3pt} 
\begin{tabular}{l|cccc|cccc|cccc}
\toprule
   Average& \multicolumn{4}{c|}{ReLU} & \multicolumn{4}{c|}{PE} & \multicolumn{3}{c}{SIREN} \\
  Metric & Vanilla & +FR & +BN & +\textbf{IGA} & Vanilla & +FR & +BN & +\textbf{IGA} & Vanilla & +FR & +\textbf{IGA}  \\
 \midrule
 IOU $\uparrow$ & 9.647e-1 &9.654e-1 & 9.662e-1 & \textbf{9.733e-1} & 9.942e-1 &9.961e-2 & 9.946e-1 & \textbf{9.970e-1} & 9.889e-1 & 9.866e-1 & \textbf{9.897e-1} \\
 Chamfer Distance  $\downarrow$ & 5.936e-6 & 5.867e-6 & 5.610e-6 & \textbf{5.487e-6} & 5.123e-6 &5.111e-6 & 5.116e-6 & \textbf{5.108e-6} & 5.688e-6 & 5.163e-6 & \textbf{5.157e-6}\\ 
\bottomrule
\end{tabular}
\vspace{-2em}
\label{exp3}
\end{table*}

To better analyze, we follow \citet{xu2018understanding,shi2024improved} and compute the relative error $\Delta_k$ between target signal $f$ and outputs $f_\Theta$ at frequency $k$ to show spectral bias:
\vspace{-0.5em}
\begin{equation}
    \Delta_k=|\mathcal{F}_{D}[f](k)-\mathcal{F}_{D}[f_\Theta](k)|/|\mathcal{F}_{D}[f](k)|,
    \label{delta_k}
    \vspace{-0.8em}
\end{equation}
where $\mathcal{F}_D$ denotes the discrete Fourier transform. 

\noindent\textbf{Experiment 1.}
In this experiment, as discussed above, we investigate the $\mK$-based and $\tilde{\mK}$-based adjustments.
Therefore, the accurate computation of NTK matrix $\mK$ is the key. Given that the accurate computation of $\mK$ is non-trivial, we adopt a two-layer MLP with a fixed last layer as in \citet{arora2019fine,ronen2019convergence}, whose $\mK_{ij}$ can be computed by the formula $\frac{1}{4\pi}(\mathbf{x}_i^\top \mathbf{x}_j+1)(\pi -\arccos(\mathbf{x}_i^\top \mathbf{x}_j))$. This architecture facilitates theoretical analysis but has limited representational capacity for complex functions. 
Considering these concerns, we constructed the following simple function $f(\cos(\theta),\sin(\theta)):\mathbb{S}^1 \rightarrow \mathbb{R}^1$, widely used in prior works \citep{ronen2019convergence,geifman2023controlling}: 
\begin{equation}  
\sin(0.4\pi\theta)+\sin(0.8\pi\theta)+\sin(1.6\pi\theta)+\sin(3.2\pi\theta).\label{func in exp1_1}
\vspace{-0.5em}
\end{equation}
We set $N=1024$ for input samples via uniformly sampling $1024$ $\theta$ in $[0, 2\pi]$.
We vary the width of the MLP from $1024$ to $8192$. We adopt the Identity matrix $\mI$ (that is, the vanilla gradient) and a series of $\mS, \tilde{\mS}, \tilde{\mS}_e$ that $start=1$ and $end$ ranges from $10$ to $14$. 
Following our sampling strategy, we set $p=8$ and $n=128$, indicating that $\tilde{\mS}_e$ is only $1/64$ the size of $\mS, \tilde{\mS}$.
All MLPs are trained separately with the same fixed learning rates by SGD for $20$K iterations. All models are repeated by 10 random seeds. We visualize the results of the baseline model and the adjustments with $end=14$ in Fig.~\ref{exp1}. More results can be found in Appendix \ref{simple function appendix}. 

\noindent\textbf{Analysis of Experiment 1.}
In Fig. \ref{exp1}, line plots show the convergence trends of MLPs by $\mK$-based ($\mS$), $\tilde{\mK}$-based ($\tilde{\mS}$) and inductive gradient ($\tilde{\mS}_e$) adjustments, compared to vanilla gradient ($\mI$).
Heat maps with a consistent color scale visualize their spectral impacts, where darker colors indicate higher errors. Note that color bars of $\mI$ darken rapidly from $0.4$Hz, highlighting severe spectral bias in MLPs trained with vanilla gradients. 
All the adjustments introduce lighter shades over 0.4Hz and 0.8Hz, effectively improving spectral bias, therefore resulting in faster MSE convergence.
Please note that while adjustments by $\mS$, $\tilde{\mS}$ and $\tilde{\mS}_e$ exhibit slight difference in MSE loss trends and impacts on spectrum in the case of width 1024, the differences are barely noticeable for width 4096. 
This result aligns with previous theoretical analysis that increasing width $m$ reduces differences between $\mK$-based and $\tilde{\mK}$-based adjustments (Theorem \ref{theorem1}) and errors induced by inductive generalization (Theorem \ref{theorem 2}).
Moreover, despite the presence of difference, $\tilde{\mS}$ and $\tilde{\mS}_e$ effectively improve spectral bias and the approximation accuracy, indicating that $\tilde{\mK}$ successfully links spectral bias with training dynamics and our IGA method is effective. 

\noindent\textbf{Experiment 2.} 
In this experiment, we aim to validate $\tilde{\mK}$-based adjustments and our IGA on more general settings. For IGA, we also aim to show that impacts on spectral bias can be tailored by managing the number of balanced eigenvalues under varying $p$.
Therefore, we consider two practical INR models, i.e., MLPs with ReLU (ReLU) and MLPs with Sine (SIREN) \citep{sitzmann2020implicit} like \citet{shi2024improved}. To analyze tailored impacts under varying $p$, we use settings in \citet{rahaman2019spectral} and construct a 1D function $f:\mathbb{R}^1 \rightarrow \mathbb{R}^1$ with abundant spectrum as follows:
\vspace{-0.5em}
\begin{align}
\vspace{-0.5em}
\sin(20 \pi x)&+\sin(40\pi x)+\sin(60 \pi x)
+\sin(80\pi x) \notag \\[-0.5em] 
&+\sin(100\pi x)+\sin(120 \pi x). 
\vspace{-0.5em}
\end{align}
SIREN is trained to regress the $f(x)$ with $2048$ values uniformly sampled in $[-1,1]$. For ReLU, we halve the frequency of each component due to its limited representation capacity \citep{yuce2022structured}. For IGA, we set $p$  to $1$, $4$ and $8$ and vary $end$ from $1$ to $7$ to construct the corresponding $\tilde{\mS}_e$. When $p=1$, IGA degenerates to $\tilde{\mK}$-based adjustment. We adopt a four-hidden-layer, $256$-width architecture for ReLU and SIREN, optimized by Adam for $10$K iterations with a fixed learning rate. 
All models are repeated by $10$ random seeds. In Fig. \ref{exp1.2_bias}, we adopt heatmaps of relative error $\Delta_k$ evolution to show impacts on spectral bias. In Fig. \ref{exp1.2_trend}, we visualize $\Delta_k$ curves for some frequencies to precisely demonstrate impacts of IGA on convergence rates.


\noindent \textbf{Analysis of Experiment 2.}
In Fig. \ref{exp1.2_bias},  for $\tilde{\mK}$-based adjustment and IGA, as more balanced eigenvalues are introduced, the proportion of lighter regions in heatmaps of both ReLU and SIREN grows, while the relative error curve decreases more rapidly in Fig. \ref{exp1.2_trend}. These observations clearly illustrate that impacts on spectral bias by $\tilde{\mK}$-based adjustment and our IGA can be amplified by increasing the number of balanced eigenvalues on general settings.  Further, as shown in Fig. \ref{exp1.2_trend}, although the relative error curves of ReLU exhibit subtle differences as $p$ increases, our IGA effectively amplifies impacts on spectral bias with more balanced eigenvalues. These results demonstrate the consistency between our theoretical strategy and practical outcomes, supporting the extension of IGA to more general applications.

\section{Experiment on Vision Applications}
\vspace{-0.5em}
\label{vision applications}
In this section, we apply our inductive gradient adjustment (IGA) method to various vision applications of INRs. The superior and consistent improvements demonstrate the broad applicability and superiority of IGA. 
\subsection{2D Color image approximation}
\vspace{-0.5em}
\label{2D color image}
Single natural image fitting has become an ideal test bed for INR models \citep{saragadam2023wire,xie2023diner,shi2024improved}, as natural images contain both low- and high-frequency components \citep{chan2005image}.
 In this experiment, we attempt to parameterize the function $\phi:\mathbb{R}^2 \rightarrow \mathbb{R}^3, \mathbf{x} \rightarrow \phi(\mathbf{x})$ that represents a discrete image. Following \citet{saragadam2023wire}, we establish four-layer MLPs with $256$ neurons per hidden layer. We test three MLPs architectures, i.e., MLPs with ReLU activations (ReLU), ReLU with Positional Encoding (PE) \citep{tancik2020fourier} and MLPs with periodic activation Sine (SIREN) \citep{sitzmann2020implicit}. These three models are classic baselines in INRs and are widely used for comparison \citep{sitzmann2020implicit,saragadam2023wire,shi2024improved}. Results on more activations can be found in Appendix \ref{iga on more}.
 We test on the first 8 images from the Kodak 24 \cite{kodak1999}, each containing 768 × 512 pixels. For these images, $\mK$ has approximately $10^{11}$ entries, rendering decomposition and multiplication infeasible during training. Following our sampling strategy, we group each image into non-overlapping $ 32 \times 32$ patches ($p=1024$) and sample data with the largest residuals to form the $\mX_e$. We construct $\tilde{\mS}_e$ with $end=20$ for SIREN and PE and $end=25$ for ReLU due to its severe spectral bias. Therefore, we generalize the inductive gradient adjustments, derived from less than $0.1\%$ of the sampled data, to the entire population. For all architectures, we train baseline with vanilla gradients and with our IGA (+IGA) by Adam optimizer. The learning rate schedule follows \citet{shi2024improved}. Detailed training settings can be found in Appendix \ref{2d image appendix}. 
 Current training dynamics methods, i.e., Fourier reparameterization (+FR) \citep{shi2024improved} and batch normalization (+BN) \citep{cai2024batch}, are compared. Their hyperparameters follow publicly available codes, and we make every effort to achieve optimal performance.
In Table \ref{exp2}, we report average values of four metrics of different INRs over the first $8$ images of Kodak 24, where LPIPS values are measured by the “alex” from \citet{zhang2018perceptual}. 

As shown in Table \ref{exp2} and Fig. \ref{exp2_kodak5_paper}, PE with our IGA  (PE+IGA) not only achieves the best improvements on average approximation accuracy: PSNR values compared to PE, PE+FR, and PE+BN, but also excels in frequency-aware metrics: SSIM, MS-SSIM, with a more precise representation of high-frequency details instead of being overly smooth like other methods. Concretely, as shown in the spectra located in the top right corner of Fig. \ref{exp2_kodak5_paper}, improvements of IGA are uniformly distributed across most frequency bands. In contrast, while BN and FR increase PSNR values, most improvements are near the origin, i.e., in the low-frequency range.
Furthermore, as shown in Table \ref{exp2_psnr_appendix}-\ref{exp2_lpipsappendix}, IGA achieves consistent improvements across almost all scenarios and metrics. While FR and BN achieve competitive results in some scenarios, their performance is generally less stable and occasionally leads to suboptimal or negative outcomes.
These observations indicate that our IGA method allows for more balanced convergence rates of a wide spectral range with millions of data. More results are in Appendix \ref{2d image appendix}. 

\subsection{3D shape representation}
\label{3D shape}
3D shape representation by Signed Distanced Functions (SDFs) is widely used for testing INR models \citep{saragadam2023wire,cai2024batch,shi2024improved}, due to its ability to model complex topologies.
In this section, we evaluate IGA on this task using five 3D objects from the public dataset \citep{martel2021acorn,zhu2024finer++} and objects are sampled over a $512^3$ grid following \citet{saragadam2023wire,shi2024improved}.
For IGA, following our sampling strategy for high-dimensional data, we set $p=256$ and randomly draw $512$ groups and generalize the inductive gradient adjustments on these groups in each iteration.
The architectures of MLPs are consistent with Experiment \ref{2D color image}. 
We use the Adam optimizer to minimize the $\ell_2$ loss between voxel values and INRs approximations. For a fair comparison, the same training strategy is adopted for FR, BN and IGA. Detailed settings can be found in our Appendix \ref{3d shape appendix}. 

In Table \ref{exp3}, we report the average intersection over union (IOU) and Chamfer Distance metrics of five objects for reference. Under our training settings, baseline models optimized by vanilla gradient have converged to a favorable optimum, significantly outperforming prior works \citep{saragadam2023wire,cai2024batch,shi2024improved}. Nevertheless, our IGA method enables models to explore superior optima by purposefully improving the spectral bias, thereby leading to improvements in representation accuracy such as sharper edges in Fig. \ref{3d shape in paper}. More results are in our Appendix \ref{3d shape appendix}.

\subsection{Learning 5D neural radiance fields}
\label{nerf_paper}
Learning neural radiance fields for novel view synthesis, i.e., NeRF, is the main application of INRs \citep{mildenhall2021nerf,saragadam2023wire,xie2023diner,shi2024improved}. The main process of NeRF is to build a neural radiance field from a 5D coordinate space to RGB space. 

Specifically, given a ray $i$ from the camera into the scene, the MLP takes the 5D coordinates (3D spatial locations and 2D view directions) of points along the ray as inputs and outputs the corresponding color $\mathbf{c}$ and volume density $\sigma$. Then $\mathbf{c}$ and $\sigma$ are combined using numerical volume rendering to obtain the final pixel color of the ray $i$. 
Following our sampling strategy, We place the 2D view directions last during flattening and set $p$ as the number of sampling points per ray. That is, each ray is treated as a group.
For each group, we sample the point with the maximum integral weight. We apply our method to the original NeRF \citep{mildenhall2021nerf}. The “NeRF-PyTorch” codebase~\cite{lin2020nerfpytorch} is used, and we follow its default settings for all methods. More details can be found in Appendix \ref{5d novel view synthesis appendix}. 
\begin{table}[H]
\vspace{-1.5em}
\centering
\scriptsize
\caption{Average metrics of 5D neural radiance fields by different methods. 
Per-scene results are provided in Table. \ref{nerf_appendix}.} 
\begin{tabular}{l|cccc}
\toprule
Metrics & NeRF & NeRF+FR & NeRF+BN\textdagger & \textbf{NeRF+IGA} \\
\midrule
PSNR $\uparrow$ & 31.23 & 31.35 &  31.37 & \textbf{31.47} \\
SSIM $\uparrow$ & 0.953  & 0.954 & 0.956 &  \textbf{0.955} \\
LPIPS $\downarrow$ & 0.029\textsuperscript{Alex} & 0.028\textsuperscript{Alex}  & 0.050&  \textbf{0.027\textsuperscript{Alex}}  \\
\bottomrule 
\multicolumn{5}{l}{\textdagger~{Use values reported in the paper \citep{cai2024batch}.}}
\end{tabular}
\label{nerf}
\end{table}
\vspace{-1.5em}
Table \ref{nerf} lists average metrics of four methods on the down-scaled Blender dataset \citep{mildenhall2021nerf}. Our IGA method achieves the best results among these methods. The average improvement of our method is up to $0.24$dB, which is nearly twice that of FR and BN. Visualization results in our Appendix 
\ref{5d novel view synthesis appendix} 
show that our IGA enables NeRF to capture more accurate and high-frequency reconstruction results.
\subsection{Time and memory analysis}
In this subsection, we analyze the training time and memory cost of our IGA in the aforementioned experiments. For both vanilla gradient descent and IGA, we present the average per-iteration training time and memory cost over all models within each experiment, as summarized in Table \ref{time_memory}. We only report the results of adjustments using the complete eNTK matrix (+Full eNTK) for 1D simple function approximation, due to its excessive memory cost (exceeding $80$GiB) and prohibitive training time in other INR tasks.

\begin{table}[H]
\centering
\scriptsize
\caption{Average per-iteration training time and memory cost by different methods. “Vanilla” denotes standard training using conventional gradients; “+Full eNTK” denotes training with gradient adjustment based on the complete eNTK matrix $\mK$; “+IGA” denotes training with inductive gradient adjustment. The abbreviations “1D”, “2D”, “3D”, and “5D” are used to represent the experiments on simple functions, 2D images, 3D shapes, and neural radiance fields, respectively.} 
\begin{tabular}{l|ccc|ccc}
\toprule
   INR & \multicolumn{3}{c|}{Training time (s)} & \multicolumn{3}{c}{Memory cost (MB)} \\
tasks & Vanilla & +Full eNTK & \textbf{+IGA} & Vanilla & +Full eNTK & \textbf{+IGA} \\
\midrule
1D & 0.028 & 0.185 & \textbf{0.044} & 965 & 5861 & \textbf{1574} \\
2D & 0.061 & \phantom{0}--\phantom{0} & \textbf{0.088} & 3703 & \phantom{0}--\phantom{0} & \textbf{5174} \\
3D & 0.118 & \phantom{0}--\phantom{0} & \textbf{0.169} & 2291 & \phantom{0}--\phantom{0} & \textbf{2468} \\
5D & 0.159 & \phantom{0}--\phantom{0} & \textbf{0.286} & 9451 & \phantom{0}--\phantom{0} & \textbf{11033} \\
\bottomrule 
\end{tabular}
\label{time_memory}
\end{table}

Overall, IGA averages $1.56\times$ the training time and $1.34\times$ the memory of the baseline but achieves greater improvements (on average, $2.0\times$ those of prior FR and BN). Compared to “+Full eNTK”, IGA reduces the training time and memory consumption by at least $4\times$.


\vspace{-0.5em}
\section{Conclusion}
\vspace{-0.5em}
In this paper, we propose an effective gradient adjustment strategy to purposefully improve the spectral bias of multi-layer perceptrons (MLPs) for better implicit neural representations (INRs). We delve into the linear dynamics model of MLPs and theoretically identify that the empirical Neural Tangent Kernel (eNTK) matrix connects spectral bias with the linear dynamics of MLPs. Based on eNTK matrix, we propose our inductive gradient adjustment method via inductive generalization of gradient adjustments from sampled data. Both theoretical and empirical analysis are conducted to validate impacts of our method on spectral bias. Further, we validate our method on various real-world vision applications of INRs. Our method can effectively tailor improvements on spectral bias and lead to better representation for common INRs network architectures. We hope this study inspires further work on neural network training dynamics to improve application performance.

\section*{Impact Statement}
This paper presents work whose goal is to advance the field of 
Machine Learning. There are many potential societal consequences 
of our work, none which we feel must be specifically highlighted here.

\section*{Acknowledgments}
This work was supported by National Natural Science Foundation of China (No. 62476051) and Sichuan Natural Science Foundation (No. 2024NSFTD0041).

\nocite{langley00}
\normalem
\bibliography{example_paper}

\begin{thebibliography}{56}
\providecommand{\natexlab}[1]{#1}
\providecommand{\url}[1]{\texttt{#1}}
\expandafter\ifx\csname urlstyle\endcsname\relax
  \providecommand{\doi}[1]{doi: #1}\else
  \providecommand{\doi}{doi: \begingroup \urlstyle{rm}\Url}\fi

\bibitem[Arora et~al.(2019)Arora, Du, Hu, Li, and Wang]{arora2019fine}
Arora, S., Du, S., Hu, W., Li, Z., and Wang, R.
\newblock Fine-grained analysis of optimization and generalization for overparameterized two-layer neural networks.
\newblock In \emph{International Conference on Machine Learning}, pp.\  322--332. PMLR, 2019.

\bibitem[Bai et~al.(2023)Bai, Liu, Gupta, Alzubaidi, Feng, and Gu]{bai2023physics}
Bai, J., Liu, G.-R., Gupta, A., Alzubaidi, L., Feng, X.-Q., and Gu, Y.
\newblock Physics-informed radial basis network (pirbn): A local approximating neural network for solving nonlinear partial differential equations.
\newblock \emph{Computer Methods in Applied Mechanics and Engineering}, 415:\penalty0 116290, 2023.

\bibitem[Bietti \& Mairal(2019)Bietti and Mairal]{bietti2019inductive}
Bietti, A. and Mairal, J.
\newblock On the inductive bias of neural tangent kernels.
\newblock \emph{Advances in Neural Information Processing Systems}, 32, 2019.

\bibitem[Cai et~al.(2024)Cai, Zhu, Shen, Wang, and Cao]{cai2024batch}
Cai, Z., Zhu, H., Shen, Q., Wang, X., and Cao, X.
\newblock Batch normalization alleviates the spectral bias in coordinate networks.
\newblock In \emph{Proceedings of the IEEE/CVF Conference on Computer Vision and Pattern Recognition}, pp.\  25160--25171, 2024.

\bibitem[Chan \& Shen(2005)Chan and Shen]{chan2005image}
Chan, T.~F. and Shen, J.
\newblock \emph{Image processing and analysis: variational, PDE, wavelet, and stochastic methods}.
\newblock SIAM, 2005.

\bibitem[Deng et~al.(2009)Deng, Dong, Socher, Li, Li, and Fei-Fei]{deng2009imagenet}
Deng, J., Dong, W., Socher, R., Li, L.-J., Li, K., and Fei-Fei, L.
\newblock Imagenet: A large-scale hierarchical image database.
\newblock In \emph{2009 IEEE conference on computer vision and pattern recognition}, pp.\  248--255. Ieee, 2009.

\bibitem[Du et~al.(2018)Du, Zhai, Poczos, and Singh]{du2018gradient}
Du, S.~S., Zhai, X., Poczos, B., and Singh, A.
\newblock Gradient descent provably optimizes over-parameterized neural networks.
\newblock \emph{arXiv preprint arXiv:1810.02054}, 2018.

\bibitem[Essakine et~al.(2024)Essakine, Cheng, Cheng, Zhang, Deng, Zhu, Sch{\"o}nlieb, and Aviles-Rivero]{essakine2024we}
Essakine, A., Cheng, Y., Cheng, C.-W., Zhang, L., Deng, Z., Zhu, L., Sch{\"o}nlieb, C.-B., and Aviles-Rivero, A.~I.
\newblock Where do we stand with implicit neural representations? a technical and performance survey.
\newblock \emph{arXiv preprint arXiv:2411.03688}, 2024.

\bibitem[Fathony et~al.(2020)Fathony, Sahu, Willmott, and Kolter]{fathony2020multiplicative}
Fathony, R., Sahu, A.~K., Willmott, D., and Kolter, J.~Z.
\newblock Multiplicative filter networks.
\newblock In \emph{International Conference on Learning Representations}, 2020.

\bibitem[Franzen(1999)]{kodak1999}
Franzen, R.
\newblock Kodak lossless true color image suite.
\newblock \url{https://r0k.us/graphics/kodak/}, 1999.

\bibitem[Geifman et~al.(2023)Geifman, Barzilai, Basri, and Galun]{geifman2023controlling}
Geifman, A., Barzilai, D., Basri, R., and Galun, M.
\newblock Controlling the inductive bias of wide neural networks by modifying the kernel's spectrum.
\newblock \emph{arXiv preprint arXiv:2307.14531}, 2023.

\bibitem[He et~al.(2016)He, Zhang, Ren, and Sun]{he2016deep}
He, K., Zhang, X., Ren, S., and Sun, J.
\newblock Deep residual learning for image recognition.
\newblock In \emph{Proceedings of the IEEE conference on computer vision and pattern recognition}, pp.\  770--778, 2016.

\bibitem[Heckel \& Soltanolkotabi(2020)Heckel and Soltanolkotabi]{heckel2020compressive}
Heckel, R. and Soltanolkotabi, M.
\newblock Compressive sensing with un-trained neural networks: Gradient descent finds a smooth approximation.
\newblock In \emph{International Conference on Machine Learning}, pp.\  4149--4158. PMLR, 2020.

\bibitem[Jacot et~al.(2018)Jacot, Gabriel, and Hongler]{jacot2018neural}
Jacot, A., Gabriel, F., and Hongler, C.
\newblock Neural tangent kernel: Convergence and generalization in neural networks.
\newblock \emph{Advances in neural information processing systems}, 31, 2018.

\bibitem[Kazerouni et~al.(2024)Kazerouni, Azad, Hosseini, Merhof, and Bagci]{kazerouni2024incode}
Kazerouni, A., Azad, R., Hosseini, A., Merhof, D., and Bagci, U.
\newblock Incode: Implicit neural conditioning with prior knowledge embeddings.
\newblock In \emph{Proceedings of the IEEE/CVF Winter Conference on Applications of Computer Vision}, pp.\  1298--1307, 2024.

\bibitem[Kim et~al.(2022)Kim, Lee, Hong, and Ok]{kim2022learning}
Kim, J., Lee, Y., Hong, S., and Ok, J.
\newblock Learning continuous representation of audio for arbitrary scale super resolution.
\newblock In \emph{ICASSP 2022-2022 IEEE International Conference on Acoustics, Speech and Signal Processing (ICASSP)}, pp.\  3703--3707. IEEE, 2022.

\bibitem[Kingma(2014)]{kingma2014adam}
Kingma, D.~P.
\newblock Adam: A method for stochastic optimization.
\newblock \emph{arXiv preprint arXiv:1412.6980}, 2014.

\bibitem[Klocek et~al.(2019)Klocek, Maziarka, Wo{\l}czyk, Tabor, Nowak, and {\'S}mieja]{klocek2019hypernetwork}
Klocek, S., Maziarka, {\L}., Wo{\l}czyk, M., Tabor, J., Nowak, J., and {\'S}mieja, M.
\newblock Hypernetwork functional image representation.
\newblock In \emph{International Conference on Artificial Neural Networks}, pp.\  496--510. Springer, 2019.

\bibitem[Langley(2000)]{langley00}
Langley, P.
\newblock Crafting papers on machine learning.
\newblock In Langley, P. (ed.), \emph{Proceedings of the 17th International Conference on Machine Learning (ICML 2000)}, pp.\  1207--1216, Stanford, CA, 2000. Morgan Kaufmann.

\bibitem[Lee et~al.(2019)Lee, Xiao, Schoenholz, Bahri, Novak, Sohl-Dickstein, and Pennington]{lee2019wide}
Lee, J., Xiao, L., Schoenholz, S., Bahri, Y., Novak, R., Sohl-Dickstein, J., and Pennington, J.
\newblock Wide neural networks of any depth evolve as linear models under gradient descent.
\newblock \emph{Advances in neural information processing systems}, 32, 2019.

\bibitem[Liang et~al.(2022)Liang, Sun, and Vijaykumar]{liang2022coordx}
Liang, R., Sun, H., and Vijaykumar, N.
\newblock Coordx: Accelerating implicit neural representation with a split mlp architecture.
\newblock \emph{arXiv preprint arXiv:2201.12425}, 2022.

\bibitem[Luo et~al.(2025)Luo, Zhao, and Meng]{luo2025continuousrepresentationmethodstheories}
Luo, Y., Zhao, X., and Meng, D.
\newblock Continuous representation methods, theories, and applications: An overview and perspectives, 2025.
\newblock URL \url{https://arxiv.org/abs/2505.15222}.

\bibitem[Martel et~al.(2021)Martel, Lindell, Lin, Chan, Monteiro, and Wetzstein]{martel2021acorn}
Martel, J.~N., Lindell, D.~B., Lin, C.~Z., Chan, E.~R., Monteiro, M., and Wetzstein, G.
\newblock Acorn: Adaptive coordinate networks for neural scene representation.
\newblock \emph{arXiv preprint arXiv:2105.02788}, 2021.

\bibitem[Mildenhall et~al.(2021)Mildenhall, Srinivasan, Tancik, Barron, Ramamoorthi, and Ng]{mildenhall2021nerf}
Mildenhall, B., Srinivasan, P.~P., Tancik, M., Barron, J.~T., Ramamoorthi, R., and Ng, R.
\newblock Nerf: Representing scenes as neural radiance fields for view synthesis.
\newblock \emph{Communications of the ACM}, 65\penalty0 (1):\penalty0 99--106, 2021.

\bibitem[Mohamadi et~al.(2023)Mohamadi, Bae, and Sutherland]{mohamadi2023fast}
Mohamadi, M.~A., Bae, W., and Sutherland, D.~J.
\newblock A fast, well-founded approximation to the empirical neural tangent kernel.
\newblock In \emph{International Conference on Machine Learning}, pp.\  25061--25081. PMLR, 2023.

\bibitem[M{\"u}ller et~al.(2022)M{\"u}ller, Evans, Schied, and Keller]{muller2022instant}
M{\"u}ller, T., Evans, A., Schied, C., and Keller, A.
\newblock Instant neural graphics primitives with a multiresolution hash encoding.
\newblock \emph{ACM transactions on graphics (TOG)}, 41\penalty0 (4):\penalty0 1--15, 2022.

\bibitem[Novak et~al.(2022)Novak, Sohl-Dickstein, and Schoenholz]{novak2022fast}
Novak, R., Sohl-Dickstein, J., and Schoenholz, S.~S.
\newblock Fast finite width neural tangent kernel.
\newblock In \emph{International Conference on Machine Learning}, pp.\  17018--17044. PMLR, 2022.

\bibitem[Park et~al.(2020)Park, Lee, Peng, Cao, and Sohl-Dickstein]{park2020towards}
Park, D.~S., Lee, J., Peng, D., Cao, Y., and Sohl-Dickstein, J.
\newblock Towards nngp-guided neural architecture search.
\newblock \emph{arXiv preprint arXiv:2011.06006}, 2020.

\bibitem[Park et~al.(2019)Park, Florence, Straub, Newcombe, and Lovegrove]{park2019deepsdf}
Park, J.~J., Florence, P., Straub, J., Newcombe, R., and Lovegrove, S.
\newblock Deepsdf: Learning continuous signed distance functions for shape representation.
\newblock In \emph{Proceedings of the IEEE/CVF conference on computer vision and pattern recognition}, pp.\  165--174, 2019.

\bibitem[Rahaman et~al.(2019)Rahaman, Baratin, Arpit, Draxler, Lin, Hamprecht, Bengio, and Courville]{rahaman2019spectral}
Rahaman, N., Baratin, A., Arpit, D., Draxler, F., Lin, M., Hamprecht, F., Bengio, Y., and Courville, A.
\newblock On the spectral bias of neural networks.
\newblock In \emph{International conference on machine learning}, pp.\  5301--5310. PMLR, 2019.

\bibitem[Ramasinghe \& Lucey(2022)Ramasinghe and Lucey]{ramasinghe2022beyond}
Ramasinghe, S. and Lucey, S.
\newblock Beyond periodicity: Towards a unifying framework for activations in coordinate-mlps.
\newblock In \emph{European Conference on Computer Vision}, pp.\  142--158. Springer, 2022.

\bibitem[Reddi et~al.(2019)Reddi, Kale, and Kumar]{reddi2019convergence}
Reddi, S.~J., Kale, S., and Kumar, S.
\newblock On the convergence of adam and beyond.
\newblock \emph{arXiv preprint arXiv:1904.09237}, 2019.

\bibitem[Ronen et~al.(2019)Ronen, Jacobs, Kasten, and Kritchman]{ronen2019convergence}
Ronen, B., Jacobs, D., Kasten, Y., and Kritchman, S.
\newblock The convergence rate of neural networks for learned functions of different frequencies.
\newblock \emph{Advances in Neural Information Processing Systems}, 32, 2019.

\bibitem[Saragadam et~al.(2023)Saragadam, LeJeune, Tan, Balakrishnan, Veeraraghavan, and Baraniuk]{saragadam2023wire}
Saragadam, V., LeJeune, D., Tan, J., Balakrishnan, G., Veeraraghavan, A., and Baraniuk, R.~G.
\newblock Wire: Wavelet implicit neural representations.
\newblock In \emph{Proceedings of the IEEE/CVF Conference on Computer Vision and Pattern Recognition}, pp.\  18507--18516, 2023.

\bibitem[Saratchandran et~al.(2024)Saratchandran, Ramasinghe, and Lucey]{saratchandran2024activation}
Saratchandran, H., Ramasinghe, S., and Lucey, S.
\newblock From activation to initialization: Scaling insights for optimizing neural fields.
\newblock In \emph{Proceedings of the IEEE/CVF Conference on Computer Vision and Pattern Recognition}, pp.\  413--422, 2024.

\bibitem[Shenouda et~al.(2024)Shenouda, Zhou, and Nowak]{shenouda2024relus}
Shenouda, J., Zhou, Y., and Nowak, R.~D.
\newblock Relus are sufficient for learning implicit neural representations.
\newblock \emph{arXiv preprint arXiv:2406.02529}, 2024.

\bibitem[Shi et~al.(2024{\natexlab{a}})Shi, Zhou, and Gu]{shi2024improved}
Shi, K., Zhou, X., and Gu, S.
\newblock Improved implicit neural representation with fourier reparameterized training.
\newblock In \emph{Proceedings of the IEEE/CVF Conference on Computer Vision and Pattern Recognition}, pp.\  25985--25994, 2024{\natexlab{a}}.

\bibitem[Shi et~al.(2024{\natexlab{b}})Shi, Chen, Dong, Yang, Li, Wang, Dick, Lv, Zhao, Yang, et~al.]{shi2024train}
Shi, Y., Chen, Y., Dong, M., Yang, X., Li, D., Wang, Y., Dick, R., Lv, Q., Zhao, Y., Yang, F., et~al.
\newblock Train faster, perform better: modular adaptive training in over-parameterized models.
\newblock \emph{Advances in Neural Information Processing Systems}, 36, 2024{\natexlab{b}}.

\bibitem[Sitzmann et~al.(2020)Sitzmann, Martel, Bergman, Lindell, and Wetzstein]{sitzmann2020implicit}
Sitzmann, V., Martel, J., Bergman, A., Lindell, D., and Wetzstein, G.
\newblock Implicit neural representations with periodic activation functions.
\newblock \emph{Advances in neural information processing systems}, 33:\penalty0 7462--7473, 2020.

\bibitem[Str{\"u}mpler et~al.(2022)Str{\"u}mpler, Postels, Yang, Gool, and Tombari]{strumpler2022implicit}
Str{\"u}mpler, Y., Postels, J., Yang, R., Gool, L.~V., and Tombari, F.
\newblock Implicit neural representations for image compression.
\newblock In \emph{European Conference on Computer Vision}, pp.\  74--91. Springer, 2022.

\bibitem[Takikawa et~al.(2021)Takikawa, Litalien, Yin, Kreis, Loop, Nowrouzezahrai, Jacobson, McGuire, and Fidler]{takikawa2021neural}
Takikawa, T., Litalien, J., Yin, K., Kreis, K., Loop, C., Nowrouzezahrai, D., Jacobson, A., McGuire, M., and Fidler, S.
\newblock Neural geometric level of detail: Real-time rendering with implicit 3d shapes.
\newblock In \emph{Proceedings of the IEEE/CVF Conference on Computer Vision and Pattern Recognition}, pp.\  11358--11367, 2021.

\bibitem[Tancik et~al.(2020)Tancik, Srinivasan, Mildenhall, Fridovich-Keil, Raghavan, Singhal, Ramamoorthi, Barron, and Ng]{tancik2020fourier}
Tancik, M., Srinivasan, P., Mildenhall, B., Fridovich-Keil, S., Raghavan, N., Singhal, U., Ramamoorthi, R., Barron, J., and Ng, R.
\newblock Fourier features let networks learn high frequency functions in low dimensional domains.
\newblock \emph{Advances in neural information processing systems}, 33:\penalty0 7537--7547, 2020.

\bibitem[Vaswani(2017)]{vaswani2017attention}
Vaswani, A.
\newblock Attention is all you need.
\newblock \emph{Advances in Neural Information Processing Systems}, 2017.

\bibitem[Wang et~al.(2023)Wang, Li, and Sun]{wang2023ntk}
Wang, Y., Li, D., and Sun, R.
\newblock Ntk-sap: Improving neural network pruning by aligning training dynamics.
\newblock \emph{arXiv preprint arXiv:2304.02840}, 2023.

\bibitem[Wang et~al.(2003)Wang, Simoncelli, and Bovik]{1292216}
Wang, Z., Simoncelli, E., and Bovik, A.
\newblock Multiscale structural similarity for image quality assessment.
\newblock In \emph{The Thrity-Seventh Asilomar Conference on Signals, Systems \& Computers, 2003}, volume~2, pp.\  1398--1402 Vol.2, 2003.
\newblock \doi{10.1109/ACSSC.2003.1292216}.

\bibitem[Wilson et~al.(2017)Wilson, Roelofs, Stern, Srebro, and Recht]{wilson2017marginal}
Wilson, A.~C., Roelofs, R., Stern, M., Srebro, N., and Recht, B.
\newblock The marginal value of adaptive gradient methods in machine learning.
\newblock \emph{Advances in neural information processing systems}, 30, 2017.

\bibitem[Xie et~al.(2023)Xie, Zhu, Liu, Zhang, Zhou, Cao, and Ma]{xie2023diner}
Xie, S., Zhu, H., Liu, Z., Zhang, Q., Zhou, Y., Cao, X., and Ma, Z.
\newblock Diner: Disorder-invariant implicit neural representation.
\newblock In \emph{Proceedings of the IEEE/CVF Conference on Computer Vision and Pattern Recognition}, pp.\  6143--6152, 2023.

\bibitem[Xu et~al.(2021)Xu, Zhao, Lin, Gao, Sun, and Yang]{xu2021knas}
Xu, J., Zhao, L., Lin, J., Gao, R., Sun, X., and Yang, H.
\newblock Knas: green neural architecture search.
\newblock In \emph{International Conference on Machine Learning}, pp.\  11613--11625. PMLR, 2021.

\bibitem[Xu(2018)]{xu2018understanding}
Xu, Z.~J.
\newblock Understanding training and generalization in deep learning by fourier analysis.
\newblock \emph{arXiv preprint arXiv:1808.04295}, 2018.

\bibitem[Yen-Chen(2020)]{lin2020nerfpytorch}
Yen-Chen, L.
\newblock Nerf-pytorch.
\newblock \url{https://github.com/yenchenlin/nerf-pytorch/}, 2020.

\bibitem[Yu et~al.(2015)Yu, Wang, and Samworth]{yu2015useful}
Yu, Y., Wang, T., and Samworth, R.~J.
\newblock A useful variant of the davis--kahan theorem for statisticians.
\newblock \emph{Biometrika}, 102\penalty0 (2):\penalty0 315--323, 2015.

\bibitem[Y{\"u}ce et~al.(2022)Y{\"u}ce, Ortiz-Jim{\'e}nez, Besbinar, and Frossard]{yuce2022structured}
Y{\"u}ce, G., Ortiz-Jim{\'e}nez, G., Besbinar, B., and Frossard, P.
\newblock A structured dictionary perspective on implicit neural representations.
\newblock In \emph{Proceedings of the IEEE/CVF Conference on Computer Vision and Pattern Recognition}, pp.\  19228--19238, 2022.

\bibitem[Zhang et~al.(2024)Zhang, Luo, Li, Wu, and Wong]{zhang2024nonparametricteachingimplicitneural}
Zhang, C., Luo, S. T.~S., Li, J. C.~L., Wu, Y.-C., and Wong, N.
\newblock Nonparametric teaching of implicit neural representations, 2024.
\newblock URL \url{https://arxiv.org/abs/2405.10531}.

\bibitem[Zhang et~al.(2018)Zhang, Isola, Efros, Shechtman, and Wang]{zhang2018perceptual}
Zhang, R., Isola, P., Efros, A.~A., Shechtman, E., and Wang, O.
\newblock The unreasonable effectiveness of deep features as a perceptual metric.
\newblock In \emph{CVPR}, 2018.

\bibitem[Zhang et~al.(2025)Zhang, Xie, Ren, Xie, Tang, Ge, Wang, and Wang]{zhang2025evosefficientimplicitneural}
Zhang, W., Xie, S., Ren, C., Xie, S., Tang, C., Ge, S., Wang, M., and Wang, Z.
\newblock Evos: Efficient implicit neural training via evolutionary selector, 2025.
\newblock URL \url{https://arxiv.org/abs/2412.10153}.

\bibitem[Zhu et~al.(2024)Zhu, Liu, Zhang, Fu, Deng, Ma, Guo, and Cao]{zhu2024finer++}
Zhu, H., Liu, Z., Zhang, Q., Fu, J., Deng, W., Ma, Z., Guo, Y., and Cao, X.
\newblock Finer++: Building a family of variable-periodic functions for activating implicit neural representation.
\newblock \emph{arXiv preprint arXiv:2407.19434}, 2024.

\end{thebibliography}
\bibliographystyle{icml2025}

\newpage
\appendix
\onecolumn

\begin{appendix}
	\begin{center}
		{\Large \bf Appendix}
	\end{center}
\end{appendix}
This appendix is organized as follows. In Sec. \ref{theoretical proofs}, we provide the formal version and detailed proof of Theorem \ref{theorem1} and \ref{theorem 2} in the main paper. Then, more empirical evidence from Sec. \ref{empirical analysiss} are provided in Sec. \ref{simple function appendix} to support theoretical results in Sec. \ref{theoretical analysis}. In Sec. \ref{Visual illustration}, the visual illustration of our sampling strategy is provided. In Sec. \ref{discuss about Adam}, a further discussion about our inductive gradient adjustment (IGA) with Adam optimizer referred in Sec. \ref{Implementation} is presented. Then, we add experiments of IGA on recent state-of-the-art activations in Sec. \ref{iga on more}. In Sec. \ref{ablation}, the ablation study on sampling strategy is provided. Lastly, we give detailed training settings and per-scene results of our three vision applications in Sec. \ref{2d image appendix}, \ref{3d shape appendix}, \ref{5d novel view synthesis appendix}, respectively.


\section{Theoretical Analysis}
\label{theoretical proofs}
In this section, we first introduce the detailed framework for theoretical analysis. Then, we provide the formal statements and proofs of Theorem \ref{theorem1} and \ref{theorem 2} of Sec. \ref{theoretical analysis}.

\subsection{Analysis framework}
\label{detailed framework}
For the sake of simplicity and to focus on the core aspects of the problem, we follow the previous works \citep{lee2019wide,geifman2023controlling,arora2019fine}, and perform a theoretical analysis on the basis of the following framework: assuming that the training set $\{\vx_i,y_i\}_{i=1}^N$ is contained in one compact set, 
a two-layer network $f(\vx; \Theta)$ with $m$ neurons is formalized as follows to fit these data:
\begin{equation}
    f(\vx; \Theta)=\frac{1}{\sqrt{m}}\sum_{r=1}^m a_r \sigma(\vw^\top_r 
   \vx+b_r),
   \label{analyse model appendix}
\end{equation}
where the activation function $\sigma$ satisfies that $|\sigma(0)|, \|\sigma'\|_{\infty}, sup_{x \neq x'}|\sigma'(x)-\sigma'(x')|/|x-x'|<\infty$.  The parameters of $f(\vx; \Theta)$ are randomly initialized with $\mathcal{N} (0, \frac{c_{\sigma}}{m})$, except for biases initialized with $\mathcal{N} (0, c_{\sigma})$, where $c_{\sigma}=1/\mathbb{E}_{z\sim \mathcal{N}(0,1)}[\sigma(z)^2]$. The loss function is $\frac{1}{2}\sum_{i=1}^N(f(\vx_i,\Theta_t)-y_i)^2 $. Then, the NTK matrix $\mK$ and the empirical NTK matrix $\tilde{\mK}$ is defined as follows:
\begin{align}
    \mK&=\mathbb{E}_{\Theta_0} [\nabla_{\Theta_0} f(\mX; \Theta_0)^\top \nabla_{\Theta_0} f(\mX; \Theta_0) ] \\
    \tilde{\mK}&=\nabla_{\Theta_t} f(\mX; \Theta_t)^\top \nabla_{\Theta_t} f(\mX; \Theta_t),
\end{align}
where $\nabla_{\Theta}f(\mX;\Theta)\in \mathbb{R}^{q \times N}$; $q$ is the number of parameters. We assume that $\mK, \tilde{\mK}$ are full rank, indicating that they are positive definite. This assumption generally holds due to the complexity of neural networks. Then we have the following standard orthogonal spectral decomposition: 
$\mK=\sum_{i=1}^N\lambda_i \vv_i \vv_i^\top$ and $\tilde{\mK}=\sum_{i=1}^N\tilde{\lambda_i} \tilde{\vv_i} \tilde{\vv_i}^\top$, which eigenvalues are indexed in descending order of magnitude. We construct the corresponding transformation matrices as $\mS=\sum_{i=1}^N \frac{g_i(\lambda_i)}{\lambda_i}\vv_i \vv_i^\top$ and $\tilde{\mS}=\sum_{i=1}^N \frac{g_i(\tilde{\lambda_i})}{\tilde{\lambda_i}}\tilde{\vv}_i \tilde{\vv}_i^\top$ through a set of Lipschitz continuous functions $\{g_i(\lambda)\}_{i=1}^N$ . From \citet{jacot2018neural,du2018gradient,arora2019fine,geifman2023controlling}, we have that:
\begin{equation}
    \lim_{m \rightarrow \infty} \|\mK-\tilde{\mK}\|_F=0.
    \label{limit}
\end{equation}

\subsection{Basic Lemmas}
In this section, we show some basic lemmas. Lemma \ref{lemma 1} describes the local properties of $f(\vx,\Theta)$. Based on Lemma \ref{lemma 1}, we derive Lemma \ref{lemma2}, \ref{lemma3} and \ref{lemma4}, which characterize training dynamics of $f(\vx, \Theta)$ under different adjusted gradients.

\begin{lemma}(Modified from Lemma A.1 of \cite{geifman2023controlling})
For bounded matrices, i.e., $\mI, \mS, \tilde{\mS}$, there is a $\kappa>0$ such that for every $C>0$, with high probability over random initialization the following holds:
$\forall \Theta, \tilde{\Theta} \in B(\Theta_0, Cm^{-\frac{1}{2}})$ at time step $t$:
\begin{align}
    \|(\nabla_\Theta f(\mX) -\nabla_{\tilde{\Theta}}f(\mX))\mA\|_F & \leq \frac{\kappa}{\sqrt{m}}\|\Theta - \tilde{\Theta}\|_F \\
     \| \nabla_\Theta f(\mX) \mA \|_F & \leq \frac{\kappa}{\sqrt{m}},
\end{align} 
where $\mA$ can be $\mI, \mS, \tilde{\mS}$.
\label{lemma 1}
\end{lemma}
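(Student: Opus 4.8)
\textbf{Proof proposal for Lemma \ref{lemma 1}.}

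The plan is to prove both bounds by controlling the Jacobian $\nabla_\Theta f(\mX;\Theta)$ and its local Lipschitz constant, then noting that right-multiplication by a bounded matrix $\mA\in\{\mI,\mS,\tilde\mS\}$ only inflates Frobenius norms by $\|\mA\|_{\mathrm{op}}$. First I would write out the Jacobian explicitly for the two-layer model $f(\vx;\Theta)=\frac{1}{\sqrt m}\sum_{r=1}^m a_r\sigma(\vw_r^\top\vx+b_r)$ from \eqref{analyse model appendix}: the derivatives with respect to $a_r$, $\vw_r$, $b_r$ all carry a prefactor $\frac{1}{\sqrt m}$ and depend on $\sigma$ or $\sigma'$ evaluated at the pre-activations. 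Under the stated regularity on $\sigma$ (bounded $\sigma(0)$, bounded $\|\sigma'\|_\infty$, and $\sigma'$ Lipschitz) together with the compactness of the input set $\{\vx_i\}_{i=1}^N$, each coordinate of the gradient is $O(1/\sqrt m)$ with high probability over the $\mathcal N(0,c_\sigma/m)$-type initialization (standard concentration for $\|\vw_r\|$ and $|a_r|$, $|b_r|$). Summing the $q=O(m)$ squared entries gives $\|\nabla_\Theta f(\mX)\|_F\le \kappa'/\sqrt m$ for some constant $\kappa'$ and all $N$ fixed, which yields the second inequality after multiplying by $\|\mA\|_{\mathrm{op}}\le$ const (the transformation matrices $\mS,\tilde\mS$ have operator norm bounded by $\max_i g_i(\lambda_i)/\lambda_i$, hence the hypothesis ``bounded matrices'').

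For the first inequality I would bound the difference $\nabla_\Theta f(\mX)-\nabla_{\tilde\Theta}f(\mX)$ entrywise. The key observation is that on the ball $B(\Theta_0,Cm^{-1/2})$ the parameters $\Theta,\tilde\Theta$ differ by at most $2Cm^{-1/2}$ in Frobenius norm, so the pre-activations $\vw_r^\top\vx_i+b_r$ move by $O(m^{-1/2})$ uniformly over $r$ and $i$ (using $\|\vx_i\|$ bounded). Then the Lipschitz property of $\sigma$ and $\sigma'$ converts this into an $O(m^{-1/2})$ change in each gradient entry per unit change in the relevant parameter block; more precisely each entry difference is bounded by $\frac{1}{\sqrt m}\cdot L\cdot\|\Theta-\tilde\Theta\|_F$ for an appropriate $L$ depending only on $\sigma$ and the input bound. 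Squaring and summing the $O(m)$ entries gives $\|\nabla_\Theta f(\mX)-\nabla_{\tilde\Theta}f(\mX)\|_F\le\frac{\kappa''}{\sqrt m}\|\Theta-\tilde\Theta\|_F$, and again multiplying by $\|\mA\|_{\mathrm{op}}$ absorbs $\mA$. Taking $\kappa=\max\{\kappa',\kappa''\}\cdot\max_{\mA}\|\mA\|_{\mathrm{op}}$ finishes both statements; the ``with high probability over random initialization'' is the single event that all the per-neuron concentration bounds hold simultaneously, a union bound over $O(m)$ events each failing with exponentially small probability.

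The main obstacle I anticipate is bookkeeping the probabilistic uniformity rather than any deep idea: one must ensure the $O(1/\sqrt m)$ entrywise bounds on the Jacobian and its Lipschitz constant hold \emph{simultaneously} for \emph{all} $\Theta,\tilde\Theta$ in the ball (not just a fixed pair), which requires either a net argument over the ball combined with the already-established local Lipschitz continuity, or directly exploiting that the bounds depend on $\Theta$ only through the pre-activations which are themselves uniformly controlled on the ball. Since this lemma is stated as ``modified from Lemma A.1 of \citet{geifman2023controlling}'', I would largely follow their argument, the only genuinely new element being the insertion of the right-multiplier $\mA$, which is harmless because $\|B\mA\|_F\le\|B\|_F\|\mA\|_{\mathrm{op}}$ and $\mS,\tilde\mS$ are bounded by construction of the Lipschitz transformations $g_i$. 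A secondary point worth stating carefully is the dependence of $\kappa$ on $N$ (fixed) and on $C$; the lemma quantifies ``for every $C>0$'' so $\kappa$ may be taken independent of $C$ because the ball radius $Cm^{-1/2}\to0$, making the pre-activation perturbation vanish for large $m$ uniformly in any fixed $C$.
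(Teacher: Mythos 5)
Your proposal matches the paper's treatment: the paper gives no computation at all for this lemma, simply observing (citing \citet{lee2019wide} and \citet{geifman2023controlling}) that the only modification to their local-Lipschitzness lemma is the right-multiplier $\mA$, which is harmless precisely because $\mA\in\{\mI,\mS,\tilde{\mS}\}$ is bounded and $\|B\mA\|_F\le\|B\|_F\|\mA\|_{\mathrm{op}}$ --- exactly the observation that closes your argument. Your additional sketch of the entrywise Jacobian bounds and the concentration/uniformity bookkeeping is more detail than the paper supplies and is consistent with the standard argument it defers to.
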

As discussed in \citet{lee2019wide,geifman2023controlling}, the core of Lemma \ref{lemma 1} is the requirement that the matrix $\mA$ is bounded. Therefore, Lemma \ref{lemma 1} clearly holds.
Then, to better illustrate our core theoretical analysis, we present the following Lemma \ref{lemma2}, \ref{lemma3} and \ref{lemma4}. Please note that a similar version of Lemma \ref{lemma2} has been proved in \cite{geifman2023controlling}. However, it focuses on $\vr_{t+1}$ induced by only $\mS$ and the details of the eigenvalues between $\vr_{t+1}$ and $\vr_t$ are omitted. Therefore, our Lemma \ref{lemma2}, \ref{lemma3} and \ref{lemma4} characterize the dynamics in one iteration from  $\vr_t$ to $\vr_{t+1}$ and eigenvalues induced by NTK-based and eNTK-based adjustments and vanilla gradient, respectively.

\begin{lemma}
    The parameters are updated by: $\Theta_{t+1}=\Theta_t-\eta \nabla_{\Theta_t}f(\mX,\Theta_t)\mS\vr_t$ with $\eta<(min(g_i(\lambda))+max(g_i(\lambda)))^{-1}$. For $\epsilon >0$, there always exists $M>0$, when $m>M$, such that with high probability over the random initialization,  we have that :
    \begin{equation}
        \| \vr_{t+1} \|_2^2=\sum_{i=1}^N(1-\eta g_i(\lambda_i))^2 (\vv_i^\top \vr_t)^2 \pm \xi(t),
        \label{K governs}
    \end{equation}
    where $\vr_t=(f(\vx_i,\Theta_t)-y_i)_{i=1}^N$ and $|\xi(t)| < \epsilon$.
    \label{lemma2}
\end{lemma}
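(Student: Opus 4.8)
\textbf{Proof proposal for Lemma \ref{lemma2}.}

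The plan is to unfold one step of the gradient update and control the linearization error using Lemma \ref{lemma 1}. First I would write the update on the residual directly: since $\vr_{t+1} = f(\mX;\Theta_{t+1}) - \vy$ and $\Theta_{t+1} = \Theta_t - \eta \nabla_{\Theta_t} f(\mX;\Theta_t)\mS\vr_t$, a first-order Taylor expansion of $f(\mX;\cdot)$ around $\Theta_t$ gives
\begin{equation}
\vr_{t+1} = \vr_t - \eta\, \nabla_{\Theta_t} f(\mX;\Theta_t)^\top \nabla_{\Theta_t} f(\mX;\Theta_t)\, \mS\, \vr_t + \vzero_t = (\mI - \eta\, \tilde{\mK}\mS)\vr_t + \vzero_t, \notag
\end{equation}
where $\vzero_t$ is the Taylor remainder. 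Lemma \ref{lemma 1} controls $\vzero_t$: the step size $\|\Theta_{t+1}-\Theta_t\|$ is $O(m^{-1/2})$ (from the bound $\|\nabla_\Theta f(\mX)\mA\|_F \le \kappa/\sqrt m$ applied with $\mA = \mS$), so the parameters stay in the ball $B(\Theta_0, Cm^{-1/2})$ and the Lipschitz-gradient bound makes $\|\vzero_t\| = O(1/m)$. I would then note that $\tilde{\mK}$ (and its eigendata $\tilde\lambda_i, \tilde\vv_i$) converges to $\mK$ (and $\lambda_i, \vv_i$) as $m\to\infty$ by Eq. \ref{limit} together with standard eigenvalue/eigenvector perturbation (Weyl plus Davis–Kahan, using the assumed simple positive spectrum); hence $\mI - \eta\tilde{\mK}\mS$ acts on the eigenvector $\vv_i$ approximately as multiplication by $1 - \eta g_i(\lambda_i)$, because $\mS$ was built precisely as $\sum_i (g_i(\lambda_i)/\lambda_i)\vv_i\vv_i^\top$ so that $\tilde{\mK}\mS \approx \mK\mS = \sum_i g_i(\lambda_i)\vv_i\vv_i^\top$.

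Taking squared norms and expanding in the (approximately orthonormal) eigenbasis $\{\vv_i\}$ yields
\begin{equation}
\|\vr_{t+1}\|_2^2 = \sum_{i=1}^N (1-\eta g_i(\lambda_i))^2 (\vv_i^\top \vr_t)^2 + \text{(cross terms and remainder)}, \notag
\end{equation}
and the task reduces to showing every error contribution is below a prescribed $\epsilon$ once $m$ is large. There are three error sources to bundle into $\xi(t)$: (i) the Taylor remainder $\vzero_t = O(1/m)$; (ii) the discrepancy between $\tilde{\mK}\mS$ and $\mK\mS$, which is $O(\|\mK-\tilde\mK\|_F) = o(1)$ in $m$; (iii) the non-orthogonality of $\{\vv_i\}$ with respect to $\{\tilde\vv_i\}$, again $o(1)$. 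The role of the hypothesis $\eta < (\min_i g_i(\lambda) + \max_i g_i(\lambda))^{-1}$ is to guarantee $|1-\eta g_i(\lambda_i)| < 1$ for all $i$, so that $\|\vr_t\|$ stays bounded (indeed non-increasing up to the error), which in turn keeps the iterates inside the ball $B(\Theta_0, Cm^{-1/2})$ for all $t$ uniformly — this boundedness is what lets me invoke Lemma \ref{lemma 1} at every step rather than just the first.

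The main obstacle I anticipate is making the error bound \emph{uniform in $t$}: a naive per-step bound of size $\delta(m)$ could accumulate to $t\cdot\delta(m)$ over iterations, which is not $o(1)$ for fixed large $m$ as $t$ grows. The resolution is the contraction from the step-size condition — since $\|\mI - \eta\tilde{\mK}\mS\|$ is bounded by some $\rho<1$ plus $o(1)$, the residuals form a geometric sequence, the total parameter displacement $\sum_t \|\Theta_{t+1}-\Theta_t\|$ is a convergent geometric series of order $O(m^{-1/2})$, and the accumulated remainder is a convergent geometric series of order $O(1/m)$; both stay within the claimed $\epsilon$ for $m > M$. I would also need to handle the eigenvector sign ambiguity, but that is exactly why the statement of Theorem \ref{theorem1} assumes $\vv_i^\top\tilde\vv_i > 0$, and the same convention applies here. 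The remaining steps — bounding cross terms $(\vv_i^\top\vr_t)(\vv_j^\top\vr_t)$ for $i\ne j$ by Cauchy–Schwarz and the perturbation estimates — are routine and I would not spell them out in detail.
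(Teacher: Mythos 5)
Your overall plan is the same as the paper's: linearize the residual dynamics, identify the leading term $(\mI-\eta\mK\mS)\vr_t$, and bound the two error sources (the gap between $\nabla_{\Theta'_t}f$ and $\nabla_{\Theta_t}f$ via Lemma~\ref{lemma 1}, and the gap between $\tilde{\mK}$ and $\mK$ via Eq.~\ref{limit}) before squaring. The paper writes the decomposition $\vr_{t+1}=(\mI-\eta\mK\mS)\vr_t+A+B$ up front with $A=\eta(\mK-\tilde{\mK})\mS\vr_t$ and $B$ the mean-value remainder, then reads off the sum $\sum_i(1-\eta g_i(\lambda_i))^2(\vv_i^\top\vr_t)^2$ exactly from the orthonormal eigendecomposition of $\mK\mS$.

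Where your sketch goes astray is in what machinery is actually needed. You invoke Davis--Kahan / eigenvector perturbation and speak of the "approximately orthonormal" basis $\{\vv_i\}$ and its "cross terms." Neither plays any role here. The formula in Lemma~\ref{lemma2} is entirely in terms of the eigendata $(\lambda_i,\vv_i)$ of $\mK$, and $\mS$ is constructed from the same eigendata, so $\mK\mS=\sum_i g_i(\lambda_i)\vv_i\vv_i^\top$ is \emph{exactly} diagonal in the \emph{exactly} orthonormal basis $\{\vv_i\}$; squaring $(\mI-\eta\mK\mS)\vr_t$ produces no cross terms. The empirical eigenvectors $\tilde{\vv}_i$ never appear. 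The only contribution from $\tilde{\mK}$ is the matrix itself (through the gradient Gram matrix), and that is controlled purely in Frobenius norm via $\|\mK-\tilde{\mK}\|_F\to 0$; no spectral perturbation theory is required. Davis--Kahan and the $\vv_i^\top\tilde{\vv}_i>0$ convention belong to Theorem~\ref{theorem1}, where one must genuinely compare the $\mK$- and $\tilde{\mK}$-based dynamics term by term, not to this lemma.

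Two smaller points. First, your worry about error accumulating as $t\cdot\delta(m)$ is not really an issue for this statement: the lemma is a one-step identity relating $\|\vr_{t+1}\|_2^2$ to $\sum_i(1-\eta g_i(\lambda_i))^2(\vv_i^\top\vr_t)^2$ at a fixed $t$, with $\|\vr_t\|\le R_0$ taken as given (the paper uses this bound directly in estimating $A$ and $B$). Second, after you obtain $\vr_{t+1}=\sum_i(1-\eta g_i(\lambda_i))(\vv_i^\top\vr_t)\vv_i\pm\boldsymbol{\xi}'(t)$, squaring produces exactly one cross term $2\boldsymbol{\xi}'(t)^\top\sum_i(1-\eta g_i(\lambda_i))(\vv_i^\top\vr_t)\vv_i$, which the paper bounds by $2R_0\|\boldsymbol{\xi}'(t)\|$; this is the only "cross term" in the argument and it has nothing to do with eigenvector non-orthogonality.
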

\begin{proof}
    By the mean value theorem with respect to parameters $\Theta$, we can have that:
    \begin{align}
        \vr_{t+1}&= \vr_{t+1}-\vr_{t}+\vr_{t}=\nabla_{\Theta'_t}f(\mX)^\top (\Theta_{t+1}-\Theta_{t})+\vr_t= \nabla_{\Theta'_t}f(\mX)^\top (-\eta \nabla_{\Theta_t} f(\mX)\mS \vr_t) +\vr_t \notag \\
       &= (\mI -\eta \mK\mS )\vr_t+\underbrace{\eta(\mK-\tilde{\mK})\mS\vr_t}_{A} +\underbrace{\eta(\nabla_{\Theta_t}f(\mX)-\nabla_{\Theta'_t}f(\mX))^\top \nabla_{\Theta_t}f(\mX) \mS \vr_t}_{B}, \notag 
    \end{align}
    where $\Theta'_t$ lies on the line segment $\overline{\Theta_{t}\Theta_{t+1}}$.
We define that $\boldsymbol{\xi}'(t)=A+B$. For $\| A\|$, we have that:
\begin{align}
\|A\| &\leq \eta \| (\mK - \tilde{\mK}) \mS \vr_t \|
\leq \eta \| \mK - \tilde{\mK} \|_F \| \mS \|_F \| \vr_t \|_2 \notag \\
& \leq^{(1)} \eta LR_0 \| \mK - \tilde{\mK} \|_F \notag  \leq^{(2)} \frac{\epsilon}{2}, \notag
\end{align}
where $(1)$ follows \citet{geifman2023controlling} that $\mS$ and $\vr_t$ are bounded by constants $L$ and $R_0$, respectively; $(2)$ follows the \eqref{limit}.

For $\|B\|$, we have that:
\begin{align}
    \| B\| & \leq \eta \|\nabla_{\Theta_t}f(\mX,\Theta_t) -\nabla_{\Theta'_t}f(\mX,\Theta'_t) 
    \|_F \|\nabla_{\Theta_t}f(\mX)\mS \|_F \|\vr_t\|_2 \leq \frac{\eta \kappa^2 R_0}{m}\| \Theta_t-\Theta'_t\|_2 \notag \\
    & \leq \frac{\eta \kappa^2 R_0}{m} \| \Theta_t-\Theta_{t+1}\|_2 \leq \frac{\eta \kappa^2 R_0}{m} \| \eta \nabla_{\Theta_t}f(\mX,\Theta_t)\mS\vr_t \| \leq^{(1)} \frac{\eta^2\kappa^3R_0^2}{m^{3/2}}, \notag
\end{align}
where $(1)$ follows the Lemma \ref{lemma 1}.
Therefore, we have that $\vr_{t+1}=\sum_{i=1}^N (1-\eta g_i(\lambda_i))(\vv_i^\top\vr_t)\vv_i \pm \boldsymbol{\xi}'(t)$; there always exits $M_1>0$, such that $m>M_1$, $\|\boldsymbol{\xi}'(t)\|\leq \|A\|+\|B\|<\epsilon$. Further, we have that:
\begin{align}
    \| \vr_{t+1}\|_2^2 &= \sum_{i=1}^N (1-\eta g_i(\lambda_i))^2 (\vv^\top_i \vr_t)^2+\|\boldsymbol{\xi}'(t)\|_2^2 \pm 2\boldsymbol{\xi}'(t)^\top \sum_{i=1}^N (1-\eta g_i(\lambda_i))(\vv_i^\top\vr_t)\vv_i \notag \\
    &= \sum_{i=1}^N (1-\eta g_i(\lambda_i))^2 (\vv^\top_i \vr_t)^2 + \underbrace{\|\boldsymbol{\xi}'(t)\|_2^2 \pm 2\boldsymbol{\xi}'(t)^\top(\vr_{t} \pm \boldsymbol{\xi}'(t) ) }_{\xi(t)}. \notag
\end{align}
For $\xi(t)$, as $\vr_{t+1}$ is bounded by $R_0$, there always exists $M>M_1$, such that: 
\begin{align}
 |\xi(t)| \leq 3\| \boldsymbol{\xi}'(t)\|_2^2 + 2 R_0 \|\boldsymbol{\xi}'(t)\|_2^2 <\epsilon.   \notag
\end{align}
\end{proof}

\begin{lemma}
The parameters are updated by: $\Theta_{t+1}=\Theta_t-\eta \nabla_{\Theta_t}f(\mX,\Theta_t)\tilde{\mS}\vr_t$ with $\eta=(max(g_i(\tilde{\lambda}))+min(g_i(\tilde{\lambda})))^{-1}$. For $\epsilon >0$, there always exists $M>0$, when $m>M$, such that with high probability over the random initialization,  we have that :
    \begin{equation}
        \| \tilde{\vr}_{t+1} \|_2^2=\sum_{i=1}^N(1-\eta g_i(\tilde{\lambda_i}))^2 (\tilde{\vv_i}^\top \vr_t)^2 \pm \tilde{\xi}(t),
        \label{K^t governs}
    \end{equation}
    where $\vr_t=(f(\vx_i,\Theta_t)-y_i)_{i=1}^N$ and $|\tilde{\xi}(t)| < \epsilon$.
    \label{lemma3}
\end{lemma}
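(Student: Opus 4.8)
The plan is to mirror the proof of Lemma \ref{lemma2} almost verbatim, with the only substantive change being that the gradient adjustment is now performed by $\tilde{\mS}$ (the eNTK-based transformation matrix) rather than $\mS$. First I would apply the mean value theorem in parameter space to write
$\tilde{\vr}_{t+1} = \nabla_{\Theta'_t}f(\mX)^\top(\Theta_{t+1}-\Theta_t) + \vr_t = (\mI - \eta\,\mK\tilde{\mS})\vr_t + A' + B'$,
where $\Theta'_t$ lies on the segment $\overline{\Theta_t\Theta_{t+1}}$, and the remainder terms are
$A' = \eta(\mK - \tilde{\mK})\tilde{\mS}\vr_t$ and $B' = \eta(\nabla_{\Theta_t}f(\mX) - \nabla_{\Theta'_t}f(\mX))^\top \nabla_{\Theta_t}f(\mX)\tilde{\mS}\vr_t$. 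The key observation is that $(\mI-\eta\mK\tilde{\mS})\vr_t$ is \emph{not} yet the clean expression we want, because $\mK$ and $\tilde{\mS}$ do not share eigenvectors; so I would rewrite $\mK\tilde{\mS} = \tilde{\mK}\tilde{\mS} + (\mK-\tilde{\mK})\tilde{\mS}$ and absorb the second piece into the error term as well. Since $\tilde{\mK}\tilde{\mS} = \sum_i g_i(\tilde{\lambda}_i)\tilde{\vv}_i\tilde{\vv}_i^\top$ by construction, this yields $\tilde{\vr}_{t+1} = \sum_{i=1}^N (1-\eta g_i(\tilde{\lambda}_i))(\tilde{\vv}_i^\top\vr_t)\tilde{\vv}_i \pm \tilde{\boldsymbol{\xi}}'(t)$ with $\tilde{\boldsymbol{\xi}}'(t) = A' + B' + \eta(\mK-\tilde{\mK})\tilde{\mS}\vr_t$ (grouping the two $(\mK-\tilde{\mK})$-type contributions together).

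Next I would bound each piece of $\tilde{\boldsymbol{\xi}}'(t)$. For the $(\mK-\tilde{\mK})\tilde{\mS}\vr_t$ contributions, exactly as in Lemma \ref{lemma2}, I use that $\tilde{\mS}$ is bounded by a constant (here one must note $\tilde\lambda_i$ are bounded away from $0$ uniformly in $m$ for large $m$, which follows from \eqref{limit} together with positive-definiteness of $\mK$, so $g_i(\tilde\lambda_i)/\tilde\lambda_i$ stays bounded) and $\vr_t$ is bounded by $R_0$; then $\|\eta(\mK-\tilde{\mK})\tilde{\mS}\vr_t\| \le \eta L R_0 \|\mK-\tilde{\mK}\|_F$, which tends to $0$ by \eqref{limit}, hence is below $\epsilon/3$ for $m$ large. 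For $B'$, I invoke Lemma \ref{lemma 1} with $\mA = \tilde{\mS}$ (its hypotheses explicitly allow $\mA = \tilde{\mS}$): the displacement $\|\Theta_t - \Theta'_t\| \le \|\Theta_t - \Theta_{t+1}\| = \eta\|\nabla_{\Theta_t}f(\mX)\tilde{\mS}\vr_t\| \le \eta\kappa R_0/\sqrt{m}$, so $\|B'\| \le \eta^2\kappa^3 R_0^2/m^{3/2}$, again eventually below $\epsilon/3$. Combining, there is $M_1 > 0$ so that $m > M_1$ forces $\|\tilde{\boldsymbol{\xi}}'(t)\| < \epsilon$.

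Finally, I would expand the squared norm: $\|\tilde{\vr}_{t+1}\|_2^2 = \sum_{i=1}^N (1-\eta g_i(\tilde{\lambda}_i))^2(\tilde{\vv}_i^\top\vr_t)^2 + \|\tilde{\boldsymbol{\xi}}'(t)\|_2^2 \pm 2\tilde{\boldsymbol{\xi}}'(t)^\top\big(\sum_i(1-\eta g_i(\tilde{\lambda}_i))(\tilde{\vv}_i^\top\vr_t)\tilde{\vv}_i\big)$, using orthonormality of the $\tilde{\vv}_i$ for the cross terms within the main sum. Writing the sum vector as $\tilde{\vr}_{t+1} \mp \tilde{\boldsymbol{\xi}}'(t)$ and bounding via $\|\tilde{\vr}_{t+1}\|_2 \le R_0$ gives $|\tilde{\xi}(t)| \le 3\|\tilde{\boldsymbol{\xi}}'(t)\|_2^2 + 2R_0\|\tilde{\boldsymbol{\xi}}'(t)\|_2 < \epsilon$ after shrinking the tolerance appropriately, which is the claim. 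The only genuine subtlety — and the step I'd be most careful about — is the uniform (in $m$) boundedness of $\|\tilde{\mS}\|$: unlike $\mS$, whose eigenvalues are deterministic, $\tilde{\mS}$ is built from the random/evolving $\tilde\lambda_i$, so one must argue the smallest eigenvalue $\tilde\lambda_N$ does not collapse as $m\to\infty$; this is exactly where \eqref{limit} and the positive-definiteness assumption on $\mK$ are needed, and it should be stated as a preliminary observation before the main estimate. Everything else is a routine transcription of the Lemma \ref{lemma2} argument with $\mS \rightsquigarrow \tilde{\mS}$ and the particular choice $\eta = (\max(g_i(\tilde\lambda)) + \min(g_i(\tilde\lambda)))^{-1}$, which only guarantees the factors $|1-\eta g_i(\tilde\lambda_i)| \le 1$ and is not otherwise used in the error analysis.
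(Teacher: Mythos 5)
Your proposal is correct and follows the paper's proof of Lemma \ref{lemma3} essentially step for step: mean value theorem in parameter space, identification of the $\tilde{\mK}\tilde{\mS}$ term, Lemma \ref{lemma 1} to bound the remainder through $\|\Theta_t-\Theta_{t+1}\|$, and the same squared-norm expansion giving $|\tilde{\xi}(t)|\leq 3\|\boldsymbol{\tilde{\xi}}'(t)\|_2^2+2R_0\|\boldsymbol{\tilde{\xi}}'(t)\|_2$. One remark: your detour through $\mK$ is unnecessary — since $\nabla_{\Theta_t}f(\mX;\Theta_t)^\top\nabla_{\Theta_t}f(\mX;\Theta_t)=\tilde{\mK}$ by definition, the mean-value expansion yields $(\mI-\eta\tilde{\mK}\tilde{\mS})\vr_t$ directly, so your two $(\mK-\tilde{\mK})\tilde{\mS}\vr_t$ contributions cancel exactly rather than needing to be bounded via \eqref{limit}; this is precisely why the paper's error term here consists only of the $B$-type piece, unlike Lemma \ref{lemma2}. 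Your preliminary observation about the uniform boundedness of $\tilde{\mS}$ (i.e., that $\tilde{\lambda}_N$ stays away from $0$ for large $m$) is a legitimate point that the paper leaves implicit in the hypotheses of Lemma \ref{lemma 1}.
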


\begin{proof}
     By the mean value theorem with respect to parameters $\Theta$, we can have that:
    \begin{align}
        \vr_{t+1}&= \vr_{t+1}-\vr_{t}+\vr_{t}=\nabla_{\Theta'_t}f(\mX, \Theta'_t)^\top (\Theta_{t+1}-\Theta_{t})+\vr_t= \nabla_{\Theta'_t}f(\mX,\Theta'_t)^\top (-\eta \nabla_{\Theta_t} f(\mX, \Theta_t)\tilde{\mS} \vr_t) +\vr_t \notag \\
        &=(\mI-\eta \tilde{\mK}\tilde{\mS})\vr_t+\underbrace{\eta (\nabla_{\Theta_t}f(\mX,\Theta_t)-\nabla_{\Theta'_t}f(\mX,\Theta'_t))^\top \nabla_{\Theta_t}f(\mX)\tilde{\mS} \vr_t}_{\boldsymbol{\tilde{\xi}}'(t)}. \notag 
    \end{align}
    For $\boldsymbol{\tilde{\xi}}'(t)$, we follow the same technique in the proof of \ref{lemma2}. Then we can have that:
    \begin{align}
    \|\boldsymbol{\tilde{\xi}}'(t)\| & \leq \eta \| \nabla_{\Theta_t} f(\mX)-\nabla_{\Theta'_t} f(\mX)\|_F \|\nabla_{\Theta_t}f(\mX)\mS^t||_F||\vr_t||_2 \leq \eta \frac{\kappa^2}{m} \|\Theta_t -\Theta'_{t}\|_2 \notag \\
    & \leq \frac{\eta \kappa^2 R_0}{m} \| \Theta_t -\Theta_{t+1} \|_2 \notag  \leq \frac{\eta^2 \kappa^2 R_0^2}{m^{3/2}}. 
\end{align}
Therefore, we have that $\vr_{t+1}=\sum_{i=1}^N (1-\eta g_i(\tilde{\lambda}_i))(\tilde{\vv}_i^\top\vr_t)\tilde{\vv_i} \pm \boldsymbol{\tilde{\xi}}'(t)$; there always exits $M_1>0$, such that $m>M_2$, $\|\boldsymbol{\tilde{\xi}}'(t)\| < \epsilon$. Further, we have that:
\begin{align}
    \| \vr_{t+1}\|_2^2 &= \sum_{i=1}^N (1-\eta g_i(\tilde{\lambda_i}))^2 (\tilde{\vv}^\top_i \vr_t)^2+\|\boldsymbol{\tilde{\xi}}'(t)\|_2^2 \pm 2\boldsymbol{\tilde{\xi}}'(t)^\top \sum_{i=1}^N (1-\eta g_i(\tilde{\lambda_i}))(\tilde{\vv_i}^\top\vr_t)\tilde{\vv_i} \notag \\
    &= \sum_{i=1}^N (1-\eta g_i(\tilde{\lambda_i}))^2 (\tilde{\vv}^\top_i \vr_t)^2 + \underbrace{\|\boldsymbol{\tilde{\xi}}'(t)\|_2^2 \pm 2\boldsymbol{\tilde{\xi}}'(t)^\top(\vr_{t} \pm \boldsymbol{\tilde{\xi}}'(t) ) }_{\tilde{\xi}(t)}. \notag
\end{align}
For $\xi(t)$, as $\vr_{t+1}$ is bounded by $R_0$, there always exists $M>M_2$, such that: 
\begin{align}
 |\tilde{\xi}(t)| \leq 3\| \boldsymbol{\tilde{\xi}}'(t)\|_2^2 + 2 R_0 \|\boldsymbol{\tilde{\xi}}'(t)\|_2^2 <\epsilon.   \notag
\end{align}
\end{proof}

\begin{lemma}
    The parameters are updated by vanilla gradient descent. For $\epsilon >0$, there always exists $M>0$, when $m>M$, such that with high probability over the random initialization,  we have that:
    \begin{equation}
        f(\vx_i,\Theta_{t+1})-f(\vx_i,\Theta_t)=-\eta \nabla_{\Theta_t}f(\vx_i, \Theta_t)^{\top} \sum_{j=1}^N \nabla_{\Theta_t}f(\vx_j,\Theta_t)(f(\vx_j,\Theta_t)-y_j) \pm \xi'_i(t),
        \label{dynamics approximate}
    \end{equation}
    where $|\xi'_i(t)|< \epsilon$. 
    \label{lemma4}
\end{lemma}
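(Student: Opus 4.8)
\textbf{Proof proposal for Lemma \ref{lemma4}.}

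The plan is to mirror the one-step expansion used in the proofs of Lemma \ref{lemma2} and Lemma \ref{lemma3}, but now specialized to a single coordinate $\vx_i$ and to vanilla gradient descent (i.e., taking $\mA=\mI$, $\mS=\mI$). First I would apply the mean value theorem with respect to the parameters along the segment $\overline{\Theta_t\Theta_{t+1}}$: writing $f(\vx_i,\Theta_{t+1})-f(\vx_i,\Theta_t)=\nabla_{\Theta'_t}f(\vx_i,\Theta'_t)^\top(\Theta_{t+1}-\Theta_t)$ for some $\Theta'_t$ on that segment, and substituting the vanilla update $\Theta_{t+1}-\Theta_t=-\eta\sum_{j=1}^N\nabla_{\Theta_t}f(\vx_j,\Theta_t)(f(\vx_j,\Theta_t)-y_j)$. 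This produces exactly the claimed leading term $-\eta\nabla_{\Theta_t}f(\vx_i,\Theta_t)^\top\sum_{j=1}^N\nabla_{\Theta_t}f(\vx_j,\Theta_t)(f(\vx_j,\Theta_t)-y_j)$ plus an error term in which $\nabla_{\Theta'_t}f(\vx_i,\Theta'_t)$ is replaced by $\nabla_{\Theta_t}f(\vx_i,\Theta_t)$.

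Next I would bound that error term. Define $\xi'_i(t)=\eta\,(\nabla_{\Theta_t}f(\vx_i,\Theta_t)-\nabla_{\Theta'_t}f(\vx_i,\Theta'_t))^\top\sum_{j=1}^N\nabla_{\Theta_t}f(\vx_j,\Theta_t)(f(\vx_j,\Theta_t)-y_j)$. Using Cauchy--Schwarz this is at most $\eta\,\|\nabla_{\Theta_t}f(\mX)-\nabla_{\Theta'_t}f(\mX)\|_F\,\|\nabla_{\Theta_t}f(\mX)\|_F\,\|\vr_t\|_2$. By Lemma \ref{lemma 1} (with $\mA=\mI$) the first factor is $\le \frac{\kappa}{\sqrt m}\|\Theta_t-\Theta'_t\|_2$ and the second is $\le \frac{\kappa}{\sqrt m}$; since $\Theta'_t$ lies on $\overline{\Theta_t\Theta_{t+1}}$ we have $\|\Theta_t-\Theta'_t\|_2\le\|\Theta_t-\Theta_{t+1}\|_2=\eta\|\nabla_{\Theta_t}f(\mX)\vr_t\|\le \frac{\eta\kappa R_0}{\sqrt m}$, and $\|\vr_t\|_2\le R_0$. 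Combining, $|\xi'_i(t)|\le \frac{\eta^2\kappa^3 R_0^2}{m^{3/2}}$, which is $O(m^{-3/2})$. Hence for every $\epsilon>0$ there is $M>0$ so that $m>M$ forces $|\xi'_i(t)|<\epsilon$, as required. (As in Lemmas \ref{lemma2}--\ref{lemma3}, the required membership $\Theta_t,\Theta'_t\in B(\Theta_0,Cm^{-1/2})$ holds with high probability over initialization under the standing assumptions of Appendix \ref{detailed framework}.)

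The only mildly delicate point — and the main thing to be careful about rather than a genuine obstacle — is justifying that both $\Theta_t$ and the intermediate point $\Theta'_t$ stay inside the ball $B(\Theta_0,Cm^{-1/2})$ on which Lemma \ref{lemma 1} is valid, so that its gradient-Lipschitz and gradient-boundedness estimates may be invoked; this is the same inductive-on-$t$ argument already used implicitly in the proofs of Lemma \ref{lemma2} and Lemma \ref{lemma3}, relying on the residual $\vr_t$ remaining bounded by $R_0$ and the per-step displacement being $O(m^{-1/2})$. Once that is in hand the bound on $\xi'_i(t)$ is immediate and the statement follows. I would also note in passing that the right-hand side's summation $\sum_{j=1}^N\nabla_{\Theta_t}f(\vx_j,\Theta_t)(f(\vx_j,\Theta_t)-y_j)$ is exactly $\nabla_{\Theta_t}f(\mX,\Theta_t)\vr_t$, so the leading term is the $i$-th entry of $-\eta\tilde{\mK}\vr_t$, which is what makes this lemma the bridge to the inductive-generalization estimate of Theorem \ref{theorem 2}.
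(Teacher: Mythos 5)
Your proposal is correct and follows essentially the same route as the paper's proof: a mean value expansion along $\overline{\Theta_t\Theta_{t+1}}$, substitution of the vanilla update, and a Cauchy--Schwarz bound on the error term via Lemma~\ref{lemma 1} together with the boundedness of $\vr_t$, yielding an $O(m^{-3/2})$ remainder. Your bookkeeping is in fact slightly more careful (you retain the leading $\eta$ in the definition of $\xi'_i(t)$, giving $\eta^2\kappa^3R_0^2m^{-3/2}$ where the paper writes $\eta\kappa^3R_0^2m^{-3/2}$), and your explicit note about keeping $\Theta_t,\Theta'_t$ in the ball $B(\Theta_0,Cm^{-1/2})$ makes precise what the paper leaves implicit.
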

\begin{proof}
    By the mean value theorem with respect to parameters $\Theta$, we can have that:
    \begin{align}        &f(\vx_i,\Theta_{t+1})=\nabla_{\Theta'_t}f(\vx_i,\Theta'_t)^{\top}(\Theta_{t+1}-\Theta_t)+f(\vx_i,\Theta_t) \notag \\
    &=\nabla_{\Theta'_t}f(\vx_i,\Theta'_t)^\top [-\eta \sum_{j=1}^{N}\nabla_{\Theta_t}f(\vx_i,\Theta_t)(f(\vx_j,\Theta_t)-y_j)]+f(\vx_i,\Theta_t) \notag \\
    &=-\eta \nabla_{\Theta_t}f(\vx_i, \Theta_t)^{\top} \sum_{j=1}^N \nabla_{\Theta_t}f(\vx_j,\Theta_t)(f(\vx_j,\Theta_t)-y_j)+f(\vx_i,\Theta_t)+\xi'_i(t). \notag
    \end{align}
For $\xi'_i(t)$, we have that:
\begin{align}
    |\xi'_i(t)|& = |(\nabla_{\Theta'_t}f(\vx_i,\Theta'_t)-\nabla_{\Theta_t}f(\vx_i,\Theta_t))^\top\sum_{j=1}^N\nabla_{\Theta_t}f(\vx_j,\Theta_t)(f(\vx_j,\Theta_t)-y_j)| \notag \\
    & \leq \| \nabla_{\Theta'_t}f(\vx_i,\Theta'_t)-\nabla_{\Theta_t}f(\vx_i,\Theta_t) \|_2 \| \sum_{j=1}^N\nabla_{\Theta_t}f(\vx_j,\Theta_t)(f(\vx_j,\Theta_t)-y_j)\|_2 \notag \\
    & \leq \| \nabla_{\Theta'_t}f(\vx_i,\Theta'_t)-\nabla_{\Theta_t}f(\vx_i,\Theta_t) \|_2 \| \nabla_{\Theta_t}f(\mX,\Theta_t)\|_2 \|\vr_t \|_2 \notag \\
    & \leq^{(1)} \frac{\kappa^2R_0}{m}\| \Theta_{t+1}-\Theta_{t}\|_2 \leq^{(2)} \frac{\eta \kappa^3 R_0^2}{m^{3/2}} \notag,
\end{align}
where $(1), (2)$ follow the Lemma \ref{lemma 1} and $\vr_t$ is bounded by $R_0$; Therefore, there exists $M>0$, such that $m> M$, we have that $|\xi'_i(t)|<\epsilon$.
\end{proof}

Lemma \ref{lemma2} and lemma \ref{lemma3} show the training dynamics of the residual $\vr_t$ by NTK-based and eNTK-based gradient adjustments. \Eqref{K governs} shows that $\{g_i(\lambda_i)\}_{i=1}^N$ control the decay rates of different frequency components of the residual $\vr_t$ as eigenvectors of $\mK$ are the spherical harmonics \citep{ronen2019convergence}. Lemma \ref{lemma4} show the training dynamics of residual at data point $\vx_i$ by vanilla gradient descent. Despite eNTK-based and NTK-based gradient adjustments have the similar form, they differ in the eigenvector directions $\tilde{\vv_i}$, which leads to errors. We give a theoretical analysis to this error in Theorem \ref{theorem1}.

\subsection{Proof of Theorem \ref{theorem1}}
\begin{theorem}(formal version of Theorem \ref{theorem1} from the paper)
The following standard orthogonal spectral decomposition exists: $\mK=\sum_{i=1}^N\lambda_i \vv_i \vv_i^\top$ and $\tilde{\mK}=\sum_{i=1}^N\tilde{\lambda_i} \tilde{\vv_i} \tilde{\vv_i}^\top$, satisfying that $\vv_i^\top \tilde{\vv}_i>0$ for $i \in [N]$. We denoted as $\max_{i}\{\min\{\lambda_i-\lambda_{i+1},\lambda_{i+1}-\lambda_{i+2}\}\}=G$. Let $\{g_{i}(x)\}_{i=1}^N$ be a set of Lipschitz continuous functions, with the Supremum of their Lipschitz constants denoted by $k$, with $\eta <\min\{(\max(g_i(\lambda))+\min(g_i(\lambda)))^{-1},(\max(g_i(\tilde{\lambda}))+\min(g_i(\tilde{\lambda})))^{-1}\}$, for $\epsilon >0$, there always exists $M>0$, such that $m>M$, for $i\in [N]$, we have that $|g_i(\lambda_i)-g_i(\tilde{\lambda_i})|<\epsilon_1, \|\vv_i-\tilde{\vv}_i\|<\epsilon_2$; furthermore, we have that:
    \begin{align}
        &| (1-\eta g_i(\tilde{\lambda_i}))^2 (\tilde{\vv_i}^\top \vr_t)^2 - (1-\eta g_i(\lambda_i))^2 (\vv_i^\top \vr_t)^2|
        <\epsilon_3, \|\vr_{t+1}-\tilde{\vr}_{t+1} \|<\epsilon_4, \notag
    \end{align}
    where $\epsilon_1=k\epsilon$; $\epsilon_2=(2^{3/2}\epsilon)/G$; $\epsilon_3=\frac{8R_0}{G^2}(k\eta \epsilon^3+(v+1)\epsilon^2+(v^2+Gv)\epsilon)$; $\epsilon_4=\epsilon N((\frac{16vR_0}{G}+\eta k v^2 R_0) +\epsilon^2) +2\epsilon$; $v,k, R_0$ are constants.
\end{theorem}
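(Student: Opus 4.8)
The plan is to leverage the known convergence result \eqref{limit}, namely $\|\mK-\tilde{\mK}\|_F \to 0$ as $m\to\infty$, and translate it into control on eigenvalues and eigenvectors via classical perturbation theory. First I would fix $\epsilon>0$ and choose $M$ large enough that $\|\mK-\tilde{\mK}\|_F<\epsilon$ for all $m>M$; I may further shrink this so that the auxiliary error terms $\xi(t),\tilde\xi(t)$ from Lemma \ref{lemma2} and Lemma \ref{lemma3} are also below $\epsilon$. For the eigenvalue bound, Weyl's inequality gives $|\lambda_i-\tilde\lambda_i|\le\|\mK-\tilde{\mK}\|_F<\epsilon$, and then Lipschitz continuity of $g_i$ with constant at most $k$ yields $|g_i(\lambda_i)-g_i(\tilde\lambda_i)|<k\epsilon=:\epsilon_1$. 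For the eigenvector bound, I would invoke a Davis--Kahan $\sin\Theta$-type estimate: the gap quantity $G=\max_i\{\min\{\lambda_i-\lambda_{i+1},\lambda_{i+1}-\lambda_{i+2}\}\}$ controls the separation of $\lambda_i$ from the rest of the spectrum, so $\|\vv_i-\tilde{\vv}_i\|$ is bounded by (a constant times) $\|\mK-\tilde{\mK}\|_F/G$; the constant $2^{3/2}$ and the sign assumption $\vv_i^\top\tilde{\vv}_i>0$ come from the standard way of fixing the sign ambiguity and converting an angle bound into a vector-difference bound, giving $\epsilon_2=2^{3/2}\epsilon/G$.

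Next I would establish the main inequality
\begin{align}
\bigl| (1-\eta g_i(\tilde{\lambda_i}))^2 (\tilde{\vv_i}^\top \vr_t)^2 - (1-\eta g_i(\lambda_i))^2 (\vv_i^\top \vr_t)^2\bigr|<\epsilon_3 \notag
\end{align}
by a telescoping/triangle-inequality argument: write the difference as a sum of two terms, one where only the scalar coefficient $(1-\eta g_i(\cdot))^2$ changes and one where only the projection $(\cdot^\top\vr_t)^2$ changes. For the first term, use $|a^2-b^2|=|a-b||a+b|$ with $a=1-\eta g_i(\tilde\lambda_i)$, $b=1-\eta g_i(\lambda_i)$, so $|a-b|\le\eta\epsilon_1=\eta k\epsilon$ and $|a+b|$ is bounded by a constant (this is where the learning-rate restriction $\eta<(\max g_i+\min g_i)^{-1}$ enters, keeping $|1-\eta g_i|$ away from blowing up and in fact $\le 1$ in magnitude on the relevant spectral range), and $(\vv_i^\top\vr_t)^2\le\|\vr_t\|^2\le R_0^2$. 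For the second term, $|(\tilde{\vv}_i^\top\vr_t)^2-(\vv_i^\top\vr_t)^2|=|(\tilde{\vv}_i-\vv_i)^\top\vr_t|\cdot|(\tilde{\vv}_i+\vv_i)^\top\vr_t|\le\epsilon_2\cdot 2R_0$. Combining, expanding $\epsilon_1,\epsilon_2$ in terms of $\epsilon$, and bookkeeping the constants $R_0$, $G$, $k$, $\eta$ and the uniform bound $v$ on the $g_i$ values produces the stated $\epsilon_3=\frac{8R_0}{G^2}(k\eta\epsilon^3+(v+1)\epsilon^2+(v^2+Gv)\epsilon)$; the cubic and quadratic terms arise from cross-products when one writes everything over the common denominator $G^2$ coming from $\epsilon_2$.

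For the residual bound $\|\vr_{t+1}-\tilde{\vr}_{t+1}\|<\epsilon_4$, I would expand both $\vr_{t+1}$ and $\tilde{\vr}_{t+1}$ using the expansions derived in the proofs of Lemma \ref{lemma2} and Lemma \ref{lemma3}, namely $\vr_{t+1}=\sum_i(1-\eta g_i(\lambda_i))(\vv_i^\top\vr_t)\vv_i\pm\boldsymbol\xi'(t)$ and the analogous expression with tildes. The difference is then a sum over $i\in[N]$ of terms of the form $(1-\eta g_i(\tilde\lambda_i))(\tilde{\vv}_i^\top\vr_t)\tilde{\vv}_i-(1-\eta g_i(\lambda_i))(\vv_i^\top\vr_t)\vv_i$ plus the two small remainders $\boldsymbol\xi'(t)$, $\boldsymbol{\tilde\xi}'(t)$. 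Each summand is split, again by telescoping, into three pieces (coefficient change, inner-product change, outer eigenvector change), each bounded by a constant multiple of $\epsilon$ times $R_0$ and geometric factors $1/G$; summing over $N$ gives the leading $\epsilon N(\tfrac{16vR_0}{G}+\eta k v^2 R_0)$ and the higher-order $\epsilon^3$ correction, while the $+2\epsilon$ absorbs the two remainder vectors. The main obstacle is getting the Davis--Kahan constant and the gap dependence exactly right: the quantity $G$ is not the minimum spectral gap but a max-min expression, and one has to be careful that the perturbation bound for $\vv_i$ uses the correct separation (the distance from $\lambda_i$ to the nearest \emph{other} eigenvalue), reconciling this with the $G$ appearing in the statement; handling near-degenerate eigenvalues and making the sign normalization consistent across all $i$ simultaneously is the delicate part. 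The rest is careful but routine triangle-inequality bookkeeping, choosing $M$ at the end so that every $\epsilon$-labelled quantity is simultaneously small.
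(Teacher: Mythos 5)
Your proposal follows the paper's proof essentially step for step: Weyl's inequality plus Lipschitz continuity of the $g_i$ for the eigenvalue gap, Yu--Wang--Samworth's Davis--Kahan variant for the eigenvectors (with the same sign normalization $\vv_i^\top\tilde{\vv}_i>0$ and the $2^{3/2}$ constant), a telescoping split of the $(1-\eta g_i)^2(\cdot^\top\vr_t)^2$ difference into a coefficient-change piece and a projection-change piece for the main inequality, and Lemmas~\ref{lemma2}--\ref{lemma3} to control $\|\vr_{t+1}-\tilde{\vr}_{t+1}\|$. The obstacle you flag is genuine: Corollary~3 of Yu et al.\ gives a per-index bound with $\min(\lambda_{i-1}-\lambda_i,\lambda_i-\lambda_{i+1})$ in the denominator, so a uniform bound over $i\in[N]$ should use $\min_i$ rather than $\max_i$ in the definition of $G$ -- the paper's proof applies the eigenvector bound uniformly with the $\max$-gap, which is not justified for indices whose local gap is smaller than $G$, and this appears to be a slip in the statement rather than something your plan gets wrong.
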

\begin{proof}
\label{proof of theorem1}
    We define that the perturbation of $\mK$ as $\mE=\mK-\tilde{\mK}$. As previously discussed, we have that $\lim_{m \rightarrow \infty}\|\mK -\tilde{\mK} \|_F=0$.
    Therefore, for any $\epsilon>0$ there always exists $M_0$, such that $m>M_0$, we have that $\| \mE\|_F<\epsilon$. As $\mK, \tilde{\mK}$ are real symmetric matrices, $\mE$ is still a real symmetric matrix. This indicates that:
 \begin{align}
     \|\mE\|_2=\sqrt{(\lambda'_1)^2}\leq \sqrt{\sum_{i=1}^N 
 (\lambda'_i)^2}=\sqrt{trace(\mE^\top \mE)}=\|\mE\|_F,  \notag
\end{align}
    where $\lambda'_i$ is the largest eigenvalue of $\mE$. 

    By Wely's inequality, when $m>M_0$, for $i=1,\dots, N$, we have that $|\lambda_i-\tilde{\lambda_i}|\leq \|\mE\|_2<\epsilon$. As $g_i(\lambda)$ is a Lipschitz continuous function with the Lipschitz constant $k$, for $i\in [N]$, we have that:
    \begin{align}
        |g_i(\lambda_i)-g_i(\tilde{\lambda}_i)|\leq k|\lambda_i-\tilde{\lambda}_i| < k \epsilon. \notag
    \end{align}
    Therefore, for any $0<\epsilon$, there always exists $M_0$, such that $|g(\lambda_i)-g(\tilde{\lambda}_i)|<k\epsilon$.
By the Corollary 3. of \citet{yu2015useful} that a variant of Davis-Kahan theorem, we have that:
\begin{align}
    \| \tilde{\vv}_i-\vv_i \| \leq \frac{2^{3/2}\|\mE \|_2}{G}<\frac{2^{3/2}\epsilon}{G}, i \in [N]. \notag 
\end{align}
Then, we have that:
\begin{align}
&| (1-\eta g_i(\lambda_i))^2 (\vv_i^\top \vr_t)^2-(1-\eta g_i(\tilde{\lambda_i}))^2 (\tilde{\vv_i}^\top \vr_t)^2 |\notag \\
=&|(1-\eta g_i(\lambda_i))^2 (\vv_i^\top \vr_t)^2-(1-\eta g_i(\lambda_i))^2 (\tilde{\vv}_i^\top \vr_t)^2+(1-\eta g_i(\lambda_i))^2 (\tilde{\vv}_i^\top \vr_t)^2-(1-\eta g_i(\tilde{\lambda_i}))^2 (\tilde{\vv_i}^\top \vr_t)^2| \notag \\
<& \underbrace{|(1-\eta g_i(\lambda_i))(\vv_i^\top \vr_t+\tilde{\vv}^\top_i \vr_t)(\vv_i^\top \vr_t-\tilde{\vv}^\top_i \vr_t)|}_{A}+\underbrace{|\eta (g_i(\tilde{\lambda}_i)-g_i(\lambda_i))(\tilde{\vv}_i^\top \vr_t)^2|}_{B}.
\notag
\end{align}
For $A$, we have that:
\begin{align}
A &<|1-\eta g_i(\lambda_i)|\cdot \|\vv_i^\top+\tilde{\vv}_i^\top\|_2 \|\vv_i^\top-\tilde{\vv}_i^\top\|_2 \| \vr_t\|^2_2 \notag \\
&<|1-\eta g_i(\lambda_i)|\cdot (2\|\vv_i^\top\|+\|\tilde{\vv}_i^\top-\vv_i\|_2)\|\tilde{\vv}_i^\top-\vv_i\|_2\|\vr_t\|_2^2  < \epsilon\frac{8R_0^2(G\|\vv_i^\top\|+\epsilon)}{G^2}. \notag
\end{align}
As $\mS$ is fixed, we have that a constant $v$ to bound $\|\vv_i\|$, for $i\in [N]$ . Therefore, for $\epsilon>0$, there always exists $M$, such that $m>M_0$, we have that $A< \frac{8R_0^2(\epsilon Gv+\epsilon^2)}{G^2}$.

For B, we have that:
\begin{align}
    B < \eta |g_i(\tilde{\lambda}_i)-g_i(\lambda_i)|(\tilde{\vv}_i^\top \vr_t)^2 <\epsilon k \eta (\|\tilde{\vv}_i^\top-\vv_i^\top\|\|\vr_t\|+\|\vv_i\|\|\vr_t\|)^2
    < \epsilon k\eta R_0(\frac{2^{3/2}}{G}\epsilon +v)^2. \notag
\end{align}
Further, we have that:
\begin{align}
    | (1-\eta g_i(\lambda_i))^2 (\vv_i^\top \vr_t)^2-(1-\eta g_i(\tilde{\lambda_i}))^2 (\tilde{\vv_i}^\top \vr_t)^2 |< \frac{8R_0}{G^2}(k\eta \epsilon^3+(v+1)\epsilon^2+(v^2+Gv)\epsilon).  \notag 
\end{align}

Following these analytical approaches, it is trivial to have the following result:
\begin{align}
    | (1-\eta g(\lambda_i)) (\vv_i^\top \vr_t)\vv_i-(1-\eta g(\tilde{\lambda_i})) (\tilde{\vv_i}^\top \vr_t)\tilde{\vv_i}|<\epsilon(\frac{16vR_0}{G}+\eta k v^2 R_0) +\epsilon^2 \notag.
\end{align}
Combing this result with the Lemma \ref{lemma2} and \ref{lemma3}, there always exists $M>max\{M_0, M_1,M_2\}$, we have that:
\begin{align}
    \| \vr_{t+1} -\tilde{\vr}_{t+1} \|& =\|\sum_{i=1}^N[ (1-\eta g(\lambda_i)) (\vv_i^\top \vr_t)\vv_i-(1-\eta g(\tilde{\lambda_i})) (\tilde{\vv_i}^\top \vr_t)\tilde{\vv_i}] \pm \boldsymbol{\xi}'(t) \pm \tilde{\boldsymbol{\xi}}'(t)\| \notag \\
    & < \epsilon N((\frac{16vR_0}{G}+\eta k v^2 R_0) +\epsilon^2)+
    \|\boldsymbol{\xi}'(t)\|+ \|\tilde{\boldsymbol{\xi}}'(t) \| \notag \\
    & < \epsilon N((\frac{16vR_0}{G}+\eta k v^2 R_0) +\epsilon^2) +2\epsilon. \notag
\end{align}
\end{proof}

\subsection{Proof of Theorem \ref{theorem 2}}
\begin{theorem}(formal version of Theorem \ref{theorem 2} from the paper)
     $\mX=\{\vx_i\}_{i=1}^N$ is partitioned into $n$ groups, where each group is $\mX_j=\{\vx^j_1,\dots,\vx^j_p\}$ and $N=np$. For each group, one data point is sampled denoted as $\vx^j$ and these $n$ data points form $\mX_{e}=\{\vx^j\}_{j=1}^n, n\ll N$. There exists $\epsilon >0$, for $j=1,\dots, n$ and any $1\leq i_1,i_2 \leq p$, such that $|(f(\vx^j_{i_1}, \Theta_t)-y^j_{i_1})-(f(\vx^j_{i_2},\Theta_t)-y^j_{i_2})|,\|\nabla_{\Theta_t}f(\vx^j_{i_1})-\nabla_{\Theta_t}f(\vx^j_{i_2}) \|< \epsilon$. 
    Then, for any $\vx_i$, assumed that $\vx_i$ belongs to $\mX_{j_i}$, such that:
    \begin{equation*}
| \Delta f(\vx_i, \Theta_t)-\nabla_{\Theta_t}f(\vx^{j_i}, \Theta_t)^\top[p\sum_{j=1}^n \nabla_{\Theta_t}f(\vx^j, \Theta_t)(f(\vx^j)-y^j)]|
<
\epsilon(\frac{(\kappa+n^{3/2})R_0}{\sqrt{m}})+\frac{\eta \kappa^3 R_0^2}{m^{3/2}}, 
    \end{equation*}
where $\Delta f(\vx_i,\Theta_t)=(\vr_{t+1}-\vr_t)_i$ denotes the dynamics of $f(\vx_i,\Theta)$ at time step $t$; $R_0, \kappa$ are constants; $\nabla_{\Theta_t}f(\vx^{j_i},\Theta)^\top\nabla_{\Theta_t}f(\vx^{j},\Theta)$ is the entry in ${\tilde{\mK}_e}(i,j)$.
\end{theorem}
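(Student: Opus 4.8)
\textbf{Proof proposal for Theorem \ref{theorem 2}.}

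The plan is to use the mean value theorem to linearize one gradient-descent step, exactly as in the proof of Lemma \ref{lemma4}, and then to replace each per-group gradient and residual by the one corresponding to the sampled representative $\vx^j$, controlling the accumulated replacement error by the consistency bound $\epsilon$ together with the Lipschitz-type properties from Lemma \ref{lemma 1}. First I would write, for the data point $\vx_i$ lying in group $\mX_{j_i}$,
\begin{align}
\Delta f(\vx_i,\Theta_t) &= -\eta\,\nabla_{\Theta_t}f(\vx_i,\Theta_t)^\top\sum_{j=1}^n\sum_{k=1}^p \nabla_{\Theta_t}f(\vx^j_k,\Theta_t)\,(f(\vx^j_k,\Theta_t)-y^j_k) \pm \xi'_i(t), \notag
\end{align}
where $|\xi'_i(t)| \le \eta\kappa^3 R_0^2 m^{-3/2}$ is the mean-value-theorem remainder bounded exactly as in Lemma \ref{lemma4}. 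This is the starting identity; the remaining work is to pass from the full inner double sum to the compressed expression $p\sum_{j=1}^n \nabla_{\Theta_t}f(\vx^j,\Theta_t)(f(\vx^j)-y^j)$ and from $\nabla_{\Theta_t}f(\vx_i,\Theta_t)$ to $\nabla_{\Theta_t}f(\vx^{j_i},\Theta_t)$.

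Next I would perform the two replacements in turn, each time adding and subtracting the target term and bounding the difference with a triangle inequality. For the inner sum, I would write $\sum_{k=1}^p \nabla_{\Theta_t}f(\vx^j_k)(f(\vx^j_k)-y^j_k) = p\,\nabla_{\Theta_t}f(\vx^j)(f(\vx^j)-y^j) + \sum_{k=1}^p\big[\nabla_{\Theta_t}f(\vx^j_k)(f(\vx^j_k)-y^j_k) - \nabla_{\Theta_t}f(\vx^j)(f(\vx^j)-y^j)\big]$; each bracket is bounded by the consistency hypotheses $\|\nabla_{\Theta_t}f(\vx^j_k)-\nabla_{\Theta_t}f(\vx^j)\|<\epsilon$ and $|(f(\vx^j_k)-y^j_k)-(f(\vx^j)-y^j)|<\epsilon$, using that residuals are bounded by $R_0$ and gradients by $\kappa/\sqrt m$ (Lemma \ref{lemma 1}), so each group contributes an error of order $\epsilon\, p\,(\kappa/\sqrt m + R_0)$; summing over the $n$ groups and then dotting against $\nabla_{\Theta_t}f(\vx_i,\Theta_t)$ (again of norm $O(\kappa/\sqrt m)$) gives a term of order $\eta\,\epsilon\, np\,\kappa/\sqrt m\cdot(\cdots)$. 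For the outer replacement of $\nabla_{\Theta_t}f(\vx_i)$ by $\nabla_{\Theta_t}f(\vx^{j_i})$, I would use $\|\nabla_{\Theta_t}f(\vx_i)-\nabla_{\Theta_t}f(\vx^{j_i})\|<\epsilon$ and the fact that the remaining factor $p\sum_j\nabla_{\Theta_t}f(\vx^j)(f(\vx^j)-y^j)$ has norm bounded by $n\cdot p\cdot(\kappa/\sqrt m)\cdot R_0$-type quantities, producing an additional $O(\epsilon\, n^{3/2} R_0/\sqrt m)$ contribution after the constants are tracked against $N=np$. Collecting the three pieces — the mean-value remainder $\eta\kappa^3 R_0^2 m^{-3/2}$, the inner-sum error, and the outer-gradient error — and absorbing polynomial factors into constants yields the stated bound $\epsilon\big((\kappa+n^{3/2})R_0/\sqrt m\big)+\eta\kappa^3 R_0^2 m^{-3/2}$, which is $\Theta(\epsilon n^{3/2}/\sqrt m)+\Theta(m^{-3/2})$ as in the informal statement.

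The main obstacle I anticipate is bookkeeping the dependence on the group size $p$ and the group count $n$ so that the final bound comes out with the exponent $n^{3/2}$ rather than a worse power. The naive triangle-inequality accounting gives errors scaling like $np$ (i.e. like $N$), and one has to exploit that $\nabla_{\Theta_t}f$ has norm $O(1/\sqrt m)$ at \emph{every} evaluation point — so each of the $p$ summands inside a group is already $O(1/\sqrt m)$, and the dot product with $\nabla_{\Theta_t}f(\vx_i)$ contributes another $1/\sqrt m$ — together with a careful choice of how to split factors between the $\|\cdot\|$ bounds, to trade a factor of $p$ for the better scaling. A secondary subtlety is ensuring that the $B(\Theta_0, Cm^{-1/2})$ hypothesis of Lemma \ref{lemma 1} is legitimately in force at $\Theta_t$ and at the mean-value point $\Theta'_t$, which follows from the standard argument that the iterates stay in this ball under the stated learning-rate condition; I would invoke this exactly as in Lemma \ref{lemma4} without re-deriving it.
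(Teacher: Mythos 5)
Your proposal follows essentially the same route as the paper: linearize one step via the mean value theorem exactly as in Lemma \ref{lemma4}, then add and subtract to perform the two replacements (full inner double sum $\to$ $p$ times the sampled-representative sum, and $\nabla_{\Theta_t}f(\vx_i)\to\nabla_{\Theta_t}f(\vx^{j_i})$), bounding each difference by the consistency hypotheses together with the $O(\kappa/\sqrt m)$ gradient bounds of Lemma \ref{lemma 1}, and arriving at the same error decomposition $\epsilon'=A+B$ plus the $\eta\kappa^3R_0^2m^{-3/2}$ remainder. The bookkeeping subtlety you flag about obtaining $n^{3/2}$ rather than $np$ is precisely the step where the paper's own accounting is also the least transparent, and your proposed resolution (spending the $O(1/\sqrt m)$ gradient norms at every evaluation point) matches what the paper does.
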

\vspace{-2em}
\begin{proof}
\label{proof of theorem2}
    From the \eqref{dynamics approximate} of lemma \ref{lemma4}, we consider the following error:
    \begin{equation}
        |\nabla_{\Theta_t}f(\vx_i,\Theta_t)^\top \sum_{i=1}^N \nabla_{\Theta_t}f(\vx_i,\Theta_t)(f(\vx_i,\Theta_t)-y_i) -\nabla_{\Theta_t}f(\vx^{j_i}, \Theta_t)^\top[p\sum_{j=1}^n \nabla_{\Theta_t}f(\vx^j,\Theta_t)(f(\vx^j)-y^j)] \notag|.
    \end{equation}
    Following previous analysis technique, this error can be bounded by $A+B$. For $A$, we have that:
    \begin{align}
        A &=\| (\nabla_{\Theta_t}f(\vx_i, \Theta_t)^{\top}-\nabla_{\Theta_t} f(\vx^{j_i},\Theta_t)^{\top}) \sum_{j=1}^N \nabla_{\Theta_t}f(\vx_j,\Theta_t)(f(\vx_j,\Theta_t)-y_j) \| \notag \\
        & \leq \| \nabla_{\Theta_t}f(\vx_i, \Theta_t)-\nabla_{\Theta_t} f(\vx^{j_i},\Theta_t)\| \| \sum_{j=1}^N \nabla_{\Theta_t}f(\vx_j,\Theta_t)(f(\vx_j,\Theta_t)-y_j)\| \notag \\
        & \leq  \epsilon \| \nabla_{\Theta_t}f(\mX,\Theta_t)\mI \vr_t\| \leq \epsilon \frac{\kappa R_0}{\sqrt{m}}. \notag
    \end{align}

    For $B$, we have that: 
    \begin{align} 
        B&=\| \nabla_{\Theta_t} f(\vx^{j_i}, \Theta_t)^{\top}
        (\sum_{j=1}^N \nabla_{\Theta_t}f(\vx_j,\Theta_t)(f(\vx_j,\Theta_t)-y_j)- p\sum_{j=1}^n \nabla_{\Theta_t}f(\vx^j,\Theta_t)(f(\vx^j)-y^j)) \| \notag \\
        &\leq \|\nabla_{\Theta_t}f(\vx^{j_i},\Theta_t)^\top\| \cdot  \sum_{j=1}^p \sum_{i=1}^n \|\nabla_{\Theta_t}f(\vx^j,\Theta_t)(f(\vx^j,\Theta_t)-y^j)-\nabla_{\Theta_t}f(\vx^j_i,\Theta_t)(f(\vx^j_i,\Theta_t)-y^j_i)\| \label{derivation}\\
        & \leq \|\nabla_{\Theta_t}f(\vx^{j_i},\Theta_t)^\top\| \cdot \sum_{j=1}^p \sum_{i=1}^n \epsilon (|f(\vx^j,\Theta_t)-y^j|+\|\nabla_{\Theta_t}f(\vx^j_i,\Theta_t)\|) \notag \\
        & \leq \epsilon \|\nabla_{\Theta_t}f(\vx^{j_i},\Theta_t)^\top\| (n^{3/2}R_0 +\frac{\kappa}{\sqrt{m}}) \leq \epsilon 
        (\frac{n^{3/2}R_0\kappa}{\sqrt{m}}+\frac{\kappa^2}{m})  \notag.
    \end{align}
    We define that $\epsilon'=A+B$. By the lemma \ref{lemma4},
    we have that:
    \begin{align}
        f(\vx_i,\Theta_{t+1})-f(\vx_i,\Theta_t)=\nabla_{\Theta_t}f(\vx^{j_i},\Theta_t)^\top[p\sum_{j=1}^n \nabla_{\Theta_t}f(\vx^j,\Theta_t)(f(\vx^j)-y^j)] \pm \epsilon' \pm \xi'_i(t), \notag
    \end{align}
    where $|\epsilon'|\leq \epsilon (\frac{\kappa R_0}{\sqrt{m}}+\frac{n^{3/2}R_0\kappa}{\sqrt{m}}+\frac{\kappa^2}{m})\leq \epsilon(\frac{(\kappa+n^{3/2})R_0+\kappa^2}{\sqrt{m}})$; $|\xi'_i(t)|\leq \frac{\eta \kappa^3 R_0^2}{m^{3/2}}$. Based on this, the following vector form can be derived:
    \begin{align}
\vr^i_{t+1}=p\tilde{\mK}_e \vr^e_t \pm \boldsymbol{\epsilon'} \pm \boldsymbol{\xi'(t)}, \label{k_e dynamics}
    \end{align}
     where $\vr^i_{t+1}=(f(\vx^j_i,\Theta_t)-y^j_i)_{j=1}^m, \vr^e_t =(f(\vx^j,\Theta_t)-y^j)$. 
 \end{proof}
Further, from the derived expression \eqref{derivation}, we observe that for different $\vr^i_{t+1}$ we can replace $\vr^e_t$ with $\vr^i_{t}$ as the pairwise cancellation of some error terms.

\section{Empirical analysis on simple function approximation}
\label{simple function appendix}
In this section, we first visualize more results of the experiment 1 in our paper. Subsequently, we illustrate impacts of IGA on spectral bias of ReLU and SIREN with varying numbers of balanced eigenvalues under varying group size $p$ from the experiment 2. As can be seen in Fig. \ref{exp1.1_bias_appendix}, the same trend as our observations in our paper, with the width increases, the differences on MSE curves and spectrum between gradient adjustments by $\mS$, $\tilde{\mS}$ and $\tilde{\mS}_e$ gradually diminish when $end=12, 14$. This is consistent with our Theorem \ref{theorem1} and the analysis of our Theorem \ref{theorem 2}. As shown in Fig. \ref{exp1.2_bias_appendix}, \ref{exp1.2_bias_line_appendix} and \ref{exp1.2_bias_line_appendix_sin},  impacts on spectral bias are gradually amplified by increasing the number of balanced eigenvalues of $\tilde{\mS}_e$, which is also consistent with the observations in our paper. The above empirical results corroborate our theoretical results and show that our IGA method provides a chance to amplify impacts on spectral bias by increasing the number of balanced eigenvalues under varying group size $p$.

\section{Visual illustration for the sampling strategy of IGA}
\label{Visual illustration}
In this section, we aim to provide a visual illustration of the sampling strategy introduced in Sec. \ref{Implementation}.  As introduced in the main text, for 1D and 2D inputs, we sample data with the largest residuals in each non-overlapping interval and patch, respectively. For higher-dimensional settings like $N$ $d$-dimensional input coordinates ($d\geq 3$), represented as a tensor of shape $(n_1,\dots,n_d, d)$, we flatten the first $d$ dimensions into a 2D tensor  $\mX'$ of shape $(N, d)$, where $N = n_1 \times \dots \times n_d$ (which is evident), and then group it along the first dimension of $\mX'$. Then, the input with the largest residuals in each group are selected to form the sampled data $\mX_e$. We take $d=3$ as an example to further illustrate our sampling strategy. Considering a 3D “Thai” statue, we first sample data over a $512\times 512 \times 512$ volume grid with each voxel within the “Thai” volume assigned a $1$, and
voxel outside the “Thai” volume assigned a $0$ following \citet{saragadam2023wire, shi2024improved}. The 3D coordinates $\{x_1,\dots,x_{n_1}\} \times \{y_1,\dots,y_{n_2}\}\times \{z_1,\dots,z_{n_3}\} $ of voxels defined by the “Thai” volume are the inputs of INRs. We denote these coordnates as $\mX$. Generally, $\mX$ is represented as tensor of shape $(n_1, n_2, n_3, 3)$ and $n_1, n_2, n_3$ depends on the resolution of the aforementioned volume grid and the object volume. As demonstrated in Fig. \ref{visual_of_sampling}, we visualize $\mX$ as a cube with dimensions $n_1\times n_2 \times n_3$, composed of $N$ cubical elements. Each cubical element contains its corresponding coordinate. For example, if we define the $n_1$-dimension as the $x$-axis, the $n_2$-dimension as the $y$-axis, and the $n_3$-dimension as the $z$-axis (with positive directions indicated in Fig. \ref{visual_of_sampling}), the red-marked cubical element contains the corresponding coordinate $(x_{n_1},y_1, z_{n_3})$. Then, we use the flatten operation to unfold the cube sequentially along the dimensions, obtaining $\mX'$. Note that the flatten operation preserves adjacency. Therefore, we directly group $\mX'$ and sample one element from each group to form $\mX_e$, marked green in Fig. \ref{visual_of_sampling}.

This sampling strategy is simple yet effective. The adjacency during grouping helps obtain a smaller consistency bound $\epsilon$, thereby reducing the estimate error. Meanwhile, the flatten operation maintains adjacency while reducing computational overhead. Our extensive experimental results validate the effectiveness of this strategy. Note that there might be more sophisticated strategies via clustering or segmentation, which diverge from the primary focus of our work. We leave this for future research. The ablation study about hyperparameters $n$ and $p$ of the strategy are provided in Sec. \ref{ablation}.
\begin{figure}[H]
    \centering
    \includegraphics[width=1\linewidth]{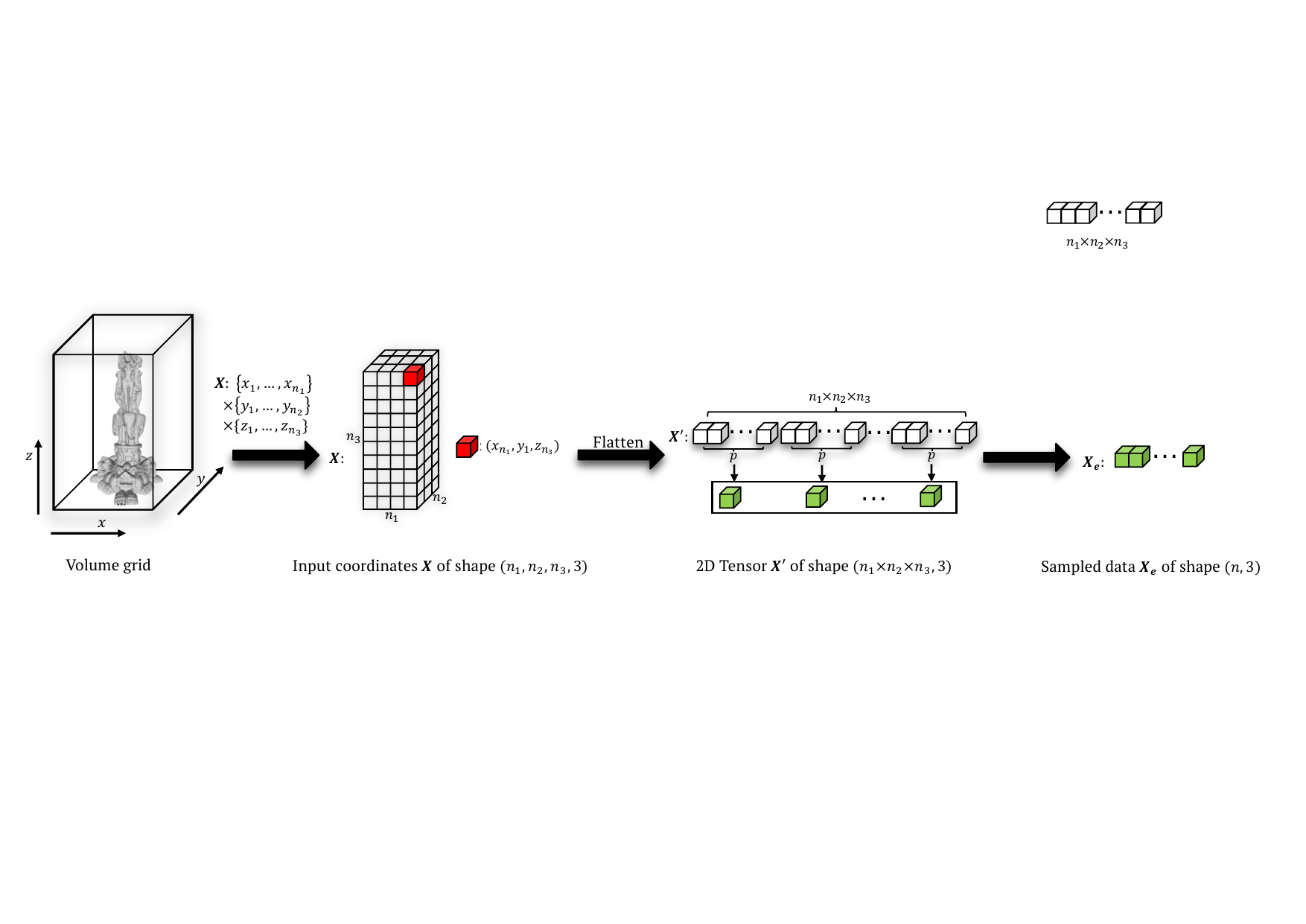}
    \caption{Visual illustration for the sampling strategy of IGA. The detailed description of the above diagram is in Sec. \ref{Visual illustration}.}
    \label{visual_of_sampling}
\end{figure}

\section{Discussion about IGA in Adam}
\label{discuss about Adam}

In the current field of deep learning, Adam \cite{kingma2014adam} as a variant of vanilla gradient descent has been proven to be one of the most effective methods empirically \cite{deng2009imagenet,he2016deep,vaswani2017attention} and is widely adopted by current INR models \cite{sitzmann2020implicit,mildenhall2021nerf,saragadam2023wire,zhu2024finer++,shi2024improved}. 
However, due to the introduction of adaptive learning rates and momentum, the characterization of training dynamics like Eq. \ref{dynamics approximate} induced by Adam becomes exceptionally challenging and remains an open problem. Intuitively, momentum is the linear combination of previous gradients, which implies that momentum has the similar direction with current adjusted gradients. This suggests that momentum is inherently compatible with our IGA. Therefore, we adjust gradients first by IGA and then update the momentum and learning rates by traditional Adam. 
Adaptive learning rates typically result in larger update steps for parameters \citep{kingma2014adam,wilson2017marginal,reddi2019convergence}. For better convergence, we utilize $\lambda_{end+1}$ to balance eigenvalues in \eqref{strategy_sgd}. Our extensive empirical studies in INRs (Table \ref{exp2_psnr_appendix}, \ref{exp3_appendix_table}, \ref{nerf_appendix} in our paper) demonstrate the compatibility of IGA with the Adam optimizer.

\section{IGA on more activations}
\label{iga on more}

In the main text, we have evaluated our IGA on ReLU, PE~\cite{tancik2020fourier} and SIREN~\cite{sitzmann2020implicit}.  In
this section, we apply our IGA to the most recent, meticulously crafted state-of-the-art activations with periodic variable, i.e., Gauss~\cite{ramasinghe2022beyond}, WIRE~\cite{saragadam2023wire} and FINER~\cite{zhu2024finer++}. For these activations, we follow their initial learning rates by their official guidelines. The same training strategy is adopted for these
experiments. We set $end=20$ for all these activations, which are consistent with PE and SIREN. These experiments are conducted on the first 8 images of Kodak 24. We report the average PSNR in the Table. \ref{more activate}.  Similar to our results in the main text, our IGA consistently improves the
approximation accuracy for the evaluated activations.
\begin{table}[h]
\footnotesize
\centering
\caption{Peak signal-to-noise ratio (PSNR $\uparrow$) of 2D color image approximation results on more state-of-the-art  activations with periodic variable. “Gauss” and “WIRE” denote that MLPs of Gaussian and Gabor Wavelet activate function by periodic variable, respectively. The detailed settings can be found in Sec. \ref{2D color image} and Sec. \ref{iga on more}. }
\begin{tabular}{l|cc|cc|cc}
\toprule
   Average& \multicolumn{2}{c|}{Gauss} & \multicolumn{2}{c|}{WIRE} & \multicolumn{2}{c}{FINER} \\
  Metric & Vanilla & +\textbf{IGA} & Vanilla & +\textbf{IGA} & Vanilla & +\textbf{IGA}  \\
 \midrule
 PSNR $\uparrow$ & 33.79 &  \textbf{34.45} &30.37  & \textbf{31.49} & 35.12  & \textbf{35.55} \\
\bottomrule
\end{tabular}
\label{more activate}
\end{table}

\section{Ablation study}
\label{ablation}
As discussed in Sec. \ref{sampling strategy}, we conduct ablation studies about the effect of sampling data and the group size $p$. All experiments are conducted on the first three images of Kodak 24 dataset, and the average PSNR values are reported for reference.
\subsection{The effect of Sampling data}
In this subsection, we examine sampling data selection. We randomly sample one data from each group in each iteration, referring to the resulting MLPs as randomly sampling (RI). Additionally, we implement a variant where random sampling occurs only at the beginning, denoting these MLPs as randomly sampling initially (RSI). We also introduce sampling based on the largest residual, termed SLR. Table \ref{ablation 1} presents the average PSNR results while maintaining the same settings as in Experiment \ref{2D color image}, demonstrating that our method significantly improves performance across various sampling strategies, with SLR yielding the best average performance. Consequently, we adopt SLR for sampling strategy.
\begin{table}[h]
\footnotesize
\centering
\caption{Ablation experiments on the effect of sampling points. SLR denotes the sampling based on the largest residual in each iteration. RSI denotes the random sampling initially. RI denotes the random sampling in each iteration.}
\begin{tabular}{l|ccccc}
\toprule
 Method & ReLU+IGA & PE+IGA & SIREN+IGA &  Average  \\
 \midrule
SLR  & 25.44 & 34.09 & 34.85  & 31.46 \\
RSI & 25.52 & 33.77  & 33.76 & 31.02 \\
RI  &25.19 &34.05 & 33.69  & 30.98 \\
\midrule
Vanilla  & 24.09 & 30.46 & 33.70 & 29.42 \\
\bottomrule
\end{tabular}
\label{ablation 1}
\end{table}

\subsection{Group size $p$ and Training time}
The size $p$ of the sampling group influences the level of data similarity within the group and hardware requirements. Intuitively, a smaller $p$ reduces errors in estimating training dynamics and allows for more accurate tailored impacts on spectral bias, leading to improved representation performance. However, this leads to a larger matrix $\tilde{\mK}$, which results in longer run times and greater memory usage.

To further explore the effect of various $p$, we follow the MLP architecture in Experiment \ref{2D color image}. All training settings are remained. We vary $p$ from $16^2$ to $128^2$. Larger sizes correspond to coarser estimates of the dynamics. We report the average training time of each iteration.

As shown in Table \ref{ablation 2}, our method under all sampling sizes has the significant improvement comparing to the baseline models. Particularly, when the size $p$ is $128^2$ that means that the size of $\tilde{\mK}_e$ is only $24^2$, it still improves the representation performance. That means that relatively low-precision estimates are sufficient to achieve representation improvements. Further, as $p$ decreases, representation accuracy improves; however, the extent of this improvement gradually diminishes with smaller group size $p$. Specifically, reducing the group size $p$ from $64^2$ to $32^2$ nearly doubles the average increment $\Delta_{\text{avg}}$, while the increment becomes negligible when reducing $p$ from $32^2$ to $16^2$, while the time required for $p=16^2$ increases fivefold. This indicates that, in practical applications, there is no need to consider more precise sampling intervals; a relatively coarse partition suffices to achieve substantial improvements while maintaining efficiency.

\begin{table}[h]
\footnotesize
\centering
\caption{Ablation experiments on sampling group size $p$. $\Delta_{\text{avg}}$ denotes the average increment across three model architectures. “Time (ms)” denotes the average time of each iteration.}
\begin{tabular}{l|ccccc}
\toprule
 $p$ & ReLU+IGA & PE+IGA & SIREN+IGA & $\Delta_{\text{avg}}$  &Time (ms)  \\
 \midrule
$16^2$  &25.49 & 34.25 & 34.87 & +2.12 &256.51 \\
$32^2$   &25.44 &34.09 & 34.85 & +2.04 &56.33 \\
$64^2$  & 25.19 & 32.54 & 34.10 & +1.19 &39.76 \\
$128^2$  & 25.06 & 32.08 & 33.75 & +0.88 &39.61 \\
\midrule 
Vanilla & 24.09 & 30.46 & 33.70 & –&30.00\\
\bottomrule
\end{tabular}
\label{ablation 2}
\end{table}

\section{Experimental details and per-scene results of 2D color image approximation}
\label{2d image appendix}
In this section, we first include additional experimental details that were omitted in the main text due to space constraints. Then we provide the detailed per-scene results for all four metrics and more visualization results. 

As illustrated in the main text, we follow the learning rate schedule of \citet{shi2024improved}, which maintains a fixed rate for the first $3$K iterations and then reduces it by $0.1$ for another $7$K iterations. For IGA, we set initial learning rates as $5e-3$ for ReLU activation and $1e-3$ for Sine activation. For all baseline models, we set initial learning rates as $1e-3$ due to poor performance observed with $5e-3$. Full-batch training is adopted.


We report four metrics to comprehensively evaluate the representation performance, i.e., peak signal-to-noise ratio (PSNR), structural similarity index measure (SSIM), multi-scale structural similarity index measure (MS-SSIM) and  learned perceptual image patch similarity (LPIPS), in Table \ref{exp2_psnr_appendix}-\ref{exp2_lpipsappendix}. Specifically, PSNR is used to measure average representation accuracy based on MSE, while SSIM, MS-SSIM 
and LPIPS are more sensitive to the representation of high-frequency information. As shown in Table \ref{exp2_psnr_appendix}-\ref{exp2_lpipsappendix}, all methods improve representation quality across the four metrics. However, the improvements from FR and BN are inconsistent. Although they enhance frequency-sensitive metrics to some extent, they degrade average representation accuracy and even lead to negative effects in certain cases. In contrast, our IGA method achieves consistent improvements across almost all scenarios and metrics, demonstrating the superiority of tailor impacts on spectral bias induced by our IGA. We further visualize more representation results in Fig. \ref{exp2_visualization_appendix}, which is consistent with our observation of Fig. \ref{exp2_kodak5_paper} that mprovements of IGA
are uniformly distributed across most frequency bands. 

\begin{figure}[h]
\begin{center}
\includegraphics[width=1\textwidth]{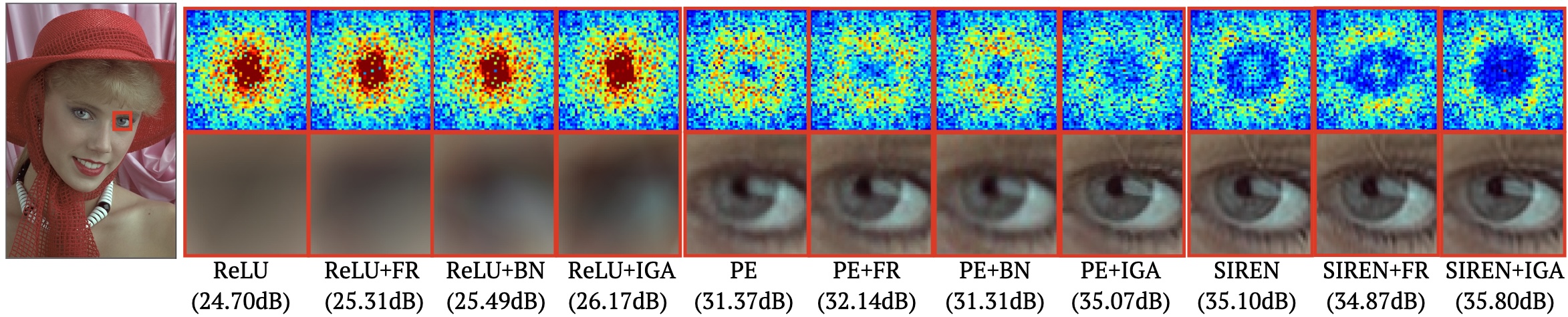}
\end{center}
\vspace{-1em}
\caption{
Visual examples of 2D color image approximation results by different training dynamics methods on Kodak 4. Enlarged views of the regions labeled by red boxes are provided. The residuals of these regions in the Fourier domain are visualized through heatmaps in the top line. The increase in error corresponds to the transition of colors in the heatmaps from blue to red. ReLU+IGA denotes that the MLPs with ReLU activations are optimized by our IGA method. Details can be found in Sec. \ref{2D color image} and Sec. \ref{2d image appendix}.
}
\label{exp2_visualization_appendix}
\end{figure}

\vspace{-2em}
\section{Experimental details and per-scene results of 3D shape representation}
\label{3d shape appendix}
In this section, we firstly introduce the detailed training strategy of the 3D shape representation experiment. Then we provide the intersection over union (IOU) and Chamfer distance  metric values of each 3D shape and give more visualization.

Following previous works \citep{saragadam2023wire, shi2024improved, cai2024batch}, we train all models for 200 epochs with a learning rate decay exponentially to $0.1$ of initial rates. We set the initial learning rate as $2e-3$ for ReLU and PE; For SIREN, we set the initial learning rate as $5e-4$. FR, BN and our IGA method are adopted the same learning rate with baseline models. We typically set $end=7$ for ReLU and PE, and $end=14$ for SIREN. The per-scene results are reported in Table \ref{exp3_appendix_table} and \ref{exp3_appendix_chamfer_table}. It can be observed that our IGA method achieves improvements in IOU metrics across all objects, and achieves the best performance in average metrics of all objects. We visualize the Lucy object in Fig. \ref{exp3_appendix}.
\begin{figure}[H]
\begin{center}
\includegraphics[width=1\textwidth]{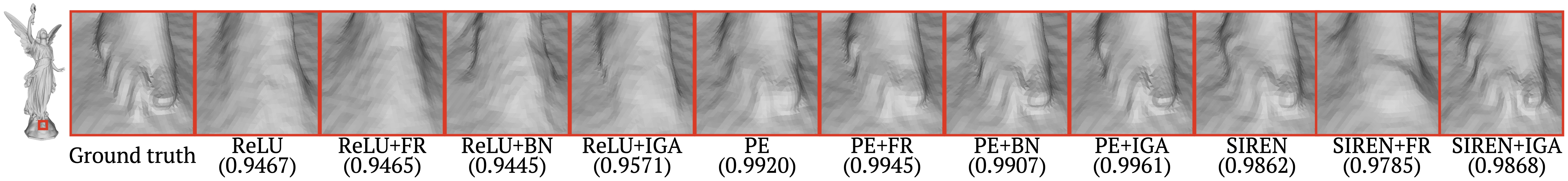}
\end{center}
\caption{Visualization of 3D shape representation results and IOU values on the Lucy. ReLU+IGA denotes that the MLPs with ReLU activations are
optimized by our IGA method. Details can be found in Sec. \ref{3D shape} and Sec. \ref{3d shape appendix}.}
\label{exp3_appendix}
\end{figure}

\section{Experimental details and per-scene results of Learning Neural Radiance Fields}

\label{5d novel view synthesis appendix}
In this section, we firstly introduce some training details. Then we provide metrics of each scenario and give more visualization. As previously discussed in Experiment \ref{nerf_paper}, we apply our method on the original NeRF model \citep{mildenhall2021nerf}. 

Specifically, given a ray $i$ from the camera into the scene, the MLP takes the 5D coordinates (3D spatial locations and 2D view directions) of $N$ points along the ray as inputs and outputs the corresponding color $\mathbf{c}$ and volume density $\sigma$. Then $\mathbf{c}$ and $\sigma$ are combined using numerical volume rendering to obtain the final pixel color of the ray $i$. We set each ray as one group. For each group, we sample the point with the maximum integral weight. The original NeRF adopts two models: “coarse” and “fine” models. We adjust the gradients of the “fine” model as it captures more high-frequency details and remain gradients of the “coarse” model. 

The “NeRF-PyTorch” codebase \citep{lin2020nerfpytorch} is used and we follow its default settings. Besides, we also compare to previous training dynamics methods, i.e., FR and BN. Their hyperparameters follow their publicly available codes. Generally, we set the $end=25$. For more complex scenes, such as the branches and leaves of the ficus, we set the $end=30$ to achieve better learning of high frequencies. We report per-scene results in Table \ref{nerf_appendix} and visualize several scenes in Fig. \ref{exp4_nerf_paper}.
\vspace{-4em}
\begin{figure}[H]
\begin{center}
\includegraphics[width=1\textwidth]{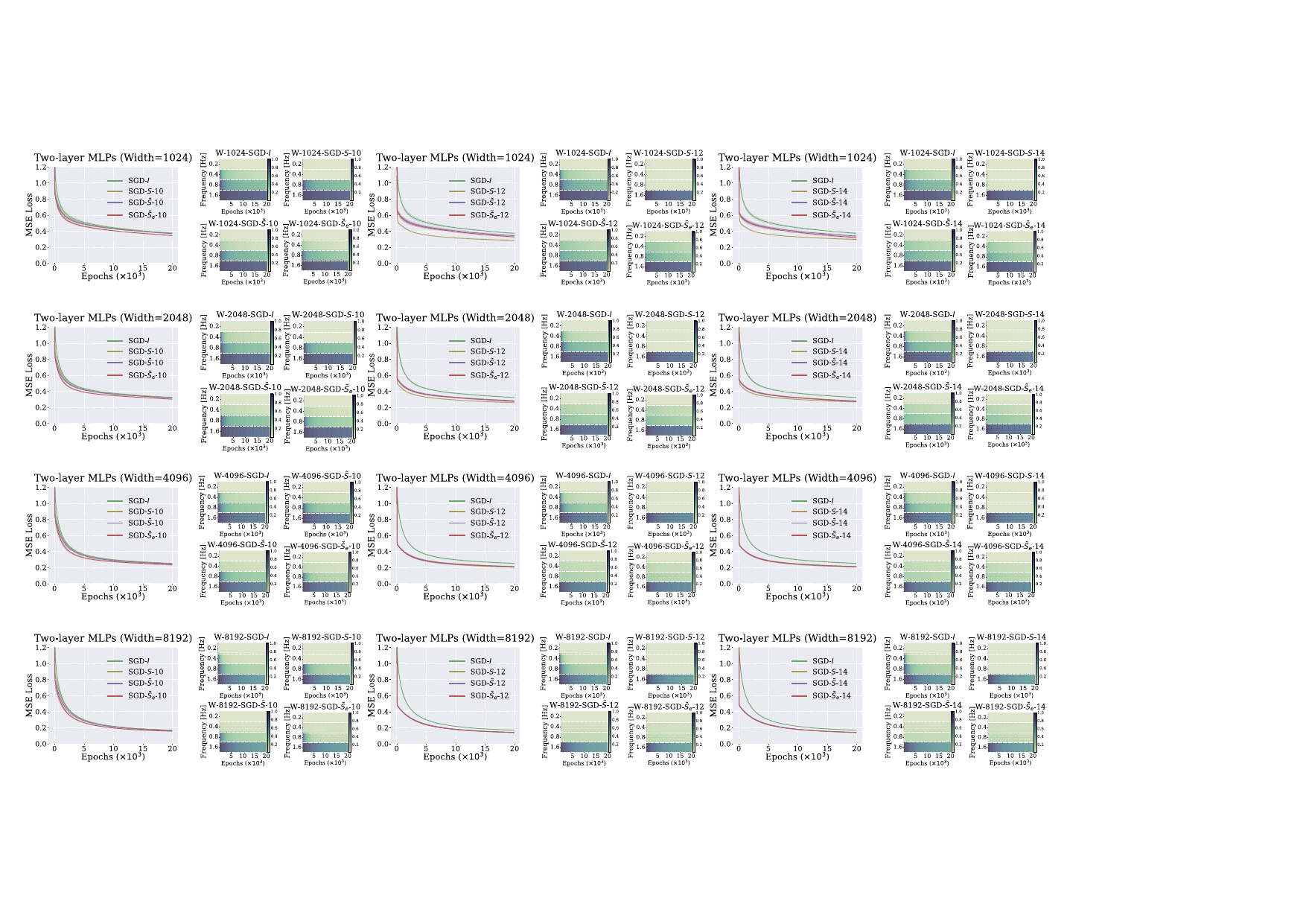}
\end{center}
\caption{Evolution of approximation error with training iterations on time domain and Fourier domain. The shaded area indicates training fluctuations, represented by twice the standard
deviation. Line plots visualize the MSE loss curves of MLPs. Heatmaps show the relative error $\Delta_k$ on four frequency bands. SGD-$\tilde{\mS}_e$-10 denotes that the MLP with ReLU optimized using gradients adjusted by $\tilde{\mS}_e$ with $end=10$ and group size $p=8$. Please zoom in for better review.}
\label{exp1.1_bias_appendix}
\end{figure}
\begin{figure}
\begin{center}
\includegraphics[width=0.9\textwidth]{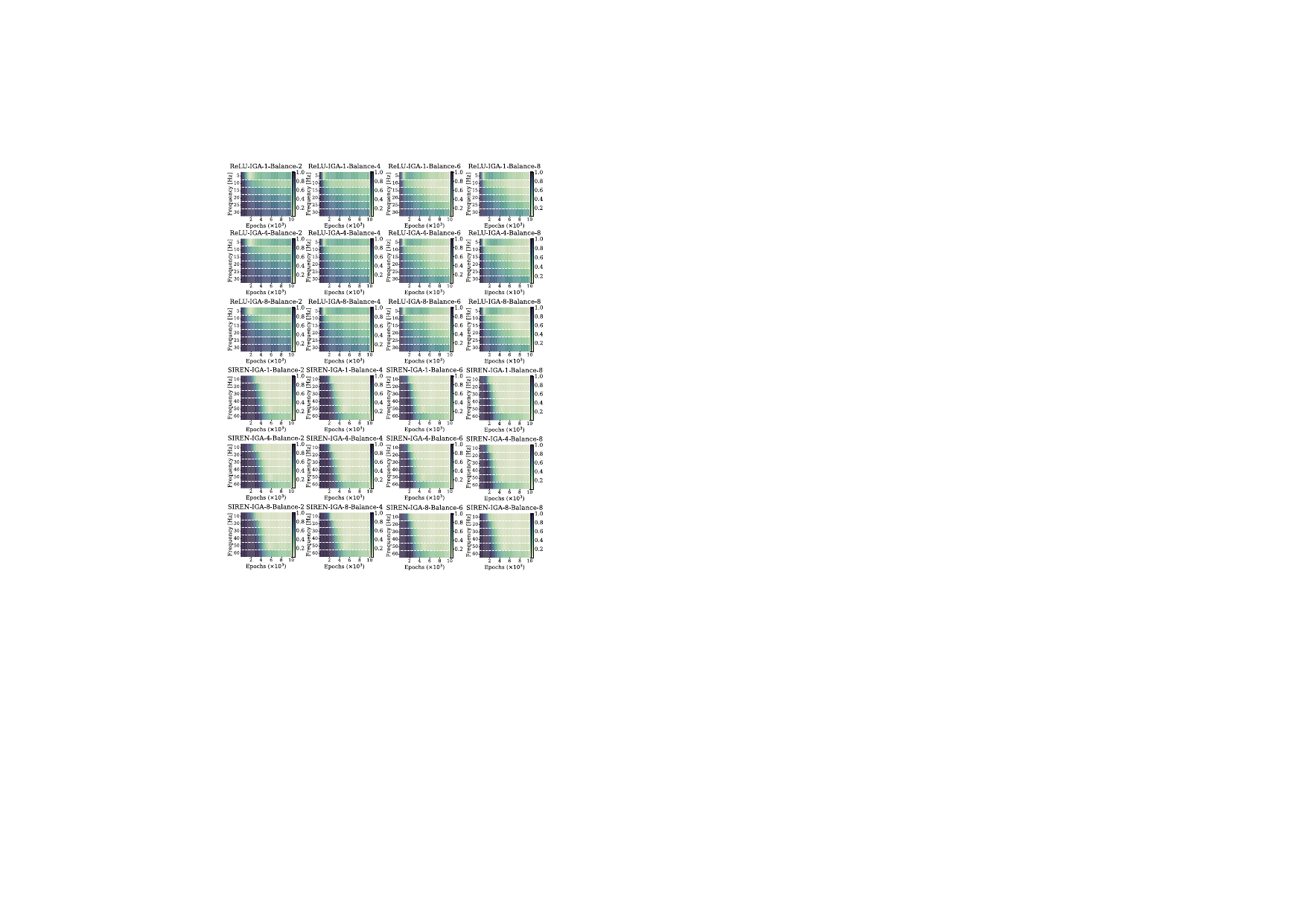}
\end{center}
\caption{Progressively amplified impacts on spectral bias of ReLU and SIREN by increasing the number of balanced eigenvalues of $\tilde{\mS}_e$ when the group size $p$ is $8$. ReLU denotes that the MLP with ReLU optimized using vanilla gradients; ReLU-IGA-8-Balance-2 denotes that the MLP with ReLU optimized using gradients adjusted by $\tilde{\mS}_e$ with $end=1$ (for Adam, there are two balanced eigenvalues) and group size $p$ is 8. Details of experimental setting can be found in Sec. \ref{empirical analysiss}. Please zoom in for better review.}
\label{exp1.2_bias_appendix}
\end{figure}
\begin{figure}[H]
\begin{center}
\includegraphics[width=1\textwidth]{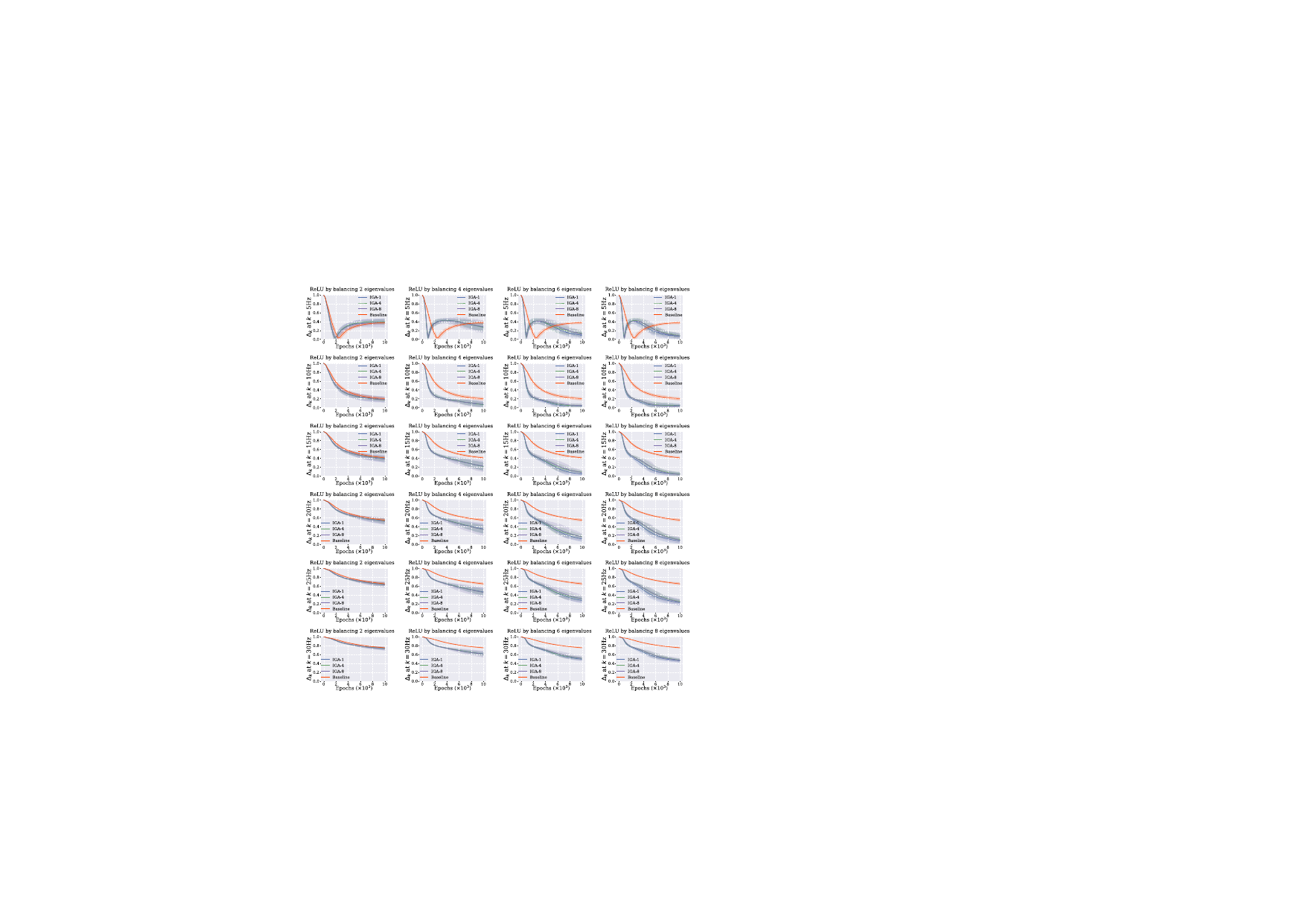}
\end{center}
\caption{Comparison of group sizes with varying balanced eigenvalues on Relative Error $\Delta_k$ at 5, 10, 15, 20, 25, 30Hz of ReLU.
IGA-1 denotes that the group size $p$ is 1, i.e., $\tilde{\mK}$-based gradient adjustment; IGA-4 denotes
that the group size $p$ is 4 for IGA; Baseline denotes MLPs with ReLU activations optimized by vanilla gradients. The shaded area indicates training fluctuations, represented by twice the standard deviation. Details of experimental setting can be found in Sec. \ref{empirical analysiss}. Please zoom in for better review.}
\label{exp1.2_bias_line_appendix}
\end{figure}
\begin{figure}[H]
\begin{center}
\includegraphics[width=1\textwidth]{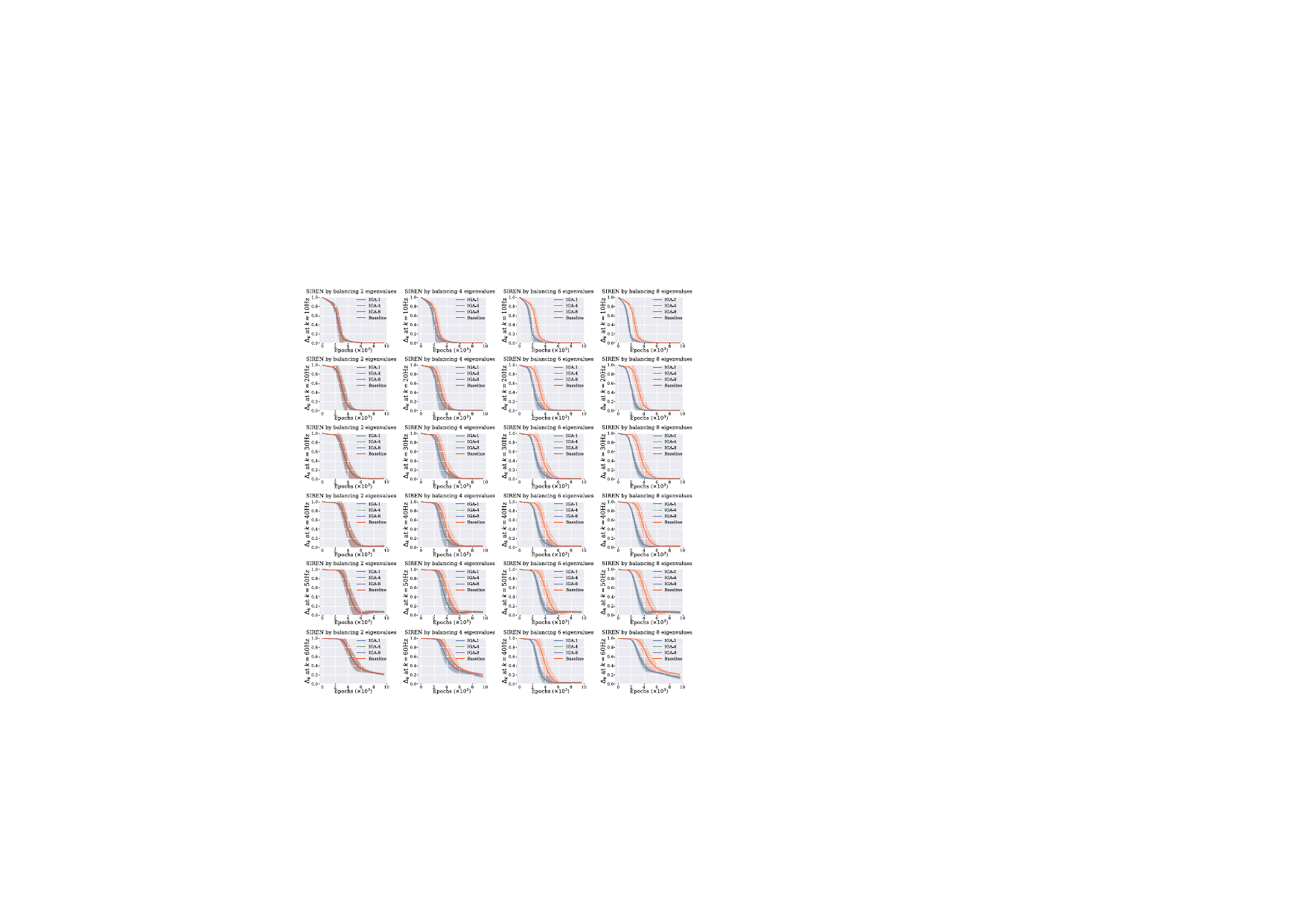}
\end{center}
\caption{Comparison of group sizes with varying balanced eigenvalues on Relative Error $\Delta_k$ at 10, 20, 30, 40, 50, 60Hz of SIREN.
IGA-1 denotes that the group size $p$ is 1, i.e., $\tilde{\mK}$-based gradient adjustment; IGA-4 denotes
that the group size $p$ is 4 for IGA, i.e., inductive gradient adjustment; Baseline denotes the SIREN optimized by vanilla gradients. The shaded area indicates training fluctuations, represented by twice the standard
deviation. Details of experimental setting can be found in Sec. \ref{empirical analysiss}. Please zoom in for better review.}
\label{exp1.2_bias_line_appendix_sin}
\end{figure}

\begin{table}[h]
\footnotesize
\centering
\caption{Peak signal-to-noise ratio (PSNR $\uparrow$) of 2D color image approximation results by different methods. The detailed settings can be found in Sec. \ref{2D color image}. }
\begin{tabular}{l|ccccccccc}
\toprule
Method & Kodak1 & Kodak2 & Kodak3 & Kodak4 & Kodak5 & Kodak6 & Kodak7 & Kodak8 & Average \\
\midrule
ReLU & 19.78 & 26.62 & 25.88 & 24.70 & 17.70 & 21.98 & 21.52 & 16.07 & 21.78 \\ 
ReLU+FR & 19.96 & 26.63  & 26.53 & 25.31 & 18.05 & 22.03 & 22.13 & 16.52 & 22.14 \\ 
ReLU+BN & 20.13 & 26.97 & 26.91 & 25.49 & 18.65 & 22.42 & 22.64 & 16.91 & 22.51 \\ 
\textbf{ReLU+IGA} & 20.42 & 27.71 & 28.20 & 26.17 & 18.81 & 22.69 & 22.96 & 16.99 & \textbf{23.00} \\ 
\midrule
PE & 26.07 & 32.51 & 32.80 & 31.37 & 24.60 & 27.28 & 31.74 & 22.79 & 28.64 \\ 
PE+FR & 26.95 & 32.23 & 33.92 & 32.14 & 26.63 & 28.49 & 32.71 & 24.85 & 29.74 \\ 
PE+BN & 26.50 & 31.42 & 32.21 & 31.31 & 25.45 & 27.82 & 31.54 & 23.95 & 28.78 \\ 
\textbf{PE+IGA} & 29.17 & 35.18 & 37.91 & 35.07 & 28.06 & 31.27 & 36.60 & 26.41 & \textbf{32.46} \\ 
\midrule
SIREN & 29.61 & 35.19 & 36.31 & 35.10 & 29.74 & 31.01 & 36.73 & 27.50 & 32.65 \\ 
SIREN+FR & 30.00 & 34.73 & 36.95 & 34.87 & 29.72 & 30.79 & 36.31 &27.52 & 32.61\\       
\textbf{SIREN+IGA} & 30.10 & 35.85 & 38.60 & 35.80 & 30.13 & 31.94 & 37.68 & 27.73 & \textbf{33.48} \\ 
\bottomrule
\end{tabular}
\label{exp2_psnr_appendix}
\end{table}
%
\begin{table}[h]
\footnotesize
\centering
\caption{Structural Similarity Index Measure (SSIM $\uparrow$) of 2D color image approximation results by different methods. The detailed settings can be found in Sec. \ref{2D color image}. }
\begin{tabular}{l|ccccccccc}
\toprule
Method & Kodak1 & Kodak2 & Kodak3 & Kodak4 & Kodak5 & Kodak6 & Kodak7 & Kodak8 & Average \\
\midrule
ReLU & 0.2978 & 0.6455 & 0.7235 & 0.6199 & 0.2797 & 0.4531 & 0.5681 & 0.2785 & 0.4833 \\
ReLU+FR & 0.3063 & 0.6450 & 0.7310 & 0.6302 & 0.2920 & 0.4546 & 0.5789 & 0.2972 & 0.4919 \\
ReLU+BN & 0.3150 & 0.6461 & 0.7230 & 0.6319 & 0.3198 & 0.4635 & 0.5896 & 0.3145 & 0.5004  \\
\textbf{ReLU+IGA} & 0.3298 & 0.6628 & 0.7564 & 0.6387 & 0.3195 & 0.4753 & 0.5946 & 0.3232 & \textbf{0.5126} \\
\midrule
PE & 0.7193 & 0.8126 & 0.8634 & 0.8077 & 0.7371 & 0.7553 & 0.8911 & 0.6790 & 0.7832 \\
PE+FR & 0.7653 & 0.8082 & 0.8819 & 0.8266 & 0.8074 & 0.7953 & 0.8947 & 0.7546 & 0.8167 \\
PE+BN & 0.7552 & 0.7985 & 0.8428 & 0.8198 & 0.7897 & 0.7880 & 0.8778 & 0.7525 & 0.8030 \\
\textbf{PE+IGA} & 0.8458 & 0.8806 & 0.9402 & 0.8994 & 0.8542 & 0.8778 & 0.9462 & 0.8132 & \textbf{0.8822} \\
\midrule
SIREN     & 0.8715  & 0.8948  & 0.9168  & 0.9060  & 0.9040  & 0.8689  & 0.9534  & 0.8643  & 0.8975  \\
SIREN+FR  & 0.8781  & 0.8869  & 0.9308  & 0.9011  & 0.8971  & 0.8797  & 0.9536  & 0.8654  & 0.8991  \\
\textbf{SIREN+IGA}  & 0.8830  & 0.9070  & 0.9492  & 0.9173  & 0.9110  & 0.8996  & 0.9606  & 0.8688  & \textbf{0.9121}  \\
\bottomrule
\end{tabular}
\label{exp2_ssim_appendix}
\end{table}
%
\begin{table}[h]
\footnotesize
\centering
\caption{Multi-Scale Structural Similarity Index Measure (MS-SSIM $\uparrow$) of 2D color image approximation results by different methods. The detailed settings can be found in Sec. \ref{2D color image}. }
\begin{tabular}{l|ccccccccc}
\toprule
Method & Kodak1 & Kodak2 & Kodak3 & Kodak4 & Kodak5 & Kodak6 & Kodak7 & Kodak8 & Average \\
\midrule
ReLU & 0.5270 & 0.7839 & 0.8414 & 0.7741 & 0.4697 & 0.6534 & 0.6646 & 0.5031 & 0.6521 \\
ReLU+FR & 0.5454 & 0.7973 & 0.8621 & 0.7964 & 0.5164 & 0.6679 & 0.7024 & 0.5523 & 0.6800  \\
ReLU+BN & 0.5837 & 0.8075 & 0.8581 & 0.8047 & 0.5943 & 0.6929 & 0.7331 & 0.5976 & 0.7090 \\
\textbf{ReLU+IGA} & 0.6205 & 0.8364 & 0.8947 & 0.8249 & 0.6208 & 0.7290 & 0.7591 & 0.6211 & \textbf{0.7383} \\
\midrule
PE & 0.9356 & 0.9518 & 0.9668 & 0.9489 & 0.9382 & 0.9331 & 0.9744 & 0.9242 & 0.9466 \\
PE+FR & 0.9493 & 0.9494 & 0.9713 & 0.9559 & 0.9566 & 0.9470 & 0.9755 & 0.9464 & 0.9564 \\
PE+BN & 0.9491 & 0.9507 & 0.9615 & 0.9560 & 0.9561 & 0.9481 & 0.9726 & 0.9491 & 0.9554  \\
\textbf{PE+IGA} & 0.9705 & 0.9720 & 0.9880 & 0.9788 & 0.9713 & 0.9692 & 0.9895 & 0.9622 & \textbf{0.9752} \\
\midrule
SIREN     & 0.9808 & 0.9817 & 0.9851 & 0.9829 & 0.9845 & 0.9714 & 0.9918 & 0.9761 & 0.9818  \\
SIREN+FR  & 0.9797 & 0.9788 & 0.9877 & 0.9819 & 0.9810 & 0.9781 & 0.9926 & 0.9766 & 0.9820  \\
\textbf{SIREN+IGA}  & 0.9815 & 0.9851 & 0.9895 & 0.9873 & 0.9860 & 0.9803 & 0.9921 & 0.9759 & \textbf{0.9847}  \\
\bottomrule
\end{tabular}
\label{exp2_ms_ssim_appendix}
\end{table}
%
\begin{table}[h]
\footnotesize
\centering
\caption{Learned Perceptual Image Patch Similarity (LPIPS $\downarrow$) of 2D color image approximation results by different methods. The detailed settings can be found in Sec. \ref{2D color image}. }
\begin{tabular}{l|ccccccccc}
\toprule
Method & Kodak1 & Kodak2 & Kodak3 & Kodak4 & Kodak5 & Kodak6 & Kodak7 & Kodak8 & Average \\
\midrule
ReLU & 0.7571 & 0.5392 & 0.4000 & 0.5547 & 0.7788 & 0.5854 & 0.6172 & 0.8089 & 0.6302 \\
ReLU+FR   & 0.7692 & 0.5513 & 0.3871 & 0.5362 & 0.7829 & 0.6110 & 0.6131 & 0.8008 & 0.6315 \\
ReLU+BN   & 0.7654 & 0.5471 & 0.3865 & 0.5458 & 0.7188 & 0.5959 & 0.5859 & 0.7999 & 0.6182 \\
\textbf{ReLU+IGA} & 0.6857 & 0.4770 & 0.2927 & 0.4730 & 0.6695 & 0.5609 & 0.5278 & 0.7527 & \textbf{0.5549} \\
\midrule 
PE   & 0.2803 & 0.2092 & 0.1062 & 0.2128 & 0.2643 & 0.2517 & 0.1218 & 0.3322 & 0.2223 \\
PE+FR & 0.2547 & 0.2278 & 0.0905 & 0.1897 & 0.1566 & 0.2179 & 0.1154 & 0.2426 & 0.1869 \\
PE+BN & 0.3161 & 0.2605 & 0.1474 & 0.2065 & 0.2393 & 0.2592 & 0.1554 & 0.2926 & 0.2346  \\
\textbf{PE+IGA} & 0.1363 & 0.0938 & 0.0256 & 0.0956 & 0.1056 & 0.1068 & 0.0310 & 0.1559 &\textbf{0.0938} \\
\midrule
SIREN & 0.1052 & 0.0878 & 0.0661 & 0.0994 & 0.0654 & 0.0967 & 0.0271 & 0.0980 & 0.0807 \\
SIREN+FR  & 0.0928 & 0.0888 & 0.0465 & 0.1121 & 0.0661 & 0.1258 & 0.0317 & 0.0865 & 0.0813 \\
\textbf{SIREN+IGA} & 0.0895 & 0.0746 & 0.0246 & 0.0858 & 0.0628 & 0.0853 & 0.0199 & 0.0921 & \textbf{0.0668} \\
\bottomrule
\end{tabular}
\label{exp2_lpipsappendix}
\end{table}



\begin{table}[t!]
\footnotesize
\centering
\caption{Intersection over Union (IOU) of 3D shape representation by different methods. The detailed settings can be found in Sec. \ref{3D shape} and Appendix \ref{3d shape appendix}.}
\begin{tabular}{l|cccccc}
\toprule
 Method & Thai & Armadillo & Dragon & Bun & Lucy &Average \\
\midrule
ReLU   &0.9379 & 0.9756 & 0.9708 & 0.9924 & 0.9467 & 0.9647 \\  
ReLU+FR & 0.9428 & 0.9756 & 0.9707 & 0.9914 & 0.9465 & 0.9654 \\ 
ReLU+BN &  0.9416 & 0.9789 & 0.9727 & 0.9934 & 0.9445 & 0.9662\\
\textbf{ReLU+IGA} & 0.9563 & 0.9805 & 0.9793 & 0.9931 & 0.9571 & \textbf{0.9733} \\
\midrule
PE     &0.9897 & 0.9956 & 0.9953 & 0.9985 & 0.9920 & 0.9942  \\
PE+FR        & 0.9929 & 0.9979 & 0.9964 & 0.9990 & 0.9945 & 0.9961 \\
PE+BN & 0.9916 & 0.9968 & 0.9954 & 0.9985 & 0.9907 & 0.9946 \\
\textbf{PE+IGA}       & 0.9943 & 0.9979 & 0.9975 & 0.9990 & 0.9961 & \textbf{0.9970}  \\
\midrule
SIREN    & 0.9786 & 0.9908 & 0.9937 & 0.9953 & 0.9862 & 0.9889 \\
SIREN+FR     & 0.9731 & 0.9935 & 0.9905 & 0.9972 & 0.9785 & 0.9866 \\
\textbf{SIREN+IGA}    & 0.9802 & 0.9920 & 0.9938 & 0.9958 & 0.9868 & \textbf{0.9897}  \\
\bottomrule
\end{tabular}
\label{exp3_appendix_table}
\end{table}

\begin{table}[t!]
\footnotesize
\centering
\caption{Chamfer Distance of 3D shape representation by different methods. The detailed settings can be found in Sec. \ref{3D shape} and Appendix \ref{3d shape appendix}.}
\begin{tabular}{l|cccccc}
\toprule
 Method & Thai & Armadillo & Dragon & Bun & Lucy &Average \\
\midrule
ReLU           & 5.6726e-06 & 5.5805e-06 & 6.5588e-06 & 8.0611e-06 & 3.8048e-06 & 5.9356e-06 \\
ReLU+FR        & 5.2954e-06 & 5.6664e-06 & 6.5062e-06 & 8.2686e-06 & 3.5963e-06 & 5.8666e-06 \\
ReLU+BN        & 5.3489e-06 & 5.4240e-06 & 5.8546e-06 & 7.8738e-06 & 3.5496e-06 & 5.6102e-06 \\
\textbf{ReLU+IGA}       & 4.7463e-06 & 5.4097e-06 & 5.7295e-06 & 8.2330e-06 & 3.3144e-06 & \textbf{5.4866e-06} \\
\midrule
PE             & 4.2759e-06 & 5.2162e-06 & 5.3788e-06 & 7.7409e-06 & 3.0036e-06 & 5.1231e-06 \\
PE+FR          & 4.2272e-06 & 5.2246e-06 & 5.3514e-06 & 7.7974e-06 & 2.9542e-06 & 5.1109e-06 \\
PE+BN          & 4.2463e-06 & 5.2030e-06 & 5.4122e-06 & 7.7698e-06 & 2.9493e-06 & 5.1161e-06 \\
\textbf{PE+IGA}         & 4.2551e-06 & 5.1524e-06 & 5.4319e-06 & 7.7695e-06 & 2.9288e-06 & \textbf{5.1076e-06} \\
\midrule
SIREN          & 4.2920e-06 & 7.8573e-06 & 5.3923e-06 & 7.9121e-06 & 2.9870e-06 & 5.6881e-06 \\
SIREN+FR       & 4.3588e-06 & 5.1830e-06 & 5.4719e-06 & 7.7805e-06 & 3.0217e-06 & 5.1632e-06 \\
\textbf{SIREN+IGA}      & 4.3520e-06 & 5.2203e-06 & 5.3757e-06 & 7.8457e-06 & 2.9934e-06 & \textbf{5.1574e-06} \\
\bottomrule
\end{tabular}
\label{exp3_appendix_chamfer_table}
\end{table}


\begin{table}[h]
\footnotesize
\centering
\caption{Per-scene results of learning 5D neural radiance fields by different methods. Details can be found in Sec. \ref{nerf_paper} and Appendix \ref{5d novel view synthesis appendix}.}
\begin{tabular}{clccccccccc}
\toprule
 &{Methods} &
{Ship} & {Materials} & {Chair} & {Ficus} & {Hotdog} 
& {Drums} & {Mic} & {Lego} & {Average} \\
\midrule
\multirow{3}*{\rotatebox{90}{PSNR $\uparrow$}} 
& {NeRF} & 29.30 & 29.55 & 34.52 & 29.14 & 36.78 & 25.66 & 33.37 & 31.53 & 31.23 \\
& {NeRF+FR} & 29.50 & 29.70 & 34.54 & 29.35 & 37.08 & 25.74 & 33.35 & 31.55 & 31.35 \\
& {\textbf{NeRF+IGA}} &29.49 & 29.87 & 34.69 & 29.38 & 37.22 & 25.80 & 33.48 & 31.83 & \textbf{31.47}  \\
\midrule
\multirow{3}*{\rotatebox{90}{SSIM $\uparrow$}} 
&{NeRF} & 0.8693 & 0.9577 & 0.9795 & 0.9647 & 0.9793 & 0.9293 & 0.9783 & 0.9626 & 0.9526 \\
&{NeRF+FR} & 0.8729 & 0.9600 & 0.9797 & 0.9665 & 0.9792 & 0.9304 & 0.9786 & 0.9626 & 0.9537 \\
 &{\textbf{NeRF+IGA}} & 0.8716 & 0.9619 & 0.9807 & 0.9667 & 0.9801 & 0.9315 & 0.9791 & 0.9648 & \textbf{0.9546} \\
\midrule
\multirow{3}*{\rotatebox{90}{LPIPS $\downarrow$}} 
& {NeRF} &0.077 & 0.021 & 0.011 & 0.021 & 0.012 & 0.052 & 0.022 & 0.019 & 0.029 \\
& {NeRF+FR} &0.071 & 0.020 & 0.011 & 0.019 & 0.013 & 0.050 & 0.021 & 0.019 & 0.028 \\
& {\textbf{NeRF+IGA}} &0.074 & 0.019 & 0.010 & 0.019 & 0.011 & 0.049 & 0.020 & 0.017 & \textbf{0.027} \\
\bottomrule
\end{tabular}
\label{nerf_appendix}
\end{table}

\begin{figure}[h]
\begin{center}
\includegraphics[width=0.8\textwidth]{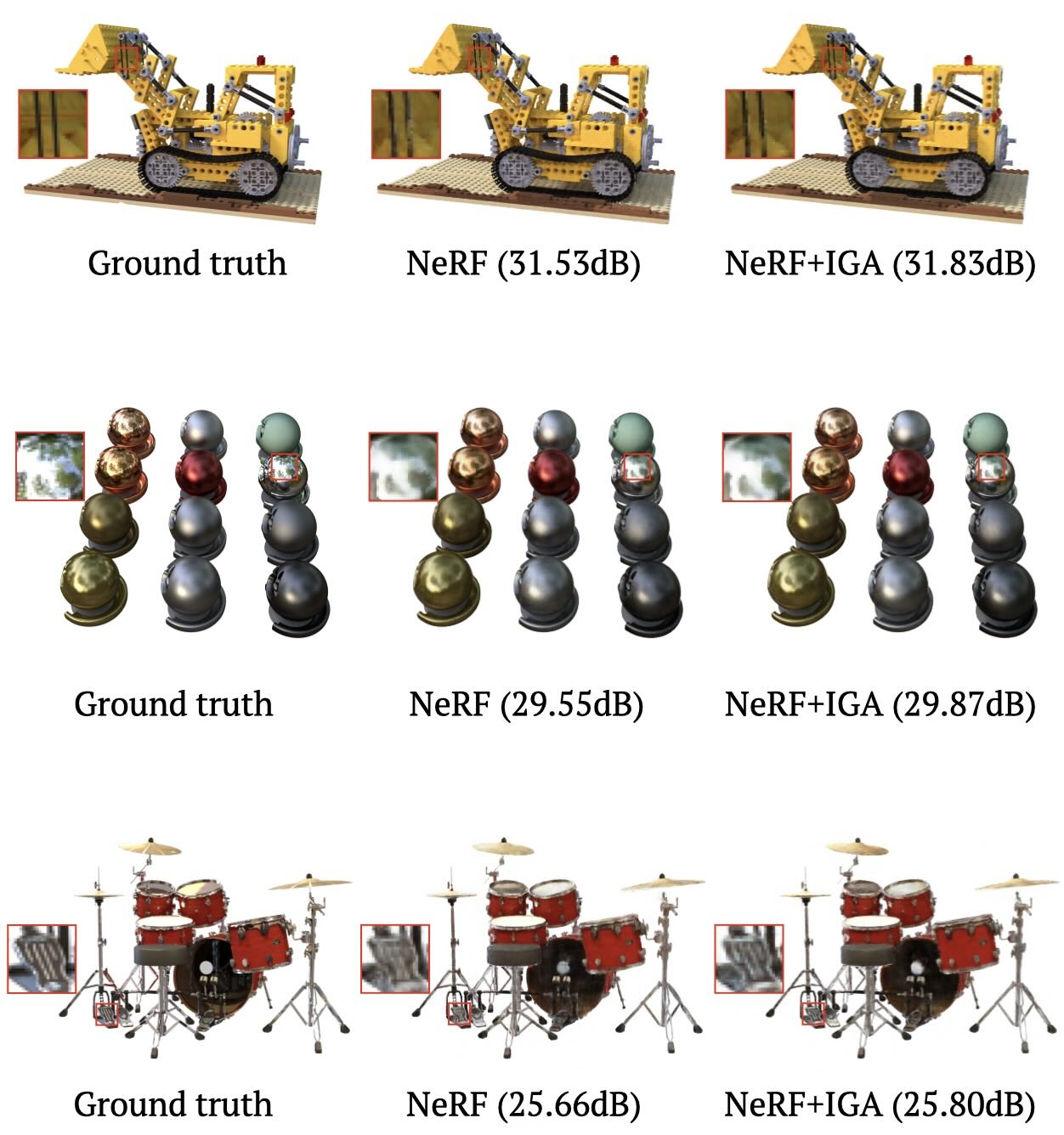}
\end{center}
\caption{Visual examples of novel view synthesis results of NeRF and NeRF+IGA. Details can be found in Sec. \ref{nerf_paper} and Appendix \ref{5d novel view synthesis appendix}. }
\label{exp4_nerf_paper}
\end{figure}


\end{document}